\documentclass{article}
\usepackage{iclr2027_conference,times}
\usepackage[T1]{fontenc}
\usepackage[utf8]{inputenc}
\usepackage{hyperref}
\usepackage{amsmath,amssymb,amsthm,mathtools}
\usepackage{booktabs}
\usepackage{graphicx}
\usepackage{wrapfig}
\usepackage{tikz}
\usetikzlibrary{arrows.meta,positioning,fit}
\usepackage{microtype}
\usepackage{url}
\usepackage{xcolor}
\usepackage{algorithm}
\usepackage{algpseudocode}
\usepackage[capitalise,noabbrev]{cleveref}
\usepackage{pifont}    
\usepackage{colortbl}  
\usepackage{framed}
\definecolor{okgreen}{RGB}{0,135,70}
\definecolor{badred}{RGB}{200,45,45}
\definecolor{oursbg}{RGB}{232,241,252}
\newcommand{\cmark}{\textcolor{okgreen}{\ding{51}}}
\newcommand{\xmark}{\textcolor{badred}{\ding{55}}}
\newcommand{\yes}{\ding{51}}
\newcommand{\no}{\textcolor{black!35}{\ding{55}}}
\newcommand{\good}[1]{\textcolor{okgreen}{\textbf{#1}}}
\newcommand{\pmse}[2]{#1{\scriptsize$\,\pm\,$#2}}

\crefname{assumption}{Assumption}{Assumptions}
\Crefname{assumption}{Assumption}{Assumptions}

\newtheorem{theorem}{Theorem}
\newtheorem{lemma}{Lemma}
\newtheorem{proposition}{Proposition}
\newtheorem{corollary}{Corollary}
\theoremstyle{definition}
\newtheorem{assumption}{Assumption}
\newtheorem{definition}{Definition}
\newtheorem{remark}{Remark}

\newcommand{\E}{\mathbb{E}}
\newcommand{\Prob}{\mathbb{P}}
\newcommand{\cL}{\mathcal{L}}
\newcommand{\kl}{\mathrm{kl}}
\newcommand{\method}{\textsc{Reuse}}

\title{Which Self-Improvements Should We Trust?\\Reliable Self-Improvement When Agents Reuse Their Benchmarks}

\author{
Xiaojing Sun \\
Purdue University
\And
Yuhan Zeng \\
Purdue University
\And
Zihua She \\
Purdue University
\And
Xiao Wang \\
Purdue University
}

\iclrfinalcopy

\begin{document}
\maketitle
\lhead{}

\begin{abstract}
As recursive self-improvement (RSI) rapidly advances, reliable evaluation becomes critical for guiding adaptive search. 
RSI typically relies on finite evaluation resources, such as fixed benchmarks, to determine which modifications are retained and what is proposed next. 
However, when these finite resources are repeatedly reused, new candidates are proposed based on feedback from the same evaluation set, so the search trajectory can adaptively overfit and empirical improvement may not reflect genuine population improvement on the underlying task distribution. 
Some existing methods account for multiple comparisons but assume that candidates are chosen independently of the evaluation set, and therefore do not control this adaptive dependence.
To address this, we propose \method{} (\textbf{R}isk-controlled \textbf{E}valuation \textbf{U}nder \textbf{S}equential \textbf{E}volution), a certified evaluation and promotion framework that allows a fixed evaluation set to support repeated adaptive decisions while providing statistical guarantees. 
Specifically, for a user-specified error level $\alpha$, with probability at least $1-\alpha$, every promoted modification is a genuine population improvement on the underlying task distribution.
\method{} achieves this by strictly limiting the evaluation feedback returned to the search process and accounting for possible promotion histories within the error budget.
We develop a detailed statistical theory for RSI evaluation in this setting, including simultaneous error control, valid lower bounds on cumulative improvement, and a characterization of the fundamental limits of adaptive evaluation reuse.
In live self-improvement experiments, \method{} commits substantially fewer false promotions than evaluation frameworks from current RSI systems and error-controlled baselines, reducing the proportion of promotions that are false from up to 20.7\% to 0\%, while achieving final true population performance comparable to the best baselines.
\end{abstract}



\section{Introduction}
\label{sec:intro}

Recent progress in recursive self-improvement (RSI) has significantly advanced the capabilities of self-improving AI systems.
For instance, agents now rewrite their own tools \citep{dgm2025,sica2025}, evolve production-ready algorithms \citep{alphaevolve2025}, and automate complex workflows like model alignment and post-training \citep{anthropic2026aar,posttrainbench2026}.
As agents become more capable at proposing and executing changes \citep{zelikman2024stop,hu2025adas,lu2024aiscientist,huxley2025}, evaluation becomes increasingly critical \citep{manheim2018goodhart,skalse2022defining,gao2023scaling,guo2026seal}. 
The evaluation feedback dictates which candidate modifications are retained, expanded, or used to generate future proposals. 
Ideally, this feedback would favor only changes that improve performance on the underlying task distribution \citep{hastie2009elements,recht2019imagenet}.
In practice, current RSI systems employ diverse search strategies but typically obtain this feedback by repeatedly querying a finite set of fixed benchmarks \citep{wang2026rethinking}.

This creates a fundamental statistical challenge. 
When RSI systems rely on a finite set of benchmarks to guide adaptive search, the search trajectory becomes increasingly tuned to the evaluation signal \citep{pace2026}, and empirical improvement may reflect adaptation to these benchmark tasks rather than genuine improvement \citep{wang2026rethinking, anthropic2026aar}. 
False promotions in this setting arise from three main sources.
\begin{itemize}
    \item \textbf{Within-round selection.} Selecting the best of several candidates within a round favors candidates whose advantage on the evaluation set is overstated by chance \citep{cawley2010overfitting,efron2011tweedie,siren2026}.
    \item \textbf{Repeated testing.} Testing across many rounds increases the probability that at least one test makes an error \citep{dunn1961multiple,holm1979simple}.
    \item \textbf{Adaptive dependence.} Later candidates are adapted to earlier evaluation outcomes \citep{dwork2015reusable,hardt2014preventing,blum2015ladder}.
\end{itemize}
The first two would arise even if every round used fresh evaluation data, and standard multiple comparison corrections handle them by accounting for the number of tests \citep{dunn1961multiple,holm1979simple,bretz2009graphical,tian2021online}. 
The third source arises from benchmark reuse. 
When later candidates are generated based on earlier outcomes on the evaluation set and then tested on that same set, the candidates themselves become functions of the evaluation set. 
This adaptive dependence violates the assumption of standard statistical tests that candidates are chosen independently of the evaluation set \citep{kriegeskorte2009circular,berk2013valid,dwork2015generalization}. 
Consequently, a candidate may be promoted because it overfits the specific evaluation set, rather than because it genuinely improves on the underlying task distribution.
Recent work attempts to bring statistical error control into self-improvement loops and iterative evaluation \citep{pace2026, sgm2025, sea2026, guo2026seal, bertran2026fits}. However, none of these methods account for all three sources of false promotion simultaneously. In particular, their statistical guarantees rely on each candidate being independent of the data on which it is evaluated, and restricting feedback alone does not account for the information revealed by the promotion decisions themselves. Because they do not address the adaptive dependence created by reusing the same evaluation set, they cannot guarantee that every promoted modification is a genuine population improvement.

\begin{figure*}[h]
    \centering
    \includegraphics[width=0.92\textwidth]{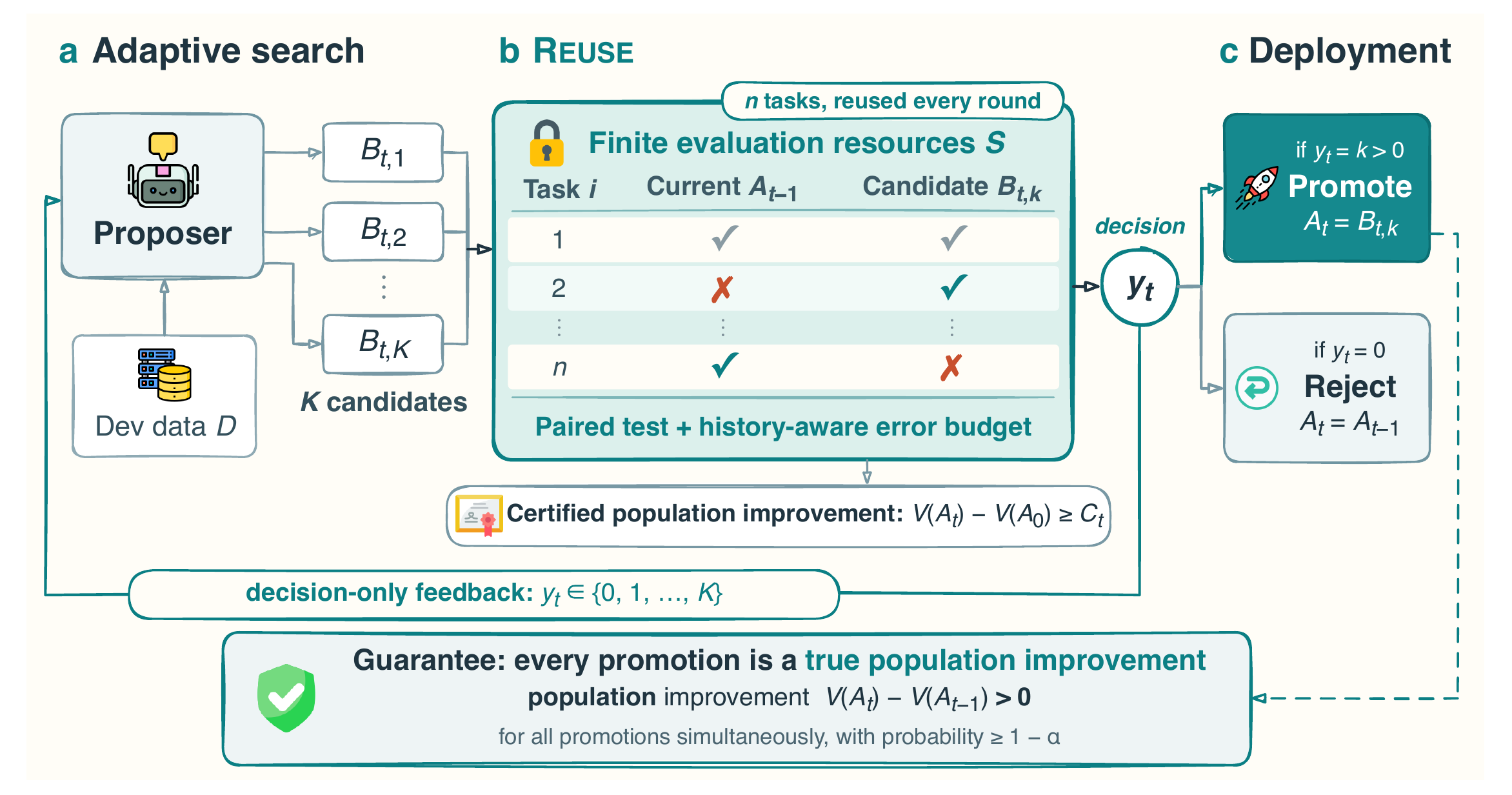}
    \caption{
Overview of \method{}.
(a) At round $t$, the proposer generates $K$ candidate modifications using development data and earlier promotion decisions.
(b) \method{} compares each candidate with the current system $A_{t-1}$ on a fixed evaluation set $S$ of $n$ tasks that is reused in every round.
(c) Only the promotion decision $y_t$ is returned to the proposer.
With probability at least $1-\alpha$, every promotion is a true improvement in population performance, $V(A_t)>V(A_{t-1})$, simultaneously over the entire run, and the cumulative improvement $V(A_t)-V(A_0)$ is certified to be at least $C_t$.
}
    \label{fig:interface}
\end{figure*}

In this paper, we propose \method{} (\textbf{R}isk-controlled \textbf{E}valuation \textbf{U}nder \textbf{S}equential \textbf{E}volution), a certified evaluation and promotion framework for adaptive self-improvement that controls all three sources jointly. 
\method{} allows a fixed evaluation set to be reused throughout an adaptive search while preserving valid evidence of population improvement. 
In each round, candidate modifications are evaluated against the current system, but the search process receives only the promotion decision rather than evaluation scores or task-level outcomes (\cref{fig:interface}). 
Later candidates may still adapt to earlier decisions, but they can depend on the evaluation set only through this decision history. 
For each possible history, the candidates are determined without reference to the evaluation set, so the corresponding comparison is a valid test. 
\method{} therefore turns the dependence on the evaluation set into a multiple comparison problem over all possible decision histories, and controls it together with the first two sources within a  global error budget. 
As a result, for a user-specified level $\alpha$, with probability at least $1-\alpha$, every promoted modification over the entire self-improvement process is a genuine population improvement on the underlying task distribution.

Our main contributions are:
\begin{itemize}
    \item We propose \method{}, a certified evaluation and promotion framework for adaptive self-improvement that jointly addresses the statistical challenges of within-round selection, repeated testing, and adaptive dependence under finite evaluation resources.
    For a user-specified error level $\alpha$, with probability at least $1-\alpha$, every promoted modification is a genuine population improvement on the underlying task distribution.

    \item We develop statistical theory for evaluation under benchmark reuse, including simultaneous error control over the entire process, valid lower bounds on cumulative improvement, the minimum detectable improvement, and the fundamental limits of adaptive evaluation reuse.

\item We evaluate \method{} in live self-improvement experiments against evaluation frameworks from current RSI systems and error-controlled baselines. \method{} commits substantially fewer false promotions than competing methods, reducing the proportion of promotions that are false from up to 20.7\% to 0\%, while achieving final true population performance within 0.04 percentage points of the best baselines, demonstrating that its rigorous statistical guarantees do not compromise overall progress.
\end{itemize}


\section{Methodology and Theoretical Guarantees}
\label{sec:method}

In this section, we formalize the adaptive self-improvement setting, present \method{}, and establish its statistical guarantees. 
More comprehensive theoretical results and complete proofs are given in \cref{app:proofs}.

\subsection{Setting and objective}
\label{sec:setting}

A system $A\in\mathcal A$ denotes the complete agent or AI system evaluated at a given round.
A modification may change its prompt, code, tools, training procedure, or model parameters.
Let $Q$ denote the underlying population distribution on which system performance is measured.
For an evaluation draw $W\sim Q$, the score of system $A$ is $s(A,W)\in[0,1]$.
Any randomness used in the evaluation is included in $W$.
The population performance of $A$ is
\[
V(A)=\E_{W\sim Q}[s(A,W)],
\qquad
\Delta(B,A)=V(B)-V(A),
\]
where $\Delta(B,A)$ is the population improvement of candidate $B$ over system $A$.

The self-improvement process starts from $A_0$. At round $t$, the current system is $A_{t-1}$ and the proposer
generates $K$ candidates
\(
B_{t,1},\ldots,B_{t,K}.
\)
The promotion decision is $y_t\in\{0,1,\ldots,K\}$. If $y_t=k>0$, candidate $B_{t,k}$ becomes the new system.
If $y_t=0$, all candidates are rejected and the incumbent is retained.
We write
\(
p_t=\sum_{s=1}^t\mathbf{1}\{y_s\neq0\}
\)
for the number of promotions through round $t$.

Promotion decisions are based on a fixed evaluation set
\[
S=(W_1,\ldots,W_n),
\qquad
W_i\overset{\mathrm{i.i.d.}}{\sim}Q,
\]
drawn before the process begins and repeatedly reused across rounds.
For any  pair of systems $(B,A)$, define the paired differences
\[
Z_i(B,A)=s(B,W_i)-s(A,W_i),
\]
which are i.i.d.\ with mean $\Delta(B,A)$.
The proposer receives information from $S$ only through previous promotion decisions.
It may otherwise adapt using its own randomness $R$, development data $D$, and those decisions.
We formalize this setting with two assumptions.

\begin{assumption}[Independent evaluation set]
\label{ass:indep}
The evaluation set $S$ is independent of the tuple $(R,D,A_0)$ consisting of the proposer's randomness,
the development data, and the initial system.
\end{assumption}

{
\begin{assumption}[Decision-only feedback]
\label{ass:iso}
Write $\xi=(R,D,A_0)$.
For every round $t\ge1$ and candidate index $k\le K$, there is a map $f_{t,k}$ such that
$B_{t,k}=f_{t,k}(\xi,y_{1:t-1})$.
Moreover, the map $(\xi,w)\mapsto s(A_0,w)$ and, for every $t\ge1$, $k\le K$, and fixed history
$h\in\{0,1,\ldots,K\}^{t-1}$, the map $(\xi,w)\mapsto s(f_{t,k}(\xi,h),w)$ are jointly measurable.
\end{assumption}

\Cref{ass:indep} requires that the evaluation set $S$ be drawn independently of everything that shapes the search from the start, which is exactly what a benchmark is meant to be, a fresh sample from the target task distribution that plays no role in development.
\Cref{ass:iso} requires that $S$ influence the search only through the promotion decisions, while the proposer is otherwise unrestricted and can freely use any language model or development pipeline.

Together, the two assumptions ensure that candidate generation depends on $S$ only through the decision history.
Once $\xi$ is fixed, every possible decision history determines the incumbent and the next candidates without reference to $S$, and by \cref{ass:indep} $S$ remains an i.i.d.\ sample from $Q$.
Every comparison determined in this way is therefore a valid test, and each comparison actually made during the search is one of them.
This turns the adaptive dependence created by benchmark reuse into a multiple comparison problem over all possible decision histories, which the allocation in \cref{sec:reuse} explicitly controls.
The measurability condition is technical and holds for essentially any practical proposer, for example whenever systems are prompts, programs, or configurations.
}

For a practical improvement threshold $\gamma\ge0$, a promotion is false if
\(
\Delta(A_t,A_{t-1})\le\gamma.
\)
Our goal is to control all promotions simultaneously:
\begin{equation}
\label{eq:global-objective}
\Prob\!\left(
\Delta(A_t,A_{t-1})>\gamma
\text{ for every }t\text{ with }y_t\neq0
\right)
\ge1-\alpha.
\end{equation}

\subsection{\method{}}
\label{sec:reuse}

To achieve the guarantee in \eqref{eq:global-objective}, \method{} tests each candidate against the current
system using paired evaluation and allocates a single error budget across all candidate comparisons that may arise
during the adaptive search. At each round, the proposer generates candidates using development data and previous
promotion decisions, and receives only the evaluation framework's promotion decision. \Cref{fig:interface} illustrates this process.
We first consider pass/fail benchmarks with $\gamma=0$, where $s(A,W)\in\{0,1\}$.
The same framework extends naturally to bounded scores $s(A,W)\in[0,1]$, where the detailed construction is given in \cref{app:paired-bound}.

For a fixed candidate $B$ and incumbent $A$, let $n_+$ be the number of evaluation tasks on which $B$ succeeds
and $A$ fails, let $n_-$ be the number on which $A$ succeeds and $B$ fails, and let
\(
M=n_++n_-
\)
be the number of tasks on which the two systems have different outcomes.
We test whether the candidate genuinely improves on the incumbent:
\begin{equation}
\label{eq:promotion-test}
H_0:\Delta(B,A)\le0
\qquad\text{versus}\qquad
H_1:\Delta(B,A)>0.
\end{equation}
Under $H_0$, conditional on a task where the two systems disagree, the probability that $B$ wins is at most
$1/2$. Conditional on $M$, this gives the exact one-sided paired sign-test $p$-value
\begin{equation}
\label{eq:signtest}
\pi(n_+,M)
=
\Prob\!\left(
\mathrm{Bin}(M,\tfrac12)\ge n_+
\right),
\qquad
\pi(0,0)=1.
\end{equation}
A small value of $\pi(n_+,M)$ provides evidence in favor of $\Delta(B,A)>0$, and
its finite-sample validity is established in \cref{lem:sign}.

A single hypothesis test controls the error for one fixed candidate--incumbent comparison. In the
self-improvement loop, however, many comparisons are made on the same evaluation set, and later candidates
depend on earlier promotion decisions. To control false promotions over the entire run, \method{} allocates
the global error budget $\alpha\in(0,1)$ across all comparisons that can arise under all possible decision histories.

Suppose $p$ promotions have occurred before round $t$. The $p$ promotion rounds can be chosen in
$\binom{t-1}{p}$ ways, and each promotion can select one of $K$ candidates. Hence there are
$\tbinom{t-1}{p}\,K^{p}$
possible decision histories with $p$ previous promotions. Under \cref{ass:iso}, once $(R,D,A_0)$ are fixed,
each such history determines the incumbent and the $K$ candidates at round $t$.
Let $(w_t)_{t\ge1}$ and $(v_p)_{p\ge0}$ be nonnegative weights satisfying
\(
\sum_{t\ge1}w_t\le1
\)
and
\(
\sum_{p\ge0}v_p\le1.
\)
These weights distribute the overall error budget across the round index and the number of previous promotions.
At round $t$ after $p$ promotions, each candidate comparison is assigned the error budget
\begin{equation}
\label{eq:alloc}
\delta_{t,p}
=
\frac{\alpha w_t v_p}
{\binom{t-1}{p}K^{p+1}}.
\end{equation}
This explicit allocation addresses all three sources of false promotion mentioned in \cref{sec:intro} simultaneously. The weights over rounds $t$ control repeated testing, the additional factor $K$ in the denominator accounts for within-round selection, and the $\binom{t-1}{p}K^p$ term handles adaptive dependence by covering all possible histories.
Throughout the paper we use
\(
w_t=1/[t(t+1)]
\)
and
\(
v_p=1/[(p+1)(p+2)].
\)

To also quantify cumulative improvement, we divide $\delta_{t,p}$ into two parts.
For a fixed $\rho\in(0,1)$, a fraction $\rho\delta_{t,p}$ is used for the sign test, and the remaining
$(1-\rho)\delta_{t,p}$ is used to construct a lower confidence bound $\ell$ on the candidate's population improvemetn
(\cref{lem:signlcb}). When a candidate is promoted, \method{} adds $\max\{\ell,0\}$ to a running
certificate $C_t$, initialized at $C_0=0$. The certificate is kept for auditing and is not returned to the proposer during the search.

For candidate $k$ at round $t$, define its empirical improvement over the incumbent as
\[
\bar Z_k
=
\frac1n\sum_{i=1}^n Z_i(B_{t,k},A_{t-1}).
\]
If multiple candidates pass the promotion test, \method{} promotes the one with the largest $\bar Z_k$.
The complete procedure is given in \cref{alg:gate}.

\begin{algorithm}[t]
\caption{\method{} at round $t$}
\label{alg:gate}
\small
\begin{algorithmic}[1]
\State $p\gets p_{t-1}$ and $\delta\gets\delta_{t,p}$
\For{$k=1,\ldots,K$}
    \State Compute $(n_{+,k},n_{-,k})$ and $\bar Z_k$ for $B_{t,k}$ against $A_{t-1}$
    \State $\mathrm{pass}_k
    \gets
    \left[
    \pi(n_{+,k},n_{+,k}+n_{-,k})
    \le
    \rho\delta
    \right]$
\EndFor
\If{at least one candidate passes}
    \State $k^\star\gets\arg\max_{k:\mathrm{pass}_k}\bar Z_k$
    \State Compute a lower confidence bound $\ell$ for
    $\Delta(B_{t,k^\star},A_{t-1})$ at error level $(1-\rho)\delta$
    \State $y_t\gets k^\star$
    \State $C_t\gets C_{t-1}+\max\{\ell,0\}$
\Else
    \State $y_t\gets0$, $C_t\gets C_{t-1}$
\EndIf
\end{algorithmic}
\end{algorithm}

Conditional on $(R,D,A_0)$, each fixed-history comparison has probability at most $\delta_{t,p}$ of either
an erroneous promotion decision or an invalid lower confidence bound.
A union bound over all rounds, all possible histories, and all candidate comparisons gives
\[
\sum_{t\ge1}
\sum_{p=0}^{t-1}
\binom{t-1}{p}K^p
\cdot K\delta_{t,p}
=
\alpha
\sum_{t\ge1}w_t
\sum_{p=0}^{t-1}v_p
\le\alpha.
\]
This gives the following main result.

{
\begin{theorem}[Simultaneous validity and cumulative progress]
\label{thm:main}
Consider pass/fail evaluation, with $s(A,W)\in\{0,1\}$ and $\gamma=0$.
Suppose \cref{ass:indep,ass:iso} hold, and \method{} is run as in
\cref{alg:gate} with the allocation \eqref{eq:alloc}.
Then, for every adaptive proposer satisfying these assumptions,
with probability at least $1-\alpha$,
$\Delta(A_t,A_{t-1})>0$ for every $t$ with $y_t\neq0$.
On the same event, the running certificate satisfies
$V(A_t)-V(A_0)\ge C_t$ for every $t\ge1$.
\end{theorem}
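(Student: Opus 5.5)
The proof conditions on $\xi=(R,D,A_0)$ and reduces the adaptive process to a countable family of fixed, non-adaptive comparisons indexed by decision histories.

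\textbf{Step 1: one fixed comparison per history.} For each round $t$, history $h\in\{0,1,\ldots,K\}^{t-1}$ and index $k\le K$, \cref{ass:iso} lets us define deterministic functions of $(\xi,h)$ alone: the incumbent $A^h$, obtained by replaying $h$ from $A_0$ through the maps $f_{s,k}$, and the candidate $B^{h}_{k}=f_{t,k}(\xi,h)$. Let $p(h)$ be the number of nonzero entries of $h$.

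\textbf{Step 2: the bad event for one comparison.} Define $E_{t,h,k}$ as the event that either
\begin{itemize}
\item $\Delta(B^h_k,A^h)\le0$ and $\pi(n_+,M)\le\rho\,\delta_{t,p(h)}$, where $n_+,M$ are computed from $S$ for the pair $(B^h_k,A^h)$; or
\item the lower confidence bound $\ell^h_k$ at level $(1-\rho)\delta_{t,p(h)}$ exceeds $\Delta(B^h_k,A^h)$.
\end{itemize}
By \cref{ass:indep}, conditional on $\xi$ the set $S$ is still i.i.d.\ from $Q$, and the pair $(B^h_k,A^h)$ is fixed. So \cref{lem:sign} and \cref{lem:signlcb} apply directly and give $\Prob(E_{t,h,k}\mid\xi)\le\rho\delta_{t,p(h)}+(1-\rho)\delta_{t,p(h)}=\delta_{t,p(h)}$.

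The measurability clause in \cref{ass:iso} is what makes these conditional probabilities well defined. Concretely, $(\xi,w)\mapsto s(B^h_k,w)$ is jointly measurable, so Fubini lets us integrate over $S$ with $\xi$ held fixed.

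\textbf{Step 3: union bound.} For fixed $t$ and $p$ there are $\binom{t-1}{p}K^p$ histories with $p(h)=p$, each with $K$ candidates. The union bound is therefore exactly the displayed sum preceding the theorem:
\[
\sum_{t\ge1}\sum_{p=0}^{t-1}\binom{t-1}{p}K^{p}\cdot K\,\delta_{t,p}
=\alpha\sum_{t\ge1}w_t\sum_{p=0}^{t-1}v_p\le\alpha .
\]
Integrating over $\xi$ then shows that $G=\bigcap_{t,h,k}E_{t,h,k}^c$ satisfies $\Prob(G)\ge1-\alpha$.

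\textbf{Step 4: the realized run lies inside $G$.} The realized history $y_{1:t-1}$ is one of the histories $h$ enumerated above. Along it, $A^{y_{1:t-1}}=A_{t-1}$ and $B^{y_{1:t-1}}_k=B_{t,k}$, and the algorithm's $p_{t-1}$ equals $p(y_{1:t-1})$, so the thresholds used in the run are exactly $\rho\delta_{t,p(h)}$ and $(1-\rho)\delta_{t,p(h)}$.

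On $G$, suppose $y_t=k^\star\neq0$. Then $B_{t,k^\star}$ passed the sign test at level $\rho\delta_{t,p(h)}$, so $\Delta(A_t,A_{t-1})>0$, and $\ell\le\Delta(A_t,A_{t-1})$. Since $\Delta(A_t,A_{t-1})>0$, we also get $\max\{\ell,0\}\le\Delta(A_t,A_{t-1})$.

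Telescoping $V(A_t)-V(A_0)=\sum_{s\le t}\Delta(A_s,A_{s-1})$ gives $V(A_t)-V(A_0)\ge C_t$. Rounds with $y_s=0$ contribute $0$ to both sides, and promotion rounds contribute at least $\max\{\ell,0\}$ to the left.

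\textbf{Main obstacle.} The subtle step is Step 4: the events are defined on the data-independent family indexed by $h$, while the realized selection $y_{1:t-1}$ depends on $S$. The argument needs no conditioning on the realized path. $G$ is an intersection over \emph{all} $h$, so whatever path $S$ induces is automatically covered.

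Two further points need care. The argmax selection among passing candidates must not matter, which holds because every candidate is covered by its own event $E_{t,h,k}$. And the lower bound must be computed for the fixed pair $(B^h_k,A^h)$ rather than for a data-selected one, so that \cref{lem:signlcb} applies.
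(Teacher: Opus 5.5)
Your proposal is correct and follows essentially the same route as the paper's proof. You condition on $\xi=(R,D,A_0)$ and attach a fixed comparison to every hypothetical history and candidate, with decision-failure and certificate-failure probabilities at most $\rho\delta_{t,p}$ and $(1-\rho)\delta_{t,p}$ by \cref{lem:sign,lem:signlcb}. You then union-bound over the $\binom{t-1}{p}K^{p}\cdot K$ comparisons, integrate over $\xi$ via the measurability clause, and note that the realized path is one of the enumerated histories, so the stepwise bounds $\max\{\ell,0\}$ telescope to $C_t$.
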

}

The cumulative bound follows by summing the valid stepwise contributions $\max\{\ell,0\}$ over promotions,
since the corresponding population improvement telescope to $V(A_t)-V(A_0)$.

Development data may also be used to select a single candidate before $S$ is queried.
If only that candidate is tested on $S$, \cref{thm:main} applies with $K=1$ in
\eqref{eq:alloc}, since there is no within-round candidate selection on the evaluation set.

We now establish how large a true improvement must be for reliable detection.
We let $\cL_{t,p} = \ln({2}/{\delta_{t,p}})$ and let
\(
d=\Prob\!\left(Z_i(B,A)\neq0\right)
\)
denote the probability that the candidate and incumbent have different outcomes on a random task.
The following result gives the corresponding minimum detectable population improvement.

\begin{proposition}[Minimum detectable improvement]
\label{prop:power}
Consider pass/fail evaluation with $\gamma=0$ and $\rho=1/2$ at a fixed decision history node.
Suppose the candidate and incumbent pairs are fixed independently of $S$ and each has disagreement
probability $d\in(0,1]$. Define
\[
\Delta_{\min}(n,t,p,K,d)
=
\sqrt{\frac{2d\,\cL_{t,p}}{n}}.
\]
Under the regularity conditions in \cref{app:power}, for any fixed $\eta\in(0,1)$,
a candidate with $\Delta(B,A)\ge(1+\eta)\Delta_{\min}$ {passes the promotion test} with probability tending to one,
whereas if all $K$ candidates satisfy
$\Delta(B_{t,k},A_{t-1})\le(1-\eta)\Delta_{\min}$, the probability of any promotion tends to zero.
\end{proposition}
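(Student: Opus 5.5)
We reduce the sign test to a large-deviation statement about $n_+$ given $M$. Fix the history node, so $(B,A)$ is independent of $S$. Then $M\sim\mathrm{Bin}(n,d)$, and conditional on $M$ we have $n_+\sim\mathrm{Bin}(M,q)$ with $q=\tfrac12+\Delta/(2d)$, because $\Delta=d(2q-1)$.

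The regularity conditions we intend to impose in \cref{app:power} are:
\begin{itemize}
\item $\cL_{t,p}\to\infty$;
\item $\cL_{t,p}/(nd)\to0$;
\item $d$ stays bounded away from zero, or at least $nd\to\infty$.
\end{itemize}
These put us in the moderate-deviation regime, where $\Delta_{\min}\to0$ and $q-\tfrac12\to0$.

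\textbf{Step 1: the rejection threshold.} With $\rho=1/2$ the test passes iff $\pi(n_+,M)\le\delta_{t,p}/2$, and $\ln(2/\delta_{t,p})=\cL_{t,p}$. We use the Hoeffding/Chernoff upper bound $\Prob(\mathrm{Bin}(M,\tfrac12)\ge M/2+x)\le e^{-2x^2/M}$. We pair it with a matching moderate-deviation lower bound, $\ge e^{-(1+o(1))2x^2/M}$ for $x=o(M)$, obtained either from Stirling estimates or from a reverse Chernoff bound such as Slud's inequality.

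Write $n_+-M/2=x$. These bounds give the following. If $x\ge(1+\epsilon)\sqrt{M\cL/2}$, the test passes. If $x\le(1-\epsilon)\sqrt{M\cL/2}$, it fails for large $n$. Here $\epsilon>0$ is arbitrary.

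\textbf{Step 2: concentration of the statistics.} By Chebyshev or Chernoff, $M=nd(1+o_p(1))$ since $nd\to\infty$. Conditional on $M$,
\[
n_+-M/2 = M(q-\tfrac12)+O_p(\sqrt M).
\]
Because $\cL\to\infty$, the fluctuation $O_p(\sqrt M)$ is $o_p(\sqrt{M\cL})$. Hence
\[
x \approx \frac{nd\,\Delta}{2d}=\frac{n\Delta}{2},
\qquad
\sqrt{M\cL/2}\approx\sqrt{nd\cL/2}.
\]
So the test passes iff, asymptotically, $n\Delta/2\gtrsim\sqrt{nd\cL/2}$, that is, iff $\Delta\gtrsim\sqrt{2d\cL/n}=\Delta_{\min}$.

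\textbf{Step 3: power.} If $\Delta\ge(1+\eta)\Delta_{\min}$, pick $\epsilon$ small relative to $\eta$. With probability tending to one, $x\ge(1+\eta/2)\sqrt{M\cL/2}$, so the candidate passes.

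If $\Delta$ is much larger, for example of constant order, then $q$ no longer tends to $1/2$. In that case passing is even easier, by monotonicity of the sign test in $n_+$; the upper Chernoff bound alone suffices.

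\textbf{Step 4: null side.} If every candidate has $\Delta\le(1-\eta)\Delta_{\min}$, then for each $k$ the probability of passing tends to zero. The case $\Delta\le0$ is covered as well, because passing is monotone in $q$.

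Since $K$ is fixed, or since $K$ enters $\cL$ only logarithmically, a union bound over the $K$ candidates shows that the probability of any promotion tends to zero. If $K$ grows, we would instead need a tail bound that decays faster than $1/K$. We get this by using the explicit Bernstein bound for $n_+$: the exceedance probability is at most $\exp(-c\eta^2\cL)$, and $\cL\ge\ln K$, so we would require $c\eta^2>1$. If that fails, we add a growth condition on $K$ to the regularity conditions.

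\textbf{Main obstacle.} The step needing most care is Step 1's lower bound on the binomial tail, i.e.\ showing that the exact threshold of $\pi$ is asymptotically $\sqrt{M\cL/2}$ uniformly over random $M$. We plan to derive it from Stirling's formula with $x=o(M^{2/3})$ or a similar moderate-deviation condition, which our $\cL/(nd)\to0$ assumption should ensure after strengthening to $\cL^3/(nd)\to0$ if necessary.
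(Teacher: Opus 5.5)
Your proof is correct and shares the paper's skeleton, but it uses a different key estimate on the null side, and that buys weaker regularity conditions. The shared parts are these. The power side uses the same Hoeffding sufficient condition: your $x\ge\sqrt{ML/2}$, with $L=\cL_{t,p}$, is exactly $n_+-n_-\ge\sqrt{2LM}$ from \cref{prop:powerfull}(b). The concentration of $n_+-n_-$ and $M$ is the same second-moment step, and the $K$ candidates are handled by the same union bound.

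The paper's null side lower-bounds $\pi(n_+,M)$ by a single point mass, $\Prob(\mathrm{Bin}(M,\tfrac12)=n_+)\ge e^{-M\kl(n_+/M\,\|\,1/2)}/\sqrt{2M}$ from \eqref{eq:binom-half-mass-lb}, and then expands the divergence via \eqref{eq:klseries}. This is short and reuses an inequality already proved. However, the $\sqrt{2M}$ prefactor costs $\tfrac12\ln(2M)$ in the exponent, which is exactly why \cref{app:power} assumes $\ln n=o(L)$.

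You instead bound the whole tail with only a $(1+o(1))$ loss in the exponent. Summing point masses over a window of width of order $\sqrt{M/L}$ leaves a prefactor of order $L^{-1/2}$ rather than $M^{-1/2}$. Your argument therefore needs only $L\to\infty$ and $L=o(nd)$, which the paper's conditions imply, and it removes the assumption $\ln n=o(L)$.

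Two adjustments to your plan:
\begin{itemize}
\item The estimate you need is exactly \cref{lb:lem-tail} applied with $M$ in place of $n$. Its correction term is $O(L^2/M)=o(L)$, and because the bound is explicit it is automatically uniform over $M=nd(1+o(1))$, which settles the uniformity issue you flag. Do not rely on Slud's inequality here: its standard statements require $p\le1/4$ or $np\le k\le n(1-p)$, and that range collapses to $k=M/2$ at $p=\tfrac12$.
\item Drop the fallback strengthening to $\cL^3/(nd)\to0$. That condition is roughly what relative accuracy of the Gaussian approximation to the probability itself requires, not accuracy in the exponent. Adopting it would also exclude part of the paper's regime, for instance $L\asymp\sqrt n$.
\end{itemize}
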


The quantity $\Delta_{\min}$ represents the smallest population improvement that we can reliably detect.
The formula shows that a larger evaluation set $n$ in the denominator decreases this detection threshold.
At the same time the term $\cL_{t,p}$ in the numerator depends on the per-round error level $\delta_{t,p}$.
To maintain our global guarantee we must allocate smaller error budgets $\delta_{t,p}$ as the round count $t$ and promotion count $p$ increase.
This decrease in $\delta_{t,p}$ causes $\cL_{t,p}$ to grow which makes larger true improvements necessary for reliable detection at later stages.
This behavior directly reflects our statistical intuition.
By demanding stronger evidence as the search proceeds we prevent random noise from accumulating into false promotions over a long trajectory.

This dependence on the adaptive history is not solely an artifact of our particular allocation.
In \cref{app:lb} we show that for a class of threshold rules a comparable
first order growth in the promotion threshold is necessary to prevent an adaptive proposer from
eventually inducing a false promotion.



\section{Experiments}
\label{sec:experiments}
In this section, we evaluate \method{} against existing methods in live self-improvement loops, where an LLM-based proposer generates modifications and a finite evaluation set is repeatedly reused. 
Our experiments address two key questions: First, how often do current RSI promotion schemes accept false improvements? Second, can \method{} prevent these false promotions without sacrificing true population improvement?

\subsection{Experimental setup}
\label{sec:exp-setup}
We consider the agent tasked with machine learning classification. 
Specifically, we use the Covertype dataset \citep{blackard1999covertype}\footnote{\url{https://archive.ics.uci.edu/dataset/31/covertype}} for the binary classification of cover type 2 against the remaining six classes.
A fixed random split yields a training set $D_{\mathrm{tr}}$ of 50,000 examples for fitting candidates, a development set $D_{\mathrm{dev}}$ of 20,000 examples for proposer feedback, and an evaluation pool $\mathcal E$ of 60,000 examples, from which each run draws an evaluation set $S$ of size $n\in\{2{,}000,10{,}000\}$ that is reused throughout the search.
Together, $D_{\mathrm{tr}}$ and $D_{\mathrm{dev}}$ form  $D$ of \cref{sec:setting}.
Since the underlying task distribution is unknown, we measure population  improvement on a disjoint held-out set $H$ of 50,000 examples.
Each run lasts $T=200$ rounds. 
At every round, Qwen2.5-7B-Instruct \citep{qwen25} proposes up to $K=8$ modifications of a scikit-learn pipeline \citep{pedregosa2011scikit} based on the prompt templates detailed in \cref{app:experiment-prompts}.
Each modification changes at most two components among the model family, feature subset, pre-processing, and hyperparameters.
All methods start from the same gradient-boosting configuration, share the proposer, candidate space, and
training data, and see the same evaluation set within a seed.
We run {30 seeds} at each evaluation-set size, and \cref{app:experiments} gives the full protocol.

We compare \method{} with several evaluation frameworks that reflect how current RSI systems use their benchmarks. Empirical best-of-$K$ edits the incumbent and promotes the candidate with the largest positive gain on $S$ \citep{anthropic2026aar}. 
SICA-style follows the archive-based search of SICA \citep{sica2025}, DGM-style follows the parent selection of DGM \citep{dgm2025}, and Elite and Niche-elite use population-based search inspired by AlphaEvolve \citep{alphaevolve2025}. 
We also evaluate baselines that add statistical control to the same loop. 
PACE-style applies a one-sided McNemar test \citep{dietterich1998approximate} to each candidate at level $\alpha=0.05$, a fixed-sample analogue of the per-decision error control in PACE \citep{pace2026}.
Bonferroni-style applies the same test at level $0.05/(TK)$, splitting the error budget evenly over all potential comparisons of a run, as in the union-bound allocation of SGM \citep{sgm2025}.

To measure true improvement, we estimate the population improvement of each promoted modification by its paired improvement on the held-out set $H$, which no method ever accesses, and call a promotion false when this improvement is at most zero.
For each method we report the final population improvement of the deployed system over the starting system $A_0$, the mean number of promotions per run, and the total number of false promotions across runs.
We also report the optimism gap of the final system, defined as its improvement over $A_0$ measured on $S$ minus its population improvement, which measures how much the reused evaluation set overstates the final improvement.

\subsection{Results}
\label{sec:exp-results}

\begin{table}[t]
\caption{
Live self-improvement on Covertype over 30 seeds at each evaluation-set size, with all methods run on the same seeds.
Pop.\ Impr.\ is the final population improvement of the deployed system over starting system $A_0$ in percentage points (pp), estimated on the held-out set $H$ and reported as mean $\pm$ standard error.
Prom.\ is the mean number of promotions per run, and False is the number of promotions with non-positive population improvement, summed over the 30 runs.
Opt.\ Gap is the final improvement measured on $S$ minus the population improvement measured on $H$, which measures how much the reused evaluation set overstates the final improvement.
Select., Repeat., and Adapt.\ indicate whether a rule controls errors from within-round selection, repeated testing, and adaptive dependence, the three sources of false promotion discussed in \cref{sec:intro}.
}
\label{tab:live-results}
\centering
\small
\setlength{\tabcolsep}{3.6pt}
\renewcommand{\arraystretch}{1.15}
\resizebox{\textwidth}{!}{%
\begin{tabular}{@{}lccc cccc c cccc@{}}
\toprule
& & &
& \multicolumn{4}{c}{$n=2{,}000$}
&
& \multicolumn{4}{c}{$n=10{,}000$} \\
\cmidrule(lr){5-8}\cmidrule(lr){10-13}
Method & Select. & Repeat. & Adapt.
& Pop.\ Impr.\ $\uparrow$ & Prom. & False $\downarrow$ & Opt.\ Gap $\downarrow$
&
& Pop.\ Impr.\ $\uparrow$ & Prom. & False $\downarrow$ & Opt.\ Gap $\downarrow$ \\
\midrule
Empirical best-of-$K$
  & \xmark & \xmark & \xmark
  & \pmse{7.04}{0.08} & 13.0 & 75 & \pmse{0.43}{0.11}
  &
  & \pmse{7.19}{0.06} & 14.2 & 71 & \pmse{0.15}{0.06} \\
SICA-style
  & \xmark & \xmark & \xmark
  & \pmse{6.36}{0.18} & 12.3 & 72 & \pmse{0.34}{0.10}
  &
  & \pmse{6.56}{0.16} & 12.5 & 62 & \pmse{0.14}{0.05} \\
DGM-style
  & \xmark & \xmark & \xmark
  & \pmse{3.89}{0.17} & 6.4 & 34 & \pmse{0.06}{0.11}
  &
  & \pmse{4.03}{0.18} & 7.2 & 18 & \pmse{0.01}{0.05} \\
Elite
  & \xmark & \xmark & \xmark
  & \pmse{6.72}{0.14} & 14.3 & 89 & \pmse{0.47}{0.12}
  &
  & \pmse{6.85}{0.12} & 13.7 & 59 & \pmse{0.16}{0.05} \\
Niche-elite
  & \xmark & \xmark & \xmark
  & \pmse{6.46}{0.15} & 12.3 & 71 & \pmse{0.40}{0.13}
  &
  & \pmse{6.37}{0.15} & 12.4 & 44 & \pmse{0.13}{0.05} \\
PACE-style
  & \xmark & \xmark & \xmark
  & \pmse{6.45}{0.19} & 6.4 & 4 & \pmse{0.31}{0.10}
  &
  & \pmse{7.23}{0.02} & 8.1 & 3 & \pmse{0.10}{0.05} \\
Bonferroni-style
  & \cmark & \cmark & \xmark
  & \pmse{2.35}{0.37} & 1.3 & 0 & \pmse{0.21}{0.08}
  &
  & \pmse{6.90}{0.12} & 6.5 & 0 & \pmse{0.10}{0.05} \\
\addlinespace[3pt]
\rowcolor{oursbg}
\method{} (ours)
  & \cmark & \cmark & \cmark
  & \pmse{7.02}{0.26} & 3.2 & \good{0} & \pmse{0.05}{0.12}
  &
  & \pmse{7.19}{0.13} & 5.9 & \good{0} & \pmse{0.06}{0.07} \\
\bottomrule
\end{tabular}%
}
\end{table}

\begin{figure}[t]
\centering
\includegraphics[width=0.95\textwidth]{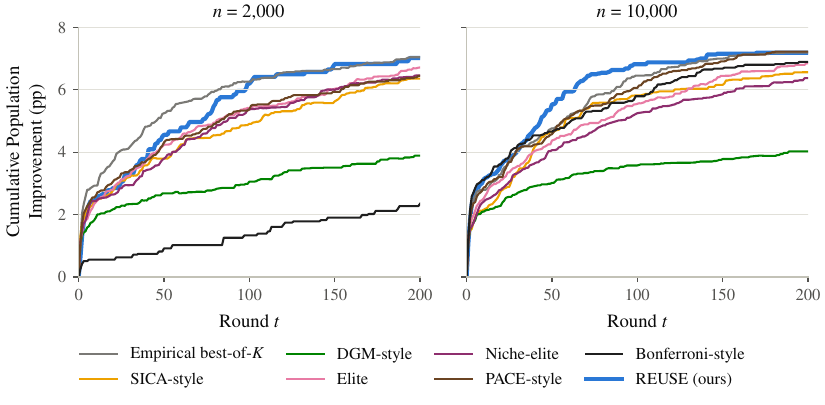}
\caption{
Population improvement of the deployed system over starting system $A_0$ across rounds, estimated on $H$ and averaged over 30 seeds, in percentage points (pp).
The left panel uses $n=2{,}000$ and the right panel uses $n=10{,}000$.
}
\label{fig:trajectory}
\end{figure}

\begin{figure}[t]
\centering
\includegraphics[width=0.95\textwidth]{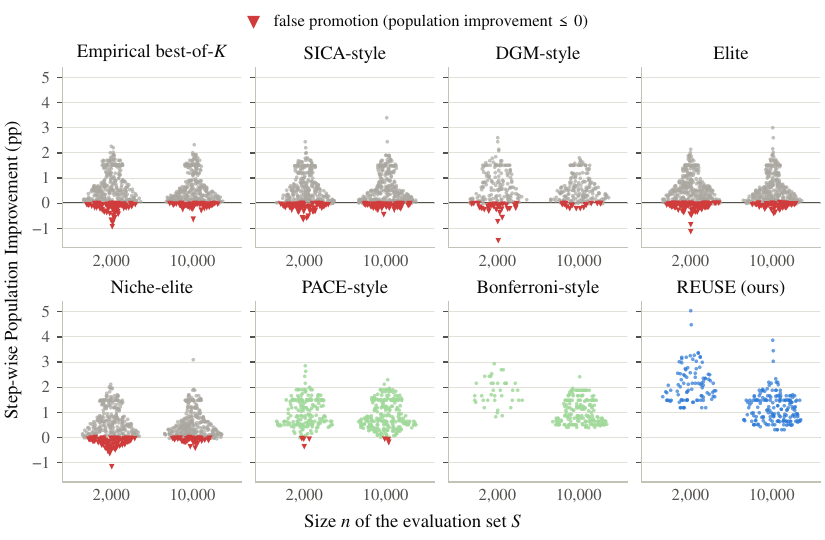}
\caption{
Population improvement of every promoted modification over the system it replaces, estimated on $H$ and pooled over 30 seeds, in percentage points (pp).
Each point is one promotion, and red triangles mark false promotions, whose population improvement is at most zero.
Within each panel, the left column uses $n=2{,}000$ and the right column uses $n=10{,}000$.
}
\label{fig:promotion-gains}
\end{figure}

\Cref{tab:live-results} reports the final population improvement over $A_0$ for each method alongside the number of promotions, the count of false promotions, and the optimism gap (Opt.\ Gap) of the final system.
The results highlight how existing evaluation methods fail and how \method{} guarantees reliability without sacrificing true performance.
For instance, at $n=2{,}000$, Empirical best-of-$K$ makes 75 false promotions, that is, nearly 20 percent of its accepted updates yield non-positive population improvement.
Even when the evaluation set expands to $n=10{,}000$, this framework still commits 71 false promotions.
Although Empirical best-of-$K$ achieves high final population improvement, its gains only match those of \method{}.
Other benchmark-driven frameworks like SICA-style also commit numerous false promotions while yielding substantially lower population improvements.

The Opt.\ Gap column further clarifies these behaviors.
Methods like Empirical best-of-$K$ and PACE-style show large gaps because they adaptively overfit to the evaluation set.
DGM-style exhibits a small gap but achieves very low population improvement, which reflects an inability to optimize rather than successful generalization.
Statistical baselines also struggle to balance safety and progress.
Bonferroni-style avoids false promotions but proves heavily conservative.
It makes only 1.3 promotions per run at $n=2{,}000$ and restricts its final population improvement to a mere 2.35 pp, well below the 7.02 pp achieved by \method{}.
PACE-style improves performance but lacks formal guarantees and still commits false promotions at both evaluation set sizes.
\method{} effectively solves this trade-off.
It makes no false promotion while matching the strongest methods, achieving a population improvement of 7.02 pp at $n=2{,}000$ and 7.19 pp at $n=10{,}000$.
Furthermore, its optimism gap stays within one standard error of zero at both sizes alongside high final true performance, proving that it eliminates adaptive overfitting without falling into the underfitting trap seen in DGM-style.

The trajectories across rounds in \cref{fig:trajectory} show that \method{} consistently stays near the top of the performance curves and establishes dominance when the sample size increases to $n=10{,}000$. 
\Cref{fig:promotion-gains} reveals the step-wise dynamics of individual promotions. Most competing methods display a dense cluster of points below zero, visibly marked by red triangles that represent false promotions. Bonferroni-style has no red triangles but produces very few points, confirming its overly conservative nature. In contrast, the entire cluster of points for \method{} sits noticeably higher, and every promotion is strictly positive.


\section{Related Work}
\label{sec:related}

\paragraph{Recursive self-improvement and statistical control.}
The idea of a system that improves itself with guarantees dates back to the G{\"o}del machine \citep{schmidhuber2007godel}. Language models have turned this into practical search loops that rewrite prompts, agent scaffolds, algorithms, and entire pipelines \citep{zelikman2024stop, fernando2024promptbreeder, sica2025, dgm2025, alphaevolve2025, anthropic2026aar, rsisurvey2026}.
However, evaluating these systems by repeatedly querying fixed benchmarks often leads to false promotions and degraded deployment performance \citep{wang2026rethinking, pace2026, guo2026seal, wang2025statlimits}.
To address this, recent works gate self-modification with statistical tests, utilizing global error budgets \citep{sgm2025, sea2026}, sequential testing \citep{pace2026}, or sealed audits \citep{guo2026seal, mittal2026signed}.
Yet, these approaches typically treat candidates as independent of the test data or restrict feedback without accounting for the information revealed by the promotion decisions themselves.
\method{} overcomes this by combining decision-only feedback with an explicit error allocation over all possible decision histories, thereby controlling adaptive dependence on a  reused evaluation set over the entire run.

\paragraph{Adaptive data analysis and holdout reuse.}
The risk of reusing a holdout set adaptively is a central topic in adaptive data analysis. 
Mechanisms based on differential privacy and description length allow a holdout set to answer adaptively chosen queries \citep{dwork2015reusable, dwork2015generalization}, and the Ladder mechanism maintains a leaderboard by releasing scores only upon significant improvement \citep{blum2015ladder}.
Information-theoretic arguments bound the bias of adaptive analysis \citep{russo2016controlling, xu2017information, bassily2016algorithmic}, though empirical benchmark reuse often shows less overfitting than worst-case theory predicts \citep{recht2019imagenet, roelofs2019meta, bertran2026fits}.
\method{}  explicitly counts decision histories, turning the information bound into exact finite-sample tests for each promotion.

\paragraph{Multiple testing and sequential inference.}
Controlling the family-wise error over many hypotheses encompasses classical \citep{holm1979simple}, weighted \citep{bretz2009graphical}, and sequential anytime-valid procedures \citep{lan1983discrete, javanmard2018online, howard2021time, ramdas2023game, waudbysmith2024betting}.
Similarly, selective inference adjusts tests for data-driven selection biases \citep{berk2013valid, fithian2014optimal, efron2011tweedie, jamieson2018bandit}.
However, these frameworks generally assume that hypotheses are fixed in advance, arrive with fresh data, or are selected by characterizable distributions.
In a self-improvement loop, hypotheses are instead generated by an arbitrary proposer adapting to earlier outcomes on the same data.
\method{} bridges this gap by allocating an error budget over all possible decision histories, ensuring each test is valid in finite samples \citep{clopper1934,maurer2009empirical}.

\section{Discussion}
\label{sec:discussion}

Self-improving systems increasingly decide what to keep by querying the same evaluation set over long trajectories.
We showed that this reuse creates a source of error that multiple comparison corrections do not address, because later candidates depend on earlier outcomes on the evaluation set.
\method{} controls this by returning only promotion decisions and allocating an  error budget over all decision histories, which yields a simultaneous guarantee that every promoted modification is a true population improvement.
In live self-improvement experiments, \method{} made much fewer false promotion than other frameworks without sacrificing true population performance.
\method{} achieves this reliability while placing no restrictions on the underlying proposer or agent architecture. While our experiments demonstrate the framework on a machine learning classification task, extending it to larger agentic settings like coding and model post-training is a natural next step. The decision-only feedback currently provides strict statistical guarantees, but future work could accommodate richer signals including rounded scores, privatized summaries, or periodically refreshed evaluation sets by explicitly accounting for their information transcripts. Furthermore, expanding the framework to simultaneously control for secondary metrics like safety and cost offers a promising path toward generating the auditable and provable records of capability gains increasingly required by frontier AI safety frameworks \citep{openai2025preparedness}.

\bibliographystyle{iclr2027_conference}
\bibliography{references}

\appendix


\numberwithin{theorem}{section}
\numberwithin{lemma}{section}
\numberwithin{proposition}{section}
\numberwithin{corollary}{section}
\numberwithin{assumption}{section}
\numberwithin{definition}{section}
\numberwithin{remark}{section}
\renewcommand{\thetheorem}{\thesection\arabic{theorem}}
\renewcommand{\thelemma}{\thesection\arabic{lemma}}
\renewcommand{\theproposition}{\thesection\arabic{proposition}}
\renewcommand{\thecorollary}{\thesection\arabic{corollary}}
\renewcommand{\theassumption}{\thesection\arabic{assumption}}
\renewcommand{\thedefinition}{\thesection\arabic{definition}}
\renewcommand{\theremark}{\thesection\arabic{remark}}

\newpage
\section{Proofs and Additional Theory}
\label{app:proofs}


\subsection{A paired lower confidence bound for bounded scores}
\label{app:paired-bound}

We first give a lower confidence bound for the mean of bounded paired differences. This result is used for
non-binary scores and as one component of the certificate for pass/fail evaluation.

Let $z_1,\ldots,z_n\in[-1,1]$. Write
\[
\bar z=\frac1n\sum_{i=1}^n z_i,
\qquad
\overline{z^2}=\frac1n\sum_{i=1}^n z_i^2.
\]
Here $\kl(a\,\|\,b)=a\ln(a/b)+(1-a)\ln((1-a)/(1-b))$ is the Bernoulli relative entropy,
with the usual continuous extensions at the endpoints. All logarithms are natural.
For $L>0$, define
\begin{equation}
\label{eq:margin}
\hat s^+(z;L)
=
\max\left\{
q\in[\overline{z^2},1]:
n\,\kl(\overline{z^2}\,\|\,q)\le L
\right\},
\end{equation}
and
\begin{equation}
\label{eq:epsilon}
\varepsilon(v,L)
=
\frac{2L}{3n}
+
\sqrt{
\left(\frac{2L}{3n}\right)^2
+
\frac{2vL}{n}
}.
\end{equation}
We use the lower bound
\begin{equation}
\label{eq:lcb}
\mathrm{lcb}(z;L)
=
\bar z-\varepsilon(\hat s^+(z;L),L).
\end{equation}

The quantity $\hat s^+(z;L)$ is an upper confidence bound for $\E Z^2$, which in turn upper bounds
$\mathrm{Var}(Z)$. The second term in \eqref{eq:lcb} is a Bernstein deviation evaluated at this data-dependent
variance proxy.

\begin{lemma}[Paired lower confidence bound]
\label{lem:bkl}
Let $Z_1,\ldots,Z_n$ be i.i.d.\ random variables in $[-1,1]$ with mean $\mu$. For
$\delta\in(0,1)$ and $L=\ln(2/\delta)$,
\[
\Prob\!\left(
\mu<\mathrm{lcb}(Z_{1:n};L)
\right)
\le\delta.
\]
\end{lemma}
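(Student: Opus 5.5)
The argument splits the failure probability $\delta$ into two halves of $\delta/2$ each, with $L=\ln(2/\delta)$ so that $e^{-L}=\delta/2$. The first half covers a one-sided Bernstein deviation of $\bar Z$ below $\mu$. The second half covers an upper Chernoff--KL bound on the second moment $\sigma_2:=\E Z^2$, which dominates $\mathrm{Var}(Z)$. On the intersection of the two good events we will show $\mu\ge\mathrm{lcb}(Z_{1:n};L)$, and a union bound then gives the claim.

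\textbf{Step 1 (second moment).} The variables $U_i=Z_i^2$ lie in $[0,1]$, are i.i.d., and have mean $\sigma_2$. The Chernoff bound for $[0,1]$-valued variables is dominated by the Bernoulli case via convexity of $u\mapsto e^{\lambda u}$. It gives
\[
\Prob\bigl(\overline{U}\le a\bigr)\le \exp\bigl(-n\,\kl(a\,\|\,\sigma_2)\bigr)\qquad\text{for } a\le\sigma_2 .
\]
Hence the event $E_1=\{n\,\kl(\overline{Z^2}\,\|\,\sigma_2)\le L \text{ or } \overline{Z^2}\ge\sigma_2\}$ has probability at least $1-\delta/2$. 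To see this, let $a^*$ be the value at which $n\,\kl(a^*\|\sigma_2)=L$ with $a^*<\sigma_2$. By monotonicity of $\kl(\cdot\|\sigma_2)$ on $[0,\sigma_2]$, the complement of $E_1$ is contained in $\{\overline U< a^*\}$, whose probability is at most $e^{-L}$. When no such $a^*$ exists, the complement is empty.

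On $E_1$, either $\sigma_2\le\overline{Z^2}\le\hat s^+$, or $\sigma_2\in[\overline{Z^2},1]$ is feasible in \eqref{eq:margin}. In both cases $\sigma_2\le\hat s^+(Z;L)$. The maximum in \eqref{eq:margin} is attained, since $q\mapsto\kl(\overline{z^2}\|q)$ is continuous and increasing on $[\overline{z^2},1]$; we will handle the value $+\infty$ at $q=1$ by the usual extension.

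\textbf{Step 2 (Bernstein).} The variables $\mu-Z_i$ have mean zero, are bounded above by $2$ (since $Z_i\ge-1$ and $\mu\le1$), and have variance at most $\sigma_2$. Bernstein's inequality with range bound $b=2$ gives
\[
\Prob(\mu-\bar Z\ge\varepsilon)\le\exp\!\left(-\frac{n\varepsilon^2}{2\sigma_2+2b\varepsilon/3}\right).
\]
Setting the exponent equal to $L$ and solving the quadratic $n\varepsilon^2-\tfrac{4L}{3}\varepsilon-2\sigma_2L=0$ returns exactly $\varepsilon(\sigma_2,L)$ of \eqref{eq:epsilon}. Thus $E_2=\{\mu-\bar Z<\varepsilon(\sigma_2,L)\}$ has probability at least $1-\delta/2$.

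The check I expect to be the main obstacle is the range constant. Bernstein needs $\mu-Z_i\le b$, and $b=2$ is the crude bound, whereas the constant $\tfrac{2L}{3n}$ in \eqref{eq:epsilon} matches exactly $b=2$ in the one-sided form with $b\varepsilon/3$ multiplied by $2$. I would verify this carefully, using the one-sided moment-generating-function form $\E e^{\lambda X}\le\exp\bigl(\lambda^2\sigma^2/(2(1-\lambda b/3))\bigr)$ for $X\le b$. If the constant does not match, I would instead use the sharper bound $\mu-Z_i\le 1+\mu$ together with the corresponding variance term.

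\textbf{Step 3 (combine).} The map $\varepsilon(\cdot,L)$ is increasing in $v$. On $E_1\cap E_2$ this gives
\[
\mu>\bar Z-\varepsilon(\sigma_2,L)\ge\bar Z-\varepsilon(\hat s^+,L)=\mathrm{lcb}(Z_{1:n};L).
\]
Therefore $\Prob(\mu<\mathrm{lcb})\le\Prob(E_1^c)+\Prob(E_2^c)\le\delta$.
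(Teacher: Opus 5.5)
Your plan is the paper's own: a KL upper confidence bound $\E Z^2\le\hat s^+$ at level $\delta/2$, a Bernstein bound with range $b=2$ at level $\delta/2$, monotonicity of $\varepsilon(\cdot,L)$, and a union bound. However, Step 2 bounds the wrong tail, so the first inequality in Step 3 does not follow as written.

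The failure event $\{\mu<\mathrm{lcb}(Z_{1:n};L)\}$ is $\{\bar Z-\mu>\varepsilon(\hat s^+,L)\}$, an \emph{upward} deviation of $\bar Z$. Your event $E_2=\{\mu-\bar Z<\varepsilon(\sigma_2,L)\}$ only gives $\mu<\bar Z+\varepsilon(\sigma_2,L)$. That is the ingredient for an upper confidence bound, and it gives no control of the form $\mu>\bar Z-\varepsilon(\sigma_2,L)$.

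The fix is to apply Bernstein to $Z_i-\mu$ instead. These variables are bounded above by $1-\mu\le2$, using $Z_i\le1$ and $\mu\ge-1$, and have variance $\mathrm{Var}(Z_1)\le\sigma_2$. This gives $\Prob\bigl(\bar Z-\mu\ge\varepsilon(\sigma_2,L)\bigr)\le e^{-L}=\delta/2$. Step 3 then goes through unchanged with $E_2$ replaced by $\{\bar Z-\mu<\varepsilon(\sigma_2,L)\}$, and this corrected argument is exactly what the paper does.

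Two smaller points. First, your worry about the range constant is unfounded. With $b=2$ the Bernstein denominator is $2\sigma_2+\tfrac43\varepsilon$, and the positive root of $n\varepsilon^2=L(2\sigma_2+\tfrac43\varepsilon)$ is precisely \eqref{eq:epsilon}, so no fallback is needed. Second, in Step 1 the endpoint $\sigma_2=\E Z^2=1$ needs separate handling. There no $a^*$ exists, but the complement of $E_1$ is $\{\overline{Z^2}<1\}$. This event is null, because $Z_i^2=1$ almost surely, but it is not empty, so your sentence ``the complement is empty'' is inaccurate there. The paper treats this endpoint case separately.
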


\begin{proof}
Let
\[
m=\E Z_1^2,
\qquad
\sigma^2=\mathrm{Var}(Z_1)\le m.
\]

{Since $Z_i^2\in[0,1]$ are i.i.d.\ with mean $m$, applying the
Chernoff--Hoeffding bound to $1-Z_i^2$
\citep{hoeffding1963} gives
\[
\Prob\!\left(\overline{Z^2}\le a\right)
\le \exp\{-n\kl(a\,\|\,m)\},
\qquad 0\le a<m.
\]
For $a=0$ and $m<1$, the same bound follows from
$\Prob(Z_i^2=0)\le1-m$ and independence. If $m=1$, then
$Z_i^2=1$ almost surely, so the probability is zero for every $a<1$,
consistent with the convention $\exp(-\infty)=0$.}
By the definition of $\hat s^+$,
\[
\Prob(\hat s^+<m)\le e^{-L}=\delta/2.
\]

{Next, the centered variables satisfy $Z_i-\mu\le2$. Bernstein's
inequality \citep{hoeffding1963}, with range parameter $b=2$,
gives, for every $\epsilon>0$,
\[
\Prob(\bar Z-\mu\ge\epsilon)
\le
\exp\!\left\{-\frac{n\epsilon^2}
{2\sigma^2+\frac43\epsilon}\right\}.
\]
If $\sigma^2=0$, the left-hand side is zero and the inequality remains valid.}
The positive solution of
\[
n\epsilon^2
=
L\left(2\sigma^2+\frac43\epsilon\right)
\]
is $\varepsilon(\sigma^2,L)$. Hence
\[
\Prob\!\left(
\bar Z-\mu
\ge
\varepsilon(\sigma^2,L)
\right)
\le
e^{-L}
=
\delta/2.
\]

The function $\varepsilon(v,L)$ is increasing in $v$. On the event $\{\hat s^+\ge m\}$,
\[
\varepsilon(\hat s^+,L)
\ge
\varepsilon(m,L)
\ge
\varepsilon(\sigma^2,L).
\]
Therefore
\[
\left\{
\mu<
\mathrm{lcb}(Z_{1:n};L)
\right\}
\subseteq
\{\hat s^+<m\}
\cup
\left\{
\bar Z-\mu>
\varepsilon(\sigma^2,L)
\right\}.
\]
The two probabilities sum to at most $\delta$.
\end{proof}


\subsection{Validity of the exact paired sign test}

For pass/fail evaluation,
\[
Z_i\in\{-1,0,+1\}.
\]
Let
\[
q_+=\Prob(Z_i=+1),
\qquad
q_-=\Prob(Z_i=-1),
\qquad
d=q_++q_-.
\]
The population gain is
\[
\mu=\E Z_i=q_+-q_-.
\]

Recall that $n_+$ and $n_-$ denote the number of candidate wins and losses and
\[
M=n_++n_-.
\]
The paired sign-test $p$-value is
\[
\pi(n_+,M)
=
\Prob\!\left(
\mathrm{Bin}(M,\tfrac12)\ge n_+
\right),
\qquad
\pi(0,0)=1.
\]

\begin{lemma}[Validity of the exact paired test]
\label{lem:sign}
Let $Z_1,\ldots,Z_n$ be i.i.d.\ in $\{-1,0,+1\}$ with mean $\mu\le0$. For any
$\delta'\in(0,1)$,
\[
\Prob\!\left(
\pi(n_+,M)\le\delta'
\right)
\le\delta'.
\]
The same inequality holds conditionally on every value of $M$.
\end{lemma}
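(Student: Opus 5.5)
The plan is to condition on $M$ first, and then remove the conditioning at the end. Fix $m\in\{0,1,\ldots,n\}$ with $\Prob(M=m)>0$. Conditional on $M=m$, the set of indices $i$ with $Z_i\neq0$ is exchangeable. By the i.i.d.\ structure, the signs of the $m$ nonzero differences are i.i.d.\ Bernoulli with success probability
\[
\theta=\frac{q_+}{q_++q_-}
\qquad (\text{when } d>0).
\]
So conditionally on $M=m$ we have $n_+\sim\mathrm{Bin}(m,\theta)$. Since $\mu=q_+-q_-\le0$, it follows that $q_+\le q_-$, and hence $\theta\le 1/2$. I will verify this factorization by writing the joint law of $(Z_1,\ldots,Z_n)$ as a product, then summing over configurations that have $m$ nonzero entries and $k$ of them equal to $+1$. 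This gives
\[
\Prob(n_+=k,\,M=m)=\binom{n}{m}\binom{m}{k}q_+^k q_-^{m-k}(1-d)^{n-m}.
\]

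Next comes a stochastic-dominance step. The map $k\mapsto\pi(k,m)=\Prob(\mathrm{Bin}(m,1/2)\ge k)$ is nonincreasing in $k$. The law $\mathrm{Bin}(m,\theta)$ with $\theta\le1/2$ is stochastically dominated by $\mathrm{Bin}(m,1/2)$; I will use the standard coupling of the Bernoulli trials through uniforms. Consequently $\pi(n_+,m)$ is stochastically larger under $\theta$ than under $1/2$, and
\[
\Prob\bigl(\pi(n_+,m)\le\delta'\mid M=m\bigr)\le\Prob\bigl(\pi(X,m)\le\delta'\bigr),\qquad X\sim\mathrm{Bin}(m,1/2).
\]
The right-hand side is at most $\delta'$ by the usual super-uniformity of a discrete upper-tail $p$-value. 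To see this, let $k^*$ be the smallest $k$ with $\Prob(X\ge k)\le\delta'$. Then the event $\{\pi(X,m)\le\delta'\}$ equals $\{X\ge k^*\}$, whose probability is at most $\delta'$. If no such $k^*$ exists, the event is empty.

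Three edge cases remain. When $m=0$ we have $\pi=1>\delta'$, so the probability is zero. When $d=0$, $M=0$ almost surely, so the same conclusion holds. When $q_+=0<q_-$, $\theta=0$ and the bound holds trivially. Finally, I will average the conditional bound over the law of $M$ to obtain the unconditional inequality.

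The only step needing care is the conditional binomial reduction together with the monotonicity and dominance argument. Everything else is routine. I expect the cleanest route is the explicit product-form computation above, which makes the conditional law transparent without appealing to any further results.
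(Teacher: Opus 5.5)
Your proposal is correct and follows the paper's proof essentially step for step. Both use the multinomial factorization giving $n_+\mid M=m\sim\mathrm{Bin}(m,q_+/d)$ with $q_+/d\le 1/2$ under the null, stochastic dominance by $\mathrm{Bin}(m,\tfrac12)$, the smallest-rejection-threshold argument, the $d=0$ and $m=0$ edge cases, and averaging over $M$.
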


{
\begin{proof}
If $d=0$, then $M=0$ almost surely and $\pi(0,0)=1$. The test never rejects.

Assume $d>0$ and define
\[
q=\frac{q_+}{d}=\Prob(Z_i=+1\mid Z_i\neq0).
\]
By independence, $(n_+,n_-,n-M)$ has the multinomial distribution
with cell probabilities $(q_+,q_-,1-d)$. For $0\le k\le m\le n$,
its probability mass factors as
\[
\begin{aligned}
\Prob(M=m,n_+=k)
&=\binom nm\binom mk q_+^kq_-^{m-k}(1-d)^{n-m}\\
&=\Prob(M=m)\binom mk q^k(1-q)^{m-k}.
\end{aligned}
\]
Thus, for every $m$ with $\Prob(M=m)>0$,
$n_+\mid M=m\sim\mathrm{Bin}(m,q)$.
The null hypothesis $\mu\le0$ is equivalent to $q\le1/2$.

Fix such an $m$. Under the null, $\mathrm{Bin}(m,q)$ is stochastically
dominated by $\mathrm{Bin}(m,1/2)$, so the rejection probability of
the upper-tail sign test is maximized over $q\le1/2$ at $q=1/2$.
Indeed, if there is a rejection threshold, let $k_0$ be the smallest
integer in $\{0,\ldots,m\}$ with $\pi(k_0,m)\le\delta'$.
Monotonicity of $\pi(k,m)$ in $k$ then gives
\[
\begin{aligned}
\Prob\!\left(\pi(n_+,M)\le\delta'\mid M=m\right)
&=\Prob\!\left(n_+\ge k_0\mid M=m\right)\\
&\le\Prob\!\left(\mathrm{Bin}(m,\tfrac12)\ge k_0\right)
=\pi(k_0,m)\le\delta'.
\end{aligned}
\]
If there is no rejection threshold, the rejection probability is zero.
For $m=0$ it is also zero. Averaging over $M$ proves the unconditional claim.
\end{proof}
}


\subsection{A lower confidence bound for pass/fail evaluation}

For $d>0$, define
\[
q=\frac{q_+}{d}.
\]
Then
\begin{equation}
\label{eq:mudq}
\mu
=
q_+-q_-
=
(2q-1)d,
\qquad
\E Z_i^2=d.
\end{equation}

{For $k$ successes in $m$ Bernoulli trials, let
$\underline\theta(k;m,a)$ denote the one-sided Clopper--Pearson lower
confidence limit at error level $a\in(0,1)$ \citep{clopper1934}.
For $1\le k\le m$, it is the $a$-quantile of
$\mathrm{Beta}(k,m-k+1)$, and we set $\underline\theta(0;m,a)=0$.
}

Given an error level $\delta_c\in(0,1)$, define
\[
q_{\mathrm{lo}}
=
\underline\theta(n_+;M,\delta_c/4),
\qquad
d_{\mathrm{lo}}
=
\underline\theta(M;n,\delta_c/4),
\]
and
\begin{equation}
\label{eq:sign-lcb}
\ell^{\mathrm{sgn}}
=
\begin{cases}
(2q_{\mathrm{lo}}-1)d_{\mathrm{lo}},
&
q_{\mathrm{lo}}>1/2,
\\[2mm]
-\infty,
&
q_{\mathrm{lo}}\le1/2.
\end{cases}
\end{equation}

We combine this exact bound with the bounded-score lower confidence bound from \cref{lem:bkl}:
\begin{equation}
\label{eq:ellstar}
\ell^\star(Z_{1:n};\delta_c)
=
\max\left\{
\mathrm{lcb}
\left(
Z_{1:n};
\ln\frac{4}{\delta_c}
\right),
\ell^{\mathrm{sgn}}
\right\}.
\end{equation}

\begin{lemma}[Lower confidence bound for pass/fail evaluation]
\label{lem:signlcb}
For i.i.d.\ $Z_1,\ldots,Z_n\in\{-1,0,+1\}$ with mean $\mu$,
\[
\Prob(\mu<\ell^{\mathrm{sgn}})
\le
\frac{\delta_c}{2},
\]
and
\[
\Prob\!\left(
\mu<\ell^\star(Z_{1:n};\delta_c)
\right)
\le
\delta_c.
\]
\end{lemma}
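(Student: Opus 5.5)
The plan is to prove the bound on $\ell^{\mathrm{sgn}}$ by splitting its failure into two Clopper--Pearson failures. The bound on $\ell^\star$ then follows from a union bound with \cref{lem:bkl}.

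Assume $d>0$. If $d=0$, then $M=0$ almost surely, so $n_+=0$, $q_{\mathrm{lo}}=0\le 1/2$, $\ell^{\mathrm{sgn}}=-\infty$, and there is nothing to prove. Consider two events:
\[
E_1=\{q_{\mathrm{lo}}>q\},
\qquad
E_2=\{d_{\mathrm{lo}}>d\}.
\]
For $E_2$: $M\sim\mathrm{Bin}(n,d)$, and the one-sided Clopper--Pearson limit at level $\delta_c/4$ satisfies $\Prob(\underline\theta(M;n,\delta_c/4)>d)\le\delta_c/4$. This is the standard coverage property, which I would justify briefly via the Beta--binomial duality: $\underline\theta(k;m,a)>\theta$ iff $\Prob(\mathrm{Bin}(m,\theta)\ge k)<a$, and the set of such $k$ is an upper set.

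For $E_1$: the factorization in the proof of \cref{lem:sign} gives $n_+\mid M=m\sim\mathrm{Bin}(m,q)$. Hence, conditionally on each $m\ge1$, $\Prob(q_{\mathrm{lo}}>q\mid M=m)\le\delta_c/4$. For $m=0$, $q_{\mathrm{lo}}=0\le q$. Averaging over $M$ gives $\Prob(E_1)\le\delta_c/4$.

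Next, show that $\{\mu<\ell^{\mathrm{sgn}}\}\subseteq E_1\cup E_2$. On the complement, suppose $\ell^{\mathrm{sgn}}>-\infty$, so that $q_{\mathrm{lo}}>1/2$. Then $q\ge q_{\mathrm{lo}}>1/2$, so both $2q-1\ge 2q_{\mathrm{lo}}-1>0$ and $d\ge d_{\mathrm{lo}}\ge0$. Multiplying these nonnegative inequalities and using \eqref{eq:mudq} gives
\[
\mu=(2q-1)d\ge(2q_{\mathrm{lo}}-1)d_{\mathrm{lo}}=\ell^{\mathrm{sgn}}.
\]
If $\ell^{\mathrm{sgn}}=-\infty$, the inequality $\mu\ge\ell^{\mathrm{sgn}}$ is trivial. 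Therefore $\Prob(\mu<\ell^{\mathrm{sgn}})\le\delta_c/2$.

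For $\ell^\star$, the event $\{\mu<\ell^\star\}$ is contained in the union of $\{\mu<\ell^{\mathrm{sgn}}\}$ and $\{\mu<\mathrm{lcb}(Z_{1:n};\ln(4/\delta_c))\}$. \Cref{lem:bkl} applies with $\delta=\delta_c/2$, since $Z_i\in\{-1,0,1\}\subset[-1,1]$ and $L=\ln(2/\delta)=\ln(4/\delta_c)$. So the second event has probability at most $\delta_c/2$, and the union bound gives $\delta_c$.

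The main obstacle is the $E_1$ step: $q_{\mathrm{lo}}$ is computed from a random number of trials $M$, so coverage must be established conditionally on $M$ via the multinomial factorization rather than by treating $n_+$ as a fixed-size binomial. The multiplication step also needs care with signs, which is handled because $q_{\mathrm{lo}}>1/2$ is enforced before forming the product.
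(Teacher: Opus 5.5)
Your proposal is correct and follows the paper's proof essentially step for step. You apply Clopper--Pearson coverage to $q$ conditionally on $M$ (via the multinomial factorization from \cref{lem:sign}) and to $d$ via $M\sim\mathrm{Bin}(n,d)$, each at level $\delta_c/4$, then use the monotone product inequality on the intersection of the good events, and finish with a union bound with \cref{lem:bkl} at $\delta=\delta_c/2$. Your explicit treatment of the $m=0$ case and of the sign condition before multiplying simply spells out details the paper leaves implicit.
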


\begin{proof}
If $d=0$, then $M=n_+=0$ almost surely, so $q_{\mathrm{lo}}=0$ and $\ell^{\mathrm{sgn}}=-\infty$.
The first claim is immediate, and the second follows from \cref{lem:bkl}. Assume henceforth that $d>0$.

{By the conditional-count calculation in the proof of \cref{lem:sign},
the conditional law of $n_+$ given $M=m$ is
$\mathrm{Bin}(m,q)$ whenever $\Prob(M=m)>0$. Applying the one-sided
Clopper--Pearson coverage property conditionally and then averaging over $M$ gives
\[
\Prob(q<q_{\mathrm{lo}})\le\delta_c/4.
\]
Separately, the indicators $\mathbf{1}\{Z_i\neq0\}$ are i.i.d.\
Bernoulli variables with success probability $d$, so $M\sim\mathrm{Bin}(n,d)$.
Applying the same coverage property directly to this count gives
\[
\Prob(d<d_{\mathrm{lo}})\le\delta_c/4.
\]}

Therefore, with probability at least $1-\delta_c/2$,
\[
q\ge q_{\mathrm{lo}},
\qquad
d\ge d_{\mathrm{lo}}.
\]

On this event, when $q_{\mathrm{lo}}>1/2$,
\[
\mu
=
(2q-1)d
\ge
(2q_{\mathrm{lo}}-1)d_{\mathrm{lo}}
=
\ell^{\mathrm{sgn}}.
\]
When $q_{\mathrm{lo}}\le1/2$, the bound equals $-\infty$ and the inequality holds automatically. Hence
\[
\Prob(\mu<\ell^{\mathrm{sgn}})
\le
\delta_c/2.
\]

By \cref{lem:bkl},
\[
\Prob\left(
\mu<
\mathrm{lcb}
\left(
Z_{1:n};
\ln\frac4{\delta_c}
\right)
\right)
\le
\delta_c/2.
\]
A union bound over the two lower confidence bounds gives
\[
\Prob\!\left(
\mu<
\ell^\star(Z_{1:n};\delta_c)
\right)
\le
\delta_c.
\]
\end{proof}

\begin{remark}[Running cumulative bound]
\label{rem:silent}
The sign test may establish that $\mu>0$ even when the numerical lower confidence bound
$\ell^\star$ is non-positive. For the pass/fail test with $\gamma=0$, \method{} therefore adds
\[
\max\{\ell^\star,0\}
\]
to the running bound $C_t$. On the event of \cref{thm:main}, both
\[
\Delta(A_t,A_{t-1})>0
\]
and
\[
\Delta(A_t,A_{t-1})\ge\ell^\star
\]
hold. Hence
\[
\Delta(A_t,A_{t-1})
\ge
\max\{\ell^\star,0\}.
\]
This keeps $C_t$ nondecreasing while preserving its lower-bound interpretation.
For the bounded-score extension, promotion requires $\ell>\gamma\ge0$, and the contribution is $\ell$ itself.
\end{remark}


\subsection{Proof of the main guarantee}

We prove \cref{thm:main} for general allocation weights $(w_t)$ and $(v_p)$ satisfying
\[
\sum_{t\ge1}w_t\le1,
\qquad
\sum_{p\ge0}v_p\le1.
\]
For pass/fail evaluation with $\gamma=0$, let $\rho\in(0,1)$ denote the fraction of $\delta_{t,p}$ used for the
promotion test; the remaining fraction is used for the lower confidence bound. The proof applies to every such $\rho$.
If $\delta_{t,p}=0$, all candidates are rejected and the running bound is unchanged. Such comparisons can be omitted
from the argument below.

\begin{proof}[Proof of \cref{thm:main}]
Condition on the search procedure's randomness $R$, development data $D$, and starting system $A_0$.
By \cref{ass:indep}, $S$ remains i.i.d.\ from $Q$. All probabilities in this proof are conditional on $(R,D,A_0)$.

For round $t$, let
\[
y=y_{1:t-1}
\]
denote a possible decision history. By \cref{ass:iso}, every such history determines an incumbent
\[
a(y)
\]
and candidates
\[
b_1(y),\ldots,b_K(y)
\]
that do not otherwise depend on $S$. Here $y$ is a fixed hypothetical history: we do not condition on the event
that the evaluation set produces that history.

Let $\mathcal Y_{t-1,p}$ be the set of decision histories of length $t-1$ containing exactly $p$ promotions. Then
\begin{equation}
\label{eq:history-count}
|\mathcal Y_{t-1,p}|
=
\binom{t-1}{p}K^p.
\end{equation}
The factor $\binom{t-1}{p}$ selects the promotion rounds and the factor $K^p$ selects the promoted candidate at
each such round.

Fix $t$, $p$, $y\in\mathcal Y_{t-1,p}$, and candidate $k$. Write
\[
\mu_{y,k}
=
\Delta(b_k(y),a(y)).
\]

In what follows, $Z_{1:n}$, $n_+$, and $M$ refer to this fixed pair $(b_k(y),a(y))$.
For pass/fail evaluation with $\gamma=0$, define the decision failure event
\[
E^{\mathrm{dec}}_{t,y,k}
=
\{\mu_{y,k}\le0\}
\cap
\left\{
\pi(n_+,M)
\le
\rho\delta_{t,p}
\right\},
\]
and the confidence-bound failure event
\[
E^{\mathrm{cert}}_{t,y,k}
=
\left\{
\mu_{y,k}
<
\ell^\star
\left(
Z_{1:n};
(1-\rho)\delta_{t,p}
\right)
\right\}.
\]
By \cref{lem:sign},
\[
\Prob(E^{\mathrm{dec}}_{t,y,k})
\le
\rho\delta_{t,p},
\]
and by \cref{lem:signlcb},
\[
\Prob(E^{\mathrm{cert}}_{t,y,k})
\le
(1-\rho)\delta_{t,p}.
\]
Set $E_{t,y,k}=E^{\mathrm{dec}}_{t,y,k}\cup E^{\mathrm{cert}}_{t,y,k}$. Then
\[
\Prob(E_{t,y,k})\le\delta_{t,p}.
\]

For the bounded-score extension, define instead
\[
E_{t,y,k}
=
\left\{
\mu_{y,k}
<
\mathrm{lcb}(Z_{1:n};\cL_{t,p})
\right\}.
\]
By \cref{lem:bkl},
\[
\Prob(E_{t,y,k})
\le
\delta_{t,p}.
\]

Now sum over every round, promotion count, decision history, and candidate. Using
\eqref{eq:alloc} and \eqref{eq:history-count},
\begin{align}
\Prob\left(
\bigcup_{t,p,y,k}E_{t,y,k}
\right)
&\le
\sum_{t\ge1}
\sum_{p=0}^{t-1}
\binom{t-1}{p}K^p
\cdot K
\cdot
\delta_{t,p}
\\
&=
\alpha
\sum_{t\ge1}w_t
\sum_{p=0}^{t-1}v_p
\\
&\le
\alpha.
\end{align}

Therefore, with probability at least $1-\alpha$, every candidate comparison that can arise under every possible
decision history is valid simultaneously.

Consider the realized sequence. If $y_t=k\neq0$, the promoted candidate passed the promotion test. On the event above,
with $\gamma=0$ for the sign-test route and any $\gamma\ge0$ for the bounded-score route,
\[
\Delta(A_t,A_{t-1})>\gamma,
\]
which proves the first claim.

The same event also gives
\[
\Delta(A_t,A_{t-1})
\ge
\max\{\ell_t,\gamma\}
\]
at every promoted round, where $\ell_t$ denotes the lower bound used for the promoted comparison.
Summing over promoted rounds gives
\begin{align}
V(A_t)-V(A_0)
&=
\sum_{s\le t:y_s\neq0}
\Delta(A_s,A_{s-1})
\\
&\ge
\sum_{s\le t:y_s\neq0}
\max\{\ell_s,\gamma\}
\\
&=
C_t.
\end{align}

{
This proves the cumulative lower bound for each fixed $\xi=(R,D,A_0)$.
By the joint measurability in \cref{ass:iso}, the failure events used
above, for each fixed hypothetical history, are measurable as functions
of $(\xi,S)$. Their union over rounds and histories is a countable
union of measurable sets. Since $S$ and $\xi$ are independent by
\cref{ass:indep}, integrating the section-wise probability bound
with respect to the law of $\xi$ gives the unconditional statement.}

\end{proof}



\subsection{A direct lower bound against the starting system}
\label{app:direct}

The running bound $C_t$ adds one lower confidence bound for each promoted modification. We can also compare the
current system $A_t$ directly with the starting system $A_0$. This produces a second lower bound on total progress
from the same stored evaluations.

Let $\alpha_c\in(0,1)$ be a separate error level. For a decision history of length $t$ containing $p$ promotions,
define
\begin{equation}
\label{eq:direct-allocation}
\delta^c_{t,p}
=
\frac{
\alpha_c w_t v_p
}{
\binom{t}{p}K^p
},
\qquad
\cL^c_{t,p}
=
\ln\frac{2}{\delta^c_{t,p}}.
\end{equation}

When $p_t\ge1$, define
\begin{equation}
\label{eq:direct-certificate}
D_t
=
\ell^\star
\left(
Z_{1:n}(A_t,A_0);
\delta^c_{t,p_t}
\right),
\end{equation}
for pass/fail outcomes. For general bounded scores, use
$D_t=\mathrm{lcb}(Z_{1:n}(A_t,A_0);\cL^c_{t,p_t})$ from \cref{lem:bkl} instead.
If $p_t\ge1$ and $\delta^c_{t,p_t}=0$, set $D_t=-\infty$. When $p_t=0$, set
\[
D_t=0,
\]
since $A_t=A_0$ exactly.

\begin{proposition}[Direct lower bound on total progress]
\label{prop:direct}
Under \cref{ass:indep,ass:iso}, with probability at least $1-\alpha_c$,
\[
V(A_t)-V(A_0)
\ge
D_t
\qquad
\text{for all }t\ge1.
\]
Under development-set screening, the same result holds with the factor $K^p$ removed from
\eqref{eq:direct-allocation}. 
{In the pass/fail setting with $\gamma=0$, under the conditions
of \cref{thm:main}, we additionally have}
\[
V(A_t)-V(A_0)
\ge
\max\{C_t,D_t\}
\]
for all $t$ with probability at least $1-\alpha-\alpha_c$.
\end{proposition}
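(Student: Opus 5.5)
The argument mirrors the proof of \cref{thm:main}: a union bound over all hypothetical decision histories, now indexed by histories of length $t$ rather than $t-1$. We condition on $\xi=(R,D,A_0)$; by \cref{ass:indep}, $S$ is then an i.i.d.\ sample from $Q$. By \cref{ass:iso}, the incumbent at each round is a deterministic function of $\xi$ and the decision history. In particular, for each fixed hypothetical history $y\in\{0,1,\ldots,K\}^t$, the current system $a_t(y)$ is obtained by starting from $A_0$ and applying, at each round $s$ with $y_s=k\neq0$, the replacement $a_s(y)=f_{s,k}(\xi,y_{1:s-1})$. This construction never refers to $S$. Hence, for fixed $y$, the paired differences $Z_i(a_t(y),A_0)$ are i.i.d.\ with mean $\Delta(a_t(y),A_0)$. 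Joint measurability in \cref{ass:iso} guarantees that this remains true after integrating over $\xi$.

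Next, fix $t$ and $p$, and let $\mathcal Y_{t,p}$ be the set of length-$t$ histories with exactly $p$ promotions. Counting as in \eqref{eq:history-count} gives $|\mathcal Y_{t,p}|=\binom{t}{p}K^p$. For $p\ge1$ and $y\in\mathcal Y_{t,p}$, define the failure event $F_{t,y}=\{\Delta(a_t(y),A_0)<\ell^\star(Z_{1:n}(a_t(y),A_0);\delta^c_{t,p})\}$ in the pass/fail case, and the analogous event with $\mathrm{lcb}(\cdot;\cL^c_{t,p})$ in the bounded case. By \cref{lem:signlcb} or \cref{lem:bkl}, $\Prob(F_{t,y})\le\delta^c_{t,p}$; if $\delta^c_{t,p}=0$, then $D_t=-\infty$ and there is nothing to control. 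Histories with $p=0$ need no event, because there $D_t=0=V(A_t)-V(A_0)$ exactly. The union bound then gives
\[
\sum_{t\ge1}\sum_{p=1}^{t}\binom{t}{p}K^p\,\delta^c_{t,p}\le\alpha_c\sum_{t\ge1}w_t\sum_{p\ge0}v_p\le\alpha_c .
\]
Off this union, the realized history $y_{1:t}$ is one of the hypothetical ones, so $V(A_t)-V(A_0)\ge D_t$ for every $t$ simultaneously. Integrating over $\xi$ yields the unconditional statement.

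For development-set screening, only one candidate per round is tested on $S$, and dev-side selection depends only on $\xi$. A history is therefore determined by its set of promotion rounds alone, which removes the factor $K^p$ from the count and hence from the allocation. Finally, the combined claim follows from a union bound: the event of \cref{thm:main} fails with probability at most $\alpha$, and the event above fails with probability at most $\alpha_c$. On the intersection, both $V(A_t)-V(A_0)\ge C_t$ and $V(A_t)-V(A_0)\ge D_t$ hold for all $t$.

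The main point needing care is the first step. One must check that the incumbent $a_t(y)$, and not only the candidates, is a fixed function of $(\xi,y)$, and that $y$ is treated as a hypothetical history rather than conditioned upon. Once that is established, the remaining steps are the same bookkeeping as in the proof of \cref{thm:main}.
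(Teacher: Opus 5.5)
Your proposal is correct and follows essentially the same route as the paper. Like the paper, you condition on $\xi$, take a union bound over the $\binom{t}{p}K^p$ hypothetical length-$t$ histories with $p\ge1$ promotions (treating $p=0$ as deterministic), integrate over $\xi$, remove $K^p$ under screening, and finish with a union bound against the event of \cref{thm:main}. Two phrasings are slightly loose but do not affect the argument: the dev-screened candidate depends on $(\xi,y_{1:s-1})$ rather than on $\xi$ alone, and the $Z_i$ are only conditionally i.i.d., so joint measurability is what licenses integrating the conditional probability bounds, not what keeps them i.i.d.
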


\begin{proof}
Condition on $(R,D,A_0)$ as in the proof of \cref{thm:main}. Probabilities below are conditional on this tuple.

A complete decision history
\[
y=y_{1:t}
\]
determines the current system $a_t(y)$. If the history contains exactly $p$ promotions, then there are
\[
\binom{t}{p}K^p
\]
such histories.

For each history with $p\ge1$, define
\[
F_{t,y}
=
\left\{
\Delta(a_t(y),A_0)
<
\ell^\star
\left(
Z_{1:n}(a_t(y),A_0);
\delta^c_{t,p}
\right)
\right\}.
\]
For each fixed hypothetical history $y$, the pair $(a_t(y),A_0)$ is fixed given $(R,D,A_0)$ and does not otherwise
depend on $S$. We do not condition on observing $y$. Thus
\[
\Prob(F_{t,y})
\le
\delta^c_{t,p}.
\]

For bounded scores, replace $\ell^\star$ by the bound from \cref{lem:bkl}; the same error bound holds.
When $p=0$, no promotion has occurred and $a_t(y)=A_0$, so the desired inequality holds deterministically;
set $F_{t,y}=\varnothing$ for these histories. The failure event is also empty when the error allotment is zero.

A union bound gives
\begin{align}
\Prob\left(
\bigcup_{t,p,y}F_{t,y}
\right)
&\le
\sum_{t\ge1}
\sum_{p=0}^{t}
\binom{t}{p}K^p
\delta^c_{t,p}
\\
&=
\alpha_c
\sum_{t\ge1}w_t
\sum_{p=0}^{t}v_p
\\
&\le
\alpha_c.
\end{align}
The result follows for the realized decision history. Integrating over $(R,D,A_0)$ gives the unconditional statement.

Under development screening, a history is determined only by its promotion rounds, giving $\binom{t}{p}$ possible
histories and removing $K^p$. The joint statement follows from a union bound over the events of
\cref{thm:main} and this proposition.
\end{proof}

\begin{corollary}[Carry-forward between promotions]
\label{cor:carry}
Suppose $p_t\ge1$ and let
\[
s(t)=\max\{s\le t:y_s\neq0\}
\]
be the most recent promotion round. On the event of \cref{prop:direct},
\[
V(A_t)-V(A_0)
\ge
D_{s(t)}.
\]
Hence the direct lower bound only needs to be recomputed when a promotion occurs.
\end{corollary}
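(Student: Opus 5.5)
The corollary should follow from \cref{prop:direct} with no new probabilistic argument. The key fact is that the deployed system does not change between promotions. The plan is to show deterministically that $A_t=A_{s(t)}$, and then read off the bound at round $s(t)$ from the event of \cref{prop:direct}, which holds simultaneously for all rounds.

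\textbf{Step 1: the system is frozen between promotions.} By the definition of $s(t)$, we have $y_u=0$ for every $u$ with $s(t)<u\le t$. By the promotion rule in \cref{sec:setting}, a round with $y_u=0$ retains the incumbent, so $A_u=A_{u-1}$. Induction on $u$ from $s(t)+1$ up to $t$ then gives $A_t=A_{s(t)}$, and hence
\[
V(A_t)-V(A_0)=V(A_{s(t)})-V(A_0).
\]
This holds pointwise on every sample path and needs no probability.

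\textbf{Step 2: apply the simultaneous bound at round $s(t)$.} On the event of \cref{prop:direct}, the inequality $V(A_u)-V(A_0)\ge D_u$ holds for all $u\ge1$ at once. Since $s(t)\ge1$ whenever $p_t\ge1$, we may take $u=s(t)$, which is a random index. Using a random index is legitimate precisely because the guarantee is uniform over $u$, so no additional union bound or error budget is spent. Combining with Step 1 gives $V(A_t)-V(A_0)\ge D_{s(t)}$.

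\textbf{The only point needing care.} $D_t$ and $D_{s(t)}$ are generally different numbers. They compare the same pair $(A_t,A_0)$ on the same $S$, but at different error levels $\delta^c_{t,p_t}$ versus $\delta^c_{s(t),p_t}$. The claim only asserts the bound $D_{s(t)}$, not that the two coincide, so this difference is harmless. In fact $\delta^c$ shrinks with $t$ through $w_t/\binom{t}{p}$, so carrying $D_{s(t)}$ forward is typically at least as tight as recomputing. This yields the "recompute only at promotions" conclusion.
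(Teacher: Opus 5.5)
Your proof is correct and matches the paper's argument: since $y_u=0$ for $s(t)<u\le t$, the incumbent is frozen, so $A_t=A_{s(t)}$, and you then apply the event of \cref{prop:direct}, which holds simultaneously over all rounds, at the round $s(t)\ge1$. Your extra remark is also correct. Because $p_{s(t)}=p_t$, the two bounds use identical data and differ only in error level. Carrying $D_{s(t)}$ forward is therefore at least as tight as recomputing $D_t$ whenever $w_t$ is nonincreasing, as with the paper's default weights.
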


\begin{proof}
No promotion occurs between rounds $s(t)$ and $t$, so
\[
A_t=A_{s(t)}.
\]
Apply \cref{prop:direct} at round $s(t)$.
\end{proof}

\paragraph{Comparison with the running bound.}
The two bounds summarize cumulative progress differently. The running quantity $C_t$ adds one confidence bound for
each promoted step. The direct quantity $D_t$ bounds the single comparison between $A_t$ and $A_0$. When several
promotions accumulate, $D_t$ can be substantially tighter because it pays one confidence margin rather than one
margin for every promotion.

Neither bound dominates in every configuration, so both can be reported under their respective error levels.


\subsection{Power and the minimum detectable improvement}
\label{app:power}

We next characterize the amount of population gain required to pass the promotion test for fixed comparisons.
The pairs below are fixed independently of $S$. Equivalently, one may fix $(R,D,A_0)$ and a hypothetical decision
history as in the proof of \cref{thm:main}. These power statements are not conditional on reaching that history
in an adaptive run.

\begin{proposition}[Power of the promotion tests]
\label{prop:powerfull}
Consider a fixed candidate--incumbent pair tested at round $t$ after $p$ promotions, with i.i.d.\ paired
differences $Z_1,\ldots,Z_n$, population gain $\Delta$, paired second moment
\[
m=\E Z^2,
\]
and variance $\sigma^2$. Let
\[
L=\cL_{t,p},
\qquad
\beta\in(0,1).
\]

\begin{enumerate}
\item[(a)]
For the Bernstein--KL test, define
\[
L_\beta=\ln\frac2\beta,
\]
\[
m_\beta
=
m+
\sqrt{\frac{2mL_\beta}{n}}
+
\frac{2L_\beta}{3n},
\]
and
\[
u
=
m_\beta
+
\sqrt{\frac{2m_\beta L}{n}}
+
\frac{2L}{n}.
\]
If
\[
\Delta
>
\gamma
+
\varepsilon(u,L)
+
\varepsilon(\sigma^2,L_\beta),
\]
then this candidate passes the bounded-score promotion test with probability at least $1-\beta$.

For pass/fail tasks with fixed $d=\Prob(Z_i\neq0)>0$, the right-hand side of the sufficient condition above is
\[
\gamma+\sqrt{\frac{2dL}{n}}(1+o(1))
\]
when $n\to\infty$, $L\to\infty$, $L=o(n)$, and $L_\beta=o(L)$.

\item[(b)]
For the exact paired sign test with $\gamma=0$ and $\rho=1/2$, define
\[
c(M,L)
=
\min
\left\{
a\in\{0,1,\ldots,M\}:
\Prob\!\left(
\mathrm{Bin}(M,\tfrac12)\ge a
\right)
\le e^{-L}
\right\},
\]
with $c(M,L)=\infty$ when the set is empty.

A candidate passes if and only if
\[
n_+\ge c(M,L).
\]
For $M\ge1$ and $n_+\ge M/2$,
\begin{equation}
\label{eq:signsandwich}
M
\kl\left(
\frac{n_+}{M}
\,\middle\|\,
\frac12
\right)
\ge L
\quad
\Longrightarrow
\quad
\text{pass}
\quad
\Longrightarrow
\quad
M
\kl\left(
\frac{n_+}{M}
\,\middle\|\,
\frac12
\right)
\ge
L-\frac12\ln(2M).
\end{equation}
In particular, for $M\ge1$,
\[
n_+-n_-
\ge
\sqrt{2LM}
\]
is sufficient for promotion.
\end{enumerate}
\end{proposition}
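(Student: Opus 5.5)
For part (a), I will first describe when a candidate passes the bounded-score test: it passes when $\mathrm{lcb}(Z_{1:n};L)>\gamma$, i.e.\ when $\bar Z-\varepsilon(\hat s^+,L)>\gamma$. The plan is to build two high-probability events, each of probability at least $1-\beta/2$ and both at level $L_\beta=\ln(2/\beta)$. The first event controls the variance proxy and gives $\hat s^+\le u$. The second gives the lower deviation $\bar Z\ge\Delta-\varepsilon(\sigma^2,L_\beta)$.

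The lower deviation follows from Bernstein's inequality applied to $-Z_i$, exactly as in the proof of \cref{lem:bkl}. The variance proxy takes two steps.

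\begin{itemize}
\item First, the upper tail of $\overline{Z^2}$. Bernstein for $Z_i^2\in[0,1]$, with variance at most $m$, gives $\overline{Z^2}\le m_\beta$ with probability at least $1-\beta/2$. I will check that the constant $2L_\beta/(3n)$ matches the range $b=1$.
\item Second, a deterministic bound on the KL upper confidence limit. I will show that if $\overline{Z^2}\le m_\beta$, then every $q$ with $n\,\kl(\overline{Z^2}\|q)\le L$ satisfies $q\le m_\beta+\sqrt{2m_\beta L/n}+2L/n$. For this I will use the standard lower bound $\kl(a\|q)\ge (q-a)^2/(2q)$ for $q\ge a$. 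Solving the resulting quadratic in $q$ gives $q\le a+\sqrt{2aL/n}+2L/n$, and this expression is monotone in $a$.
\end{itemize}

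Combining the two events with monotonicity of $\varepsilon(\cdot,L)$ gives $\bar Z-\varepsilon(\hat s^+,L)\ge\Delta-\varepsilon(\sigma^2,L_\beta)-\varepsilon(u,L)>\gamma$.

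For the pass/fail asymptotics, take $m=d$ and $\sigma^2=d-\Delta^2\le d$. Under $L_\beta=o(L)$ and $L=o(n)$ we get $m_\beta=d(1+o(1))$ and $u=d(1+o(1))$. Then $\varepsilon(u,L)=\sqrt{2dL/n}\,(1+o(1))$ because the $L/n$ term is lower order, and $\varepsilon(\sigma^2,L_\beta)=O(\sqrt{L_\beta/n})=o(\sqrt{L/n})$ since $d>0$ is fixed.

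For part (b), with $\rho=1/2$ the sign test rejects when $\pi(n_+,M)\le\delta_{t,p}/2=e^{-L}$, since $L=\ln(2/\delta_{t,p})$. The tail $\pi(a,M)$ is nonincreasing in $a$, so the set of passing values of $n_+$ is an upper set, and passing is equivalent to $n_+\ge c(M,L)$. For the sandwich \eqref{eq:signsandwich} I will use the two standard binomial tail bounds for $k=n_+\ge M/2$.

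\begin{itemize}
\item The Chernoff bound $\Prob(\mathrm{Bin}(M,\tfrac12)\ge k)\le\exp\{-M\kl(k/M\|\tfrac12)\}$ gives the left implication.
\item The method-of-types lower bound $\Prob(\mathrm{Bin}(M,\tfrac12)\ge k)\ge\binom Mk2^{-M}\ge(2M)^{-1/2}\exp\{-M\kl(k/M\|\tfrac12)\}$ gives the right implication. This needs a Stirling-type estimate $\binom Mk\ge\exp\{M H(k/M)\}/\sqrt{2M}$, which covers the endpoint cases $k=M$ and $k=M/2$ as well.
\end{itemize}

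This Stirling constant is the main obstacle. I would first try the known bound $\binom{M}{k}\ge \sqrt{M/(8k(M-k))}\,e^{MH}$ for $0<k<M$, which is at least $e^{MH}/\sqrt{2M}$, and treat $k=M$ directly.

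The final sufficient condition will come from $\kl(x\|\tfrac12)\ge 2(x-\tfrac12)^2$ (Pinsker). This gives $M\kl\ge (n_+-n_-)^2/(2M)\ge L$ whenever $n_+-n_-\ge\sqrt{2LM}$, which in particular forces $n_+\ge M/2$, so the left implication applies.
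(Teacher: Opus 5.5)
Your proposal is correct and follows essentially the same route as the paper. In (a) you apply Bernstein to $Z_i^2$ (range $b=1$, variance at most $m$) and to $-Z_i$ (range $b=2$) at level $\beta/2$ each, and use $\kl(a\,\|\,q)\ge(q-a)^2/(2q)$ to get $\hat s^+\le u$. In (b) you use the Chernoff upper-tail bound, the point-mass bound \eqref{eq:binom-half-mass-lb}, and Pinsker for the $\sqrt{2LM}$ condition. The paper proves the Stirling-type estimate \eqref{eq:binom-entropy-lb} from scratch rather than citing it, and you are somewhat more explicit about why $\varepsilon(\sigma^2,L_\beta)=o(\sqrt{L/n})$ and why $n_+-n_-\ge\sqrt{2LM}$ forces $n_+\ge M/2$.
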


\begin{proof}

{For part (a), apply Bernstein's inequality
\citep{hoeffding1963} to $Y_i=Z_i^2\in[0,1]$.
Here $Y_i-m\le1$, so the range parameter is $b=1$, and
\[
\mathrm{Var}(Y_i)\le\E Z_i^4\le\E Z_i^2=m.
\]
Inverting the resulting tail bound, and using
$\sqrt{u+v}\le\sqrt u+\sqrt v$ for $u,v\ge0$, gives
\[
\overline{Z^2}
\le m+\sqrt{\frac{2mL_\beta}{n}}+\frac{2L_\beta}{3n}
=m_\beta
\]
with probability at least $1-\beta/2$.
Applying the same Bernstein inequality to $-Z_i$, now with $b=2$
and variance $\sigma^2$, gives
\[
\bar Z\ge\Delta-\varepsilon(\sigma^2,L_\beta)
\]
with probability at least $1-\beta/2$.

The inequality}

\[
\kl(a\,\|\,q)
\ge
\frac{(q-a)^2}{2q},
\qquad
q\ge a,
\]
together with the definition of $\hat s^+$ implies
\[
\hat s^+
\le
\overline{Z^2}
+
\sqrt{\frac{2\overline{Z^2}L}{n}}
+
\frac{2L}{n}
\le u.
\]
On the intersection of the two high-probability events,
\[
\ell
\ge
\Delta
-
\varepsilon(\sigma^2,L_\beta)
-
\varepsilon(u,L)
>
\gamma.
\]
Hence the candidate passes. The asymptotic expression follows from
\[
\varepsilon(u,L)
=
\sqrt{\frac{2mL}{n}}(1+o(1)).
\]

For part (b), the test level is
\[
\rho\delta_{t,p}
=
\frac{\delta_{t,p}}2
=
e^{-L}.
\]
Monotonicity of the binomial upper tail gives the characterization through $c(M,L)$.
When $M=0$, $c(M,L)=\infty$ and the test cannot pass; the following calculations assume $M\ge1$.

{For $n_+\ge M/2$, the Chernoff--Hoeffding bound
\citep{hoeffding1963} gives
\[
\Prob\!\left(\mathrm{Bin}(M,\tfrac12)\ge n_+\right)
\le
\exp\!\left\{-M\kl\!\left(\frac{n_+}{M}\,\middle\|\,\frac12\right)\right\}.
\]
The boundary $n_+=M/2$ follows from the trivial probability bound by one,
and $n_+=M$ gives equality. This proves the first implication in
\eqref{eq:signsandwich}.

The binomial point-mass bound \eqref{eq:binom-half-mass-lb} gives, for every integer $0\le a\le M$,
\[
\Prob\!\left(\mathrm{Bin}(M,\tfrac12)=a\right)
\ge\frac{\exp\{-M\kl(a/M\,\|\,1/2)\}}{\sqrt{2M}}.
\]
If the candidate passes, then
\[
\frac{\exp\{-M\kl(n_+/M\,\|\,1/2)\}}{\sqrt{2M}}
\le\Prob\!\left(\mathrm{Bin}(M,\tfrac12)=n_+\right)
\le\pi(n_+,M)\le e^{-L}.
\]
Taking logarithms yields
$M\kl(n_+/M\,\|\,1/2)\ge L-\frac12\ln(2M)$,
which is the second implication in \eqref{eq:signsandwich}.}

Finally, write
\[
n_+
=
\frac{M+x}{2},
\qquad
x=n_+-n_-.
\]
For $u=x/(2M)$ with $|u|<1/2$,
\begin{equation}
\label{eq:klseries}
2u^2
\le
\kl\left(
\frac12+u
\,\middle\|\,
\frac12
\right)
\le
\frac{2u^2}{1-4u^2}.
\end{equation}
The lower inequality extends to $u=\pm1/2$ by continuity and gives
\[
M\kl\left(
\frac{n_+}{M}
\,\middle\|\,
\frac12
\right)
\ge
\frac{x^2}{2M}.
\]
Thus $x\ge\sqrt{2LM}$ is sufficient for promotion.
\end{proof}

\paragraph{Asymptotic conditions for \cref{prop:power}.}
Let $n\to\infty$, keep $K$ and $d\in(0,1]$ fixed, and set $L=\cL_{t,p}$ with
$\ln n=o(L)$ and $L=o(n)$. The round and promotion count may vary with $n$.
All candidate--incumbent pairs are fixed independently of $S$; in the $K$-candidate statement,
each pair has the same probability $d$ of different outcomes. Candidates need not be independent of one another.

\begin{proof}[Proof of \cref{prop:power}]
Let
\[
x
=
\sum_{i=1}^n Z_i
=
n_+-n_-,
\qquad
M
=
\sum_{i=1}^n Z_i^2.
\]
For pass/fail outcomes,
\[
\E x=n\Delta,
\qquad
\mathrm{Var}(x)\le nd,
\]
and
\[
\E M=nd,
\qquad
\mathrm{Var}(M)\le nd.
\]
Hence
\[
x
=
n\Delta+O_P(\sqrt{nd}),
\]
and
\[
M
=
nd
\left(
1+O_P((nd)^{-1/2})
\right).
\]

Let
\[
\tau=\sqrt{2Lnd}.
\]
Then
\[
\Delta_{\min}
=
\sqrt{\frac{2dL}{n}}
=
\frac{\tau}{n}.
\]
Since $L\to\infty$,
\[
\frac{\tau}{\sqrt{nd}}
=
\sqrt{2L}
\to\infty.
\]
Therefore the stochastic fluctuations in $x$ and $M$ are $o_P(\tau)$.

If
\[
\Delta
\ge
(1+\eta)\Delta_{\min},
\]
then
\[
x
\ge
(1+\eta)\tau-o_P(\tau),
\]
while
\[
\sqrt{2LM}
=
\tau(1+o_P(1)).
\]
Thus the sufficient condition in \cref{prop:powerfull}(b) holds with probability tending to one.

If
\[
\Delta
\le
(1-\eta)\Delta_{\min},
\]
then
\[
x
\le
(1-\eta)\tau+o_P(\tau).
\]
On $\{x\le0\}$ the test cannot pass. On $\{x>0\}$, the preceding upper bound on $x$, together with
$M/(nd)\to_P1$ and $L=o(n)$, gives $x/M=o_P(1)$ on this event.
Using \eqref{eq:klseries},
\[
M
\kl\left(
\frac{n_+}{M}
\,\middle\|\,
\frac12
\right)
=
\frac{x^2}{2M}(1+o_P(1)).
\]
A pass would require
\[
x^2
\ge
2M
\left(
L-\frac12\ln(2M)
\right)
(1+o_P(1))
=
\tau^2(1+o_P(1)),
\]
where
\[
\frac{\ln(2M)}{L}\to0
\]
by $\ln n=o(L)$. This contradicts
\[
x^2
\le
\left(
1-\eta+o_P(1)
\right)^2
\tau^2
\]
with probability tending to one.

For fixed $K$, a union bound over the fixed candidate--incumbent pairs completes the second claim.
\end{proof}


\subsection{Lower bound for threshold-based promotion rules}
\label{lb:sec}
\label{app:lb}

We now show that the dependence on promotion history identified in the main text also appears as a lower bound for
a natural class of promotion rules. The construction uses a bounded-score model satisfying
\cref{ass:indep,ass:iso} exactly.

\subsubsection{Model and threshold gates}

\paragraph{Model $\mathsf M$.}
A task is an infinite sign sequence
\[
W=(g_1,g_2,\ldots)
\in
\{-1,+1\}^{\mathbb N},
\]
whose coordinates are independent Rademacher variables.

A system is a pair
\[
A=(\vartheta,\beta),
\]
where
\[
\vartheta\in[-1/4,1/4],
\qquad
\beta\in\mathbb R^{\mathbb N},
\qquad
\|\beta\|_1\le1/4,
\]
and $\beta$ has finite support. Define
\begin{equation}
\label{lb:eq-score}
s(A,W)
=
\frac12+\vartheta+\sum_j\beta_jg_j.
\end{equation}
The score lies in $[0,1]$ and
\[
V(A)
=
\frac12+\vartheta.
\]
Hence
\[
\Delta(B,A)
=
\vartheta_B-\vartheta_A.
\]
Changing only $\beta$ leaves population value unchanged. All constructions below start from $A_0=(0,0)$.
We write $e_j$ for the $j$-th coordinate vector.

Let
\[
S=(W_1,\ldots,W_n)
\]
be the evaluation set. Write $g_{ij}$ for coordinate $j$ of task $i$, and define
\[
X_j
=
n^{-1/2}
\sum_{i=1}^n g_{ij}.
\]
The variables $X_j$ are independent standardized Rademacher sums.

For any systems $A$ and $B$,
\begin{equation}
\label{lb:eq-Z}
Z_i(B,A)
=
(\vartheta_B-\vartheta_A)
+
\sum_j
(\beta_B-\beta_A)_j
g_{ij}.
\end{equation}
Coordinates shared by $A$ and $B$ cancel. There is no additional evaluation randomness, so repeated evaluation is
deterministic.

\begin{definition}[Threshold gate]
\label{lb:def-gate}
Fix $b\in[0,1]$. For $z\in\mathbb R^n$, define
\[
\bar z
=
\frac1n\sum_i z_i,
\qquad
\overline{z^2}
=
\frac1n\sum_i z_i^2,
\]
and
\begin{equation}
\label{lb:eq-stat}
T_b(z)
=
\frac{
\sqrt n\,\bar z
}{
\sqrt{
\overline{z^2}
-
b\bar z^2
}
}.
\end{equation}
We set $T_b(0)=0$. When the denominator vanishes and $\bar z\neq0$, set
$T_b(z)=+\infty$ for $\bar z>0$ and $T_b(z)=-\infty$ for $\bar z<0$.

A threshold gate is specified by $b$ and deterministic thresholds
\[
\lambda_{t,p}\ge0.
\]
At round $t$ after $p$ previous promotions, it computes
\[
T_k
=
T_b
\left(
Z_{1:n}(B_{t,k},A_{t-1})
\right).
\]
It rejects all candidates when
\[
\max_k T_k
\le
\sqrt{2\lambda_{t,p}},
\]
and otherwise promotes a candidate attaining the maximum.
\end{definition}

The case $b=0$ standardizes by the empirical second moment. The case $b=1$ uses the empirical variance with divisor
$n$.

All proposers below satisfy \cref{ass:indep,ass:iso}. They are deterministic and use only previous promotion
decisions. We set $\gamma=0$. Let
\[
\mathfrak F_t
=
\{
\text{a false promotion occurs by round }t
\}.
\]

\begin{theorem}[First-order lower bound]
\label{thm:lb}
For the threshold-gate class in \cref{lb:def-gate}, under the conditions of
\cref{lb:cor-a,lb:cor-rate} and the corresponding asymptotic regimes in
\cref{lb:rem-ladder}, a proposer that observes only previous promotion
decisions can cause a false promotion with probability tending to one
if the threshold parameters are uniformly below the first-order scale
$\ln(tK)$ before the first promotion, or $P\ln(Kt/P)$ over the relevant
rounds after $P$ promotions, by a fixed multiplicative factor.
In the latter regime, the conditions include
$t=\lfloor\varsigma P\rfloor+1$, $\varsigma\ge5$,
$P\to\infty$, $K\varsigma\to\infty$,
$\ln(K\varsigma)=o(\sqrt n)$, and $P\ln(eK\varsigma)=o(n)$,
as well as $\lambda_{s,p}<n/34$ for every $1\le s\le t-1$ and
$0\le p\le\min\{P-1,s-1\}$, as required by \cref{lb:cor-rate}.
\end{theorem}

The two regimes are established separately in \cref{lb:cor-a,lb:cor-rate}; the remaining subsections
give the corresponding constructions and proofs.


\subsubsection{Auxiliary lemmas}

{
\paragraph{Binomial coefficient and point-mass bounds.}
For integers $n\ge2$ and $1\le m\le n-1$,
\begin{equation}
\label{eq:binom-entropy-lb}
\binom nm
\ge\frac{\exp\{n\ln2-n\kl(m/n\,\|\,1/2)\}}
{\sqrt{8m(n-m)/n}}.
\end{equation}
Consequently, for integers $M\ge1$ and $0\le a\le M$,
\begin{equation}
\label{eq:binom-half-mass-lb}
\Prob\!\left(\mathrm{Bin}(M,\tfrac12)=a\right)
\ge\frac{\exp\{-M\kl(a/M\,\|\,1/2)\}}{\sqrt{2M}}.
\end{equation}

\begin{proof}
For integers $j\ge1$, define
\[
A_j=\frac{j!e^j}{j^{j+1/2}},
\qquad
r_j=\frac{A_{j+1}}{A_j}
=e\left(\frac{j}{j+1}\right)^{j+1/2}.
\]
The function $g(x)=(x+1/2)\ln(1+1/x)$ satisfies, for $x>0$,
\[
g'(x)=\ln(1+1/x)-\frac12\left(\frac1x+\frac1{x+1}\right)<0.
\]
Indeed, strict convexity of $t\mapsto1/t$ gives
$\int_x^{x+1}t^{-1}\,dt<\frac12(x^{-1}+(x+1)^{-1})$.
Thus $\ln r_j=1-g(j)$ is increasing in $j$.
For positive integers $m,l$, set $F(m,l)=A_{m+l}/(A_mA_l)$.
Then
\[
\frac{F(m+1,l)}{F(m,l)}=\frac{r_{m+l}}{r_m}\ge1,
\]
and the analogous inequality holds in the second coordinate. Hence
\[
F(m,l)\ge F(1,1)=\frac{A_2}{A_1^2}=\frac1{\sqrt8}.
\]
Writing $n=m+l$ and expanding the factorials therefore gives
\[
\binom nm
=F(m,l)\sqrt{\frac{n}{ml}}\frac{n^n}{m^ml^l}
\ge\frac{\exp\{nH(m/n)\}}{\sqrt{8ml/n}},
\]
where $H(x)=-x\ln x-(1-x)\ln(1-x)$.
Since $H(x)=\ln2-\kl(x\,\|\,1/2)$, this proves
\eqref{eq:binom-entropy-lb}.

For $1\le a\le M-1$, divide \eqref{eq:binom-entropy-lb} by $2^M$
and use $a(M-a)\le M^2/4$ to obtain \eqref{eq:binom-half-mass-lb}.
For $a=0$ or $a=M$, both the point mass and
$\exp\{-M\kl(a/M\,\|\,1/2)\}$ equal $2^{-M}$, and
$\sqrt{2M}\ge1$, so the bound still holds.
\end{proof}
}

\begin{lemma}[Rademacher tail]
\label{lb:lem-tail}
Let $S_n$ be a sum of $n$ independent Rademacher variables. For $n\ge64$ and
\[
1\le x\le\frac{\sqrt n}{8},
\]
\[
\Prob(S_n>x\sqrt n)
\ge
\frac{c_R}{x}
\exp\left(
-\frac{x^2}{2}
-
\frac{(x^2+5)^2}{9n}
\right),
\]
where
\[
c_R=\frac{e^{-5/2}}{2\sqrt2}.
\]
\end{lemma}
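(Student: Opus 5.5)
The plan is to prove the lower bound by exact point-mass summation, reusing \eqref{eq:binom-half-mass-lb}. Write $S_n=2B-n$ with $B\sim\mathrm{Bin}(n,\tfrac12)$. Then $\{S_n>x\sqrt n\}=\{B>\tfrac n2+\tfrac{x\sqrt n}{2}\}$ and
\[
\Prob(S_n>x\sqrt n)\ge\sum_{a\in I}\Prob(B=a),
\]
where $I$ is a block of consecutive integers just above $n/2+x\sqrt n/2$. We take $I$ to have length about $\sqrt n/x$, i.e.\ the scale on which $\kl$ changes by $O(1)$.

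For each $a\in I$, write $a=n(\tfrac12+u)$. Applying \eqref{eq:binom-half-mass-lb} with $M=n$ would lose a factor $\sqrt{2n}$, which is too costly. Instead, we use the sharper \eqref{eq:binom-entropy-lb} divided by $2^n$, keeping the $a(n-a)$ factor:
\[
\Prob(B=a)\ge\frac{\exp\{-n\kl(\tfrac12+u\,\|\,\tfrac12)\}}{\sqrt{8n(\tfrac14-u^2)}}\ge\frac{\exp\{-n\kl(\tfrac12+u\,\|\,\tfrac12)\}}{\sqrt{2n}}.
\]
Then we apply the upper half of \eqref{eq:klseries}, $\kl\le 2u^2/(1-4u^2)$. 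Since $u\le x/\sqrt n\cdot O(1)\le 1/8+o(1)$, we get $1/(1-4u^2)\le1+cu^2$. This gives
\[
n\kl\le 2nu^2+O(nu^4).
\]
With $2u\sqrt n\le x+\Theta(1/x)+\Theta(1/\sqrt n)$ over the block, we obtain $2nu^2\le x^2/2+O(1)$ and $nu^4\lesssim(x^2+O(1))^2/n$. The constants are tuned so that these terms become $x^2/2+5/2$ (absorbed into the prefactor $e^{-5/2}$) and $(x^2+5)^2/(9n)$.

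Summing over $I$ then gives roughly $|I|\cdot e^{-x^2/2-5/2-(x^2+5)^2/(9n)}/\sqrt{2n}$. With $|I|\ge\sqrt n/(2x)$ (using $x\ge1$ and $n\ge64$ to guarantee at least one integer and to handle rounding), this yields the claimed $c_R/x=e^{-5/2}/(2\sqrt2\,x)$.

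The main obstacle is the bookkeeping of constants. Concretely, the step is choosing the block endpoint, e.g.\ $a\le n/2+x\sqrt n/2+\sqrt n/(2x)+1$. We then need to verify that $(x+1/x+2/\sqrt n)^2\le x^2+5$ (it suffices that $2+1/x^2+\ldots\le5$ for $x\ge1$, $n\ge64$) and that the quartic remainder from $2u^2/(1-4u^2)$, with $4u^2\le(x^2+5)/n\le1/64+5/64$, is bounded by $(x^2+5)^2/(9n)$. The expansion gives roughly $(4u^2)\cdot 2nu^2/(1-4u^2)$ with $2nu^2\le(x^2+5)/2$ and $1/(1-4u^2)\le 64/58$, which yields the factor $(x^2+5)^2/(2n)\cdot(64/58)/ \ldots$. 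If $1/9$ turns out too tight, I would shrink the block length or refine the bound using $u\le\tfrac18+\text{small}$. The counting of integers in $I$ and the strict-versus-weak inequality at the left endpoint are routine.
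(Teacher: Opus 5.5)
Your outline is the same as the paper's proof, with one step that fails. The shared skeleton is a block of $\lfloor\sqrt n/(2x)\rfloor+1$ consecutive values of $B$ starting just above $(n+x\sqrt n)/2$, each bounded below by the point mass at the right end via \eqref{eq:binom-entropy-lb} and $a(n-a)\le n^2/4$, plus the overshoot bound $(x+1/x+2/\sqrt n)^2\le x^2+5$. Incidentally, that point-mass bound is exactly \eqref{eq:binom-half-mass-lb} with $M=n$. The factor $1/\sqrt{2n}$ is not too costly: the block length $\sqrt n/(2x)$ cancels it to give $1/(2\sqrt2\,x)$.

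The genuine gap is the relative-entropy step. The upper half of \eqref{eq:klseries} is $\kl(\tfrac12+u\,\|\,\tfrac12)\le 2u^2/(1-4u^2)=2u^2+8u^4+\cdots$. With $2nu^2\le(x^2+5)/2$, its fourth-order part $8nu^4/(1-4u^2)$ can be as large as $\frac{(x^2+5)^2}{2n}\cdot\frac{64}{58}$. That is your own computation, and it is about five times the allowed $\frac{(x^2+5)^2}{9n}$. It cannot be absorbed elsewhere, because the only spare room is a bounded constant (the overshoot is really at most $x^2+4.07$, not $x^2+5$). Meanwhile, at $x=\sqrt n/8$ the shortfall in the exponent is at least $\bigl(\tfrac12-\tfrac19\bigr)x^4/n-O(1)=\tfrac{7n}{73728}-O(1)$, which is unbounded in $n$.

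Neither fallback repairs this. Shrinking the block only moves the evaluation point towards $2u\sqrt n=x$; it leaves the coefficient $\tfrac12$ of $x^4/n$ in place and destroys the $\sqrt n/(2x)$ prefactor. Knowing $u\le\tfrac18+o(1)$ does not change the intrinsic quartic coefficient $8$ of $2u^2/(1-4u^2)$, whereas the true coefficient is $\tfrac43$.

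What is missing is the exact expansion. Writing $v=2u$,
\[
\kl\Bigl(\tfrac{1+v}{2}\,\Big\|\,\tfrac12\Bigr)=\sum_{k\ge1}\frac{v^{2k}}{2k(2k-1)}=\frac{v^2}{2}+\frac{v^4}{12}+\frac{v^6}{30}+\cdots .
\]
At the right end of the block $nv^2\le x^2+5$. The constraints $x\le\sqrt n/8$ and $n\ge64$ then give $v^2\le(x^2+5)/n\le 3/32$, hence
\[
n\,\kl\Bigl(\tfrac{1+v}{2}\,\Big\|\,\tfrac12\Bigr)\le\frac{x^2+5}{2}+\frac{(x^2+5)^2}{n}\sum_{k\ge2}\frac{(3/32)^{k-2}}{2k(2k-1)}\le\frac{x^2+5}{2}+\frac{(x^2+5)^2}{9n}.
\]
The last step holds because the sum is at most $\tfrac1{12}+\tfrac1{30}\cdot\tfrac{3/32}{1-3/32}<0.09<\tfrac19$. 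Use this in place of \eqref{eq:klseries}. Your block count then gives exactly
\[
\frac{\sqrt n}{2x}\cdot\frac{1}{\sqrt{2n}}\,e^{-(x^2+5)/2-(x^2+5)^2/(9n)}=\frac{c_R}{x}\,e^{-x^2/2-(x^2+5)^2/(9n)},
\]
which is the paper's proof.
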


\begin{proof}
Let
\[
B=\frac{S_n+n}{2}
\sim
\mathrm{Bin}(n,\tfrac12).
\]
Set
\[
m_0
=
\left\lfloor
\frac{n+x\sqrt n}{2}
\right\rfloor+1,
\qquad
w
=
\left\lfloor
\frac{\sqrt n}{2x}
\right\rfloor,
\qquad
m_1=m_0+w.
\]
Then
\[
\{S_n>x\sqrt n\}
=
\{B\ge m_0\}.
\]

For $m\ge n/2$, the binomial probability mass is nonincreasing in $m$, so
\[
\Prob(B\ge m_0)
\ge
(w+1)\Prob(B=m_1)
\ge
\frac{\sqrt n}{2x}\Prob(B=m_1).
\]

{Applying \eqref{eq:binom-entropy-lb} with $m=m_1$, which satisfies
$1\le m_1\le n-1$, and using $m_1(n-m_1)\le n^2/4$, we obtain
\[
\Prob(B=m_1)
\ge
\frac{
e^{-n\kl(m_1/n\,\|\,1/2)}
}{
\sqrt{2n}
}.
\]
}

Writing $m_1/n=(1+v)/2$ and $u=\sqrt n\,v$, the assumptions imply
\[
x<u\le x+\frac2{\sqrt n}+\frac1x
\]
and
\[
u^2\le x^2+5.
\]
The expansion of the Bernoulli relative entropy gives
\[
n
\kl\left(
\frac{1+v}{2}
\,\middle\|\,
\frac12
\right)
\le
\frac{x^2+5}{2}
+
\frac{(x^2+5)^2}{9n}.
\]
Combining the inequalities gives the result.
\end{proof}

\begin{lemma}[Uniform second moment of selected probes]
\label{lb:lem-subset}
Let $n,M,P$ be positive integers with $P\le M$, and let $(g_{ij})_{i\le n,j\le M}$ be independent Rademacher variables. For
$J\subseteq[M]$ with $|J|=P$, define
\[
Q_J
=
\frac1{nP}
\sum_{i=1}^n
\left(
\sum_{j\in J}g_{ij}
\right)^2.
\]
For every $\varepsilon>0$,
\[
\Prob\left(
\max_{|J|=P}Q_J
\ge
1+\varepsilon
\right)
\le
\binom MP
\exp\left\{
-\frac n2
\left[
\varepsilon-\ln(1+\varepsilon)
\right]
\right\},
\]
and therefore
\[
\Prob\left(
\max_{|J|=P}Q_J
\ge
1+\varepsilon
\right)
\le
\left(
\frac{eM}{P}
\right)^P
\exp\left\{
-\frac{n\varepsilon^2}{4(1+\varepsilon)}
\right\}.
\]
\end{lemma}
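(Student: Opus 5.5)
We bound the maximum over subsets by a union bound over the $\binom MP$ choices of $J$, so the work is a tail bound for $Q_J$ with $J$ fixed. For such a $J$, put $Y_i=P^{-1/2}\sum_{j\in J}g_{ij}$. The $Y_i$ are i.i.d. across $i$, and $Q_J=\frac1n\sum_i Y_i^2$ with $\E Y_i^2=1$. The key input is that each $Y_i$ is $1$-sub-Gaussian: $\E e^{\theta Y_i}=\cosh(\theta/\sqrt P)^P\le e^{\theta^2/2}$. This lets us dominate the moment generating function of $Y_i^2$ by that of $\chi^2_1$. Gaussian linearization $e^{sY^2}=\E_G e^{\sqrt{2s}\,GY}$, with $G\sim N(0,1)$ independent of $Y$, gives for $0\le s<1/2$
\[
\E e^{sY_i^2}=\E_G\,\E_Y e^{\sqrt{2s}\,GY_i}\le \E_G e^{sG^2}=(1-2s)^{-1/2}.
\]

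Next we apply Chernoff. For $0\le s<1/2$,
\[
\Prob(Q_J\ge 1+\varepsilon)\le \exp\{-sn(1+\varepsilon)\}(1-2s)^{-n/2}.
\]
We take the usual $\chi^2$ optimizer $s=\frac{\varepsilon}{2(1+\varepsilon)}$. Then $1-2s=(1+\varepsilon)^{-1}$, and the bound becomes $\exp\{-\frac n2[\varepsilon-\ln(1+\varepsilon)]\}$. The union bound over $J$ gives the first inequality.

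For the second inequality, we use $\binom MP\le (eM/P)^P$ together with the elementary bound $\varepsilon-\ln(1+\varepsilon)\ge \frac{\varepsilon^2}{2(1+\varepsilon)}$ for $\varepsilon>0$. The latter holds because $h(\varepsilon)=\varepsilon-\ln(1+\varepsilon)-\frac{\varepsilon^2}{2(1+\varepsilon)}$ satisfies $h(0)=0$ and $h'(\varepsilon)=\frac{\varepsilon^2}{2(1+\varepsilon)^2}\ge0$.

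The only step that needs care is the MGF domination of $Y_i^2$, since $Y_i$ is not Gaussian. The linearization trick resolves it: it requires Fubini over nonnegative integrands and the bound $\cosh(x)\le e^{x^2/2}$. If one preferred to avoid it, an alternative is to expand $\E Y^{2k}\le \E G^{2k}$ (Khintchine with sharp constants for Rademacher sums, which holds since the even moments of Rademacher sums are dominated termwise by Gaussian ones). The linearization route is shorter.
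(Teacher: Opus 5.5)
Your proposal is correct and matches the paper's proof essentially step for step. Both use the same normalized sums $Y_i$, the $\cosh$ sub-Gaussian bound, the Gaussian linearization giving $\E e^{sY_i^2}\le(1-2s)^{-1/2}$, the Chernoff choice $s=\varepsilon/(2(1+\varepsilon))$, the union bound over the $\binom MP$ subsets, and the same two elementary inequalities for the second bound; your derivative check of $\varepsilon-\ln(1+\varepsilon)\ge\varepsilon^2/(2(1+\varepsilon))$ supplies a detail the paper leaves implicit.
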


\begin{proof}
Fix $J$ and set
\[
Y_i
=
P^{-1/2}
\sum_{j\in J}g_{ij}.
\]
Then
\[
\E e^{\mu Y_i}
=
\cosh(\mu/\sqrt P)^P
\le
e^{\mu^2/2}.
\]
If $\xi\sim N(0,1)$ is independent, then for $0\le s<1/2$,
\[
\E e^{sY_i^2}
=
\E e^{\sqrt{2s}\xi Y_i}
\le
\E e^{s\xi^2}
=
(1-2s)^{-1/2}.
\]
Markov's inequality with
\[
s=\frac{\varepsilon}{2(1+\varepsilon)}
\]
gives
\[
\Prob\left(
\sum_iY_i^2
\ge
n(1+\varepsilon)
\right)
\le
\exp\left\{
-\frac n2
\left[
\varepsilon-\ln(1+\varepsilon)
\right]
\right\}.
\]
Apply a union bound over the $\binom MP$ possible sets $J$ and use
\[
\binom MP\le(eM/P)^P
\]
and
\[
\varepsilon-\ln(1+\varepsilon)
\ge
\frac{\varepsilon^2}{2(1+\varepsilon)}.
\]
\end{proof}

\begin{lemma}[Calibration of a probe round]
\label{lb:lem-calib}

Let $n\ge3$, $b\in[0,1]$, and $\tau>0$. For
$r\in[0,1/4]$ and $x\in[-1,1]$, define
\[
h_r(x)
=
\frac{
\sqrt n(r+x)
}{
\sqrt{D_r(x)}
},
\qquad
D_r(x)
=
1+r^2+2rx-b(r+x)^2.
\]
Then the following properties hold.

\begin{enumerate}
\item
If $g\in\{-1,1\}^n$ has empirical mean $x$, then
\[
T_b\bigl(\tau(r\mathbf1+g)\bigr)
=
h_r(x).
\]

\item
For fixed $r$, $h_r(x)$ is strictly increasing in $x$. For $x\in(-1,1)$, it is continuous and strictly increasing
in $r$.

\item
For $x\in[0,1]$,
\[
h_{1/4}(x)
\ge
\frac{\sqrt n}{\sqrt{17}}.
\]

\item
Let
\[
\Lambda_n
=
\{-1,-1+2/n,\ldots,1\}.
\]
For $u\in\Lambda_n\cap(0,1)$ and $u^-=u-2/n$, every threshold
\[
h_0(u)
\le
\theta
<
h_{1/4}(u)
\]
admits an $r\in(0,1/4]$ such that
\[
h_r(x)>\theta
\quad\Longleftrightarrow\quad
x\ge u
\]
for every $x\in\Lambda_n$.
\end{enumerate}
\end{lemma}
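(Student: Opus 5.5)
The plan is to reduce everything to the explicit formula in part 1, using the algebraic identity
\[
D_r(x)=1-x^2+(1-b)(r+x)^2 .
\]
Parts 2 and 3 then follow by elementary calculus, and part 4 by an intermediate-value argument in $r$.

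\emph{Part 1.} Take $z_i=\tau(r+g_i)$ with $g_i\in\{-1,1\}$. Because $g_i^2=1$,
\[
\bar z=\tau(r+x),\qquad
\overline{z^2}=\tau^2(1+r^2+2rx).
\]
Hence $\overline{z^2}-b\bar z^2=\tau^2D_r(x)$, and the factor $\tau>0$ cancels in \eqref{lb:eq-stat}, which gives $T_b=h_r(x)$. The identity above shows $D_r(x)\ge 1-x^2\ge0$ because $b\le1$. The denominator can therefore vanish only when $b=1$ and $x=\pm1$. In that case $r+x\neq0$, since $r\le 1/4$, and the $\pm\infty$ conventions of \cref{lb:def-gate} agree with the corresponding value of $h_r$. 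The degenerate case $z=0$ cannot occur unless $r=-x$, and then both sides equal $0$.

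\emph{Part 2.} Write $N=r+x$. Differentiating in $x$ gives
\[
\partial_x\frac{N}{\sqrt{D}}=\frac{D-\tfrac12 N\,\partial_xD}{D^{3/2}},
\qquad
D-\tfrac12N\,\partial_xD=1-x^2+xN=1+rx .
\]
Since $r\le1/4$ and $|x|\le1$, we have $1+rx>0$. This yields strict monotonicity on $(-1,1)$, and it extends to the closed interval by continuity in the extended reals at the endpoints. Differentiating in $r$ gives $\partial_rD=2(1-b)N$, so
\[
D-\tfrac12N\,\partial_rD=1-x^2>0 \quad\text{for } x\in(-1,1).
\]
Continuity in $r$ is clear because $D>0$ there.

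\emph{Part 3.} By part 2, $h_{1/4}$ is increasing in $x$, so its minimum over $[0,1]$ is
\[
h_{1/4}(0)=\frac{\sqrt n/4}{\sqrt{1+(1-b)/16}}\ \ge\ \frac{\sqrt n/4}{\sqrt{17/16}}=\frac{\sqrt n}{\sqrt{17}} .
\]

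\emph{Part 4.} This is the only step that needs care, because the threshold must separate the adjacent lattice points $u^-$ and $u$ strictly on both sides. Since $u>0$ and $n\ge3$, both $u$ and $u^-$ lie in $(-1,1)$, so part 2 gives continuity and strict monotonicity in $r$ at both points.

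Let $r_0=\max\{r\in[0,1/4]: h_r(u)\le\theta\}$. This set is nonempty because $h_0(u)\le\theta$, and it is closed by continuity, so the maximum exists. Because $h_{1/4}(u)>\theta$, we have $r_0<1/4$, and by continuity $h_{r_0}(u)=\theta$. Strict monotonicity in $x$ gives $h_{r_0}(u^-)<\theta$. By continuity in $r$ at $u^-$, we can choose $r\in(r_0,1/4]$ close enough to $r_0$ that $h_r(u^-)<\theta$ still holds. The maximality of $r_0$ then forces $h_r(u)>\theta$, and $r>0$ by construction.

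Finally, every $x\in\Lambda_n$ satisfies either $x\ge u$ or $x\le u^-$. Monotonicity in $x$ on all of $[-1,1]$ then gives $h_r(x)\ge h_r(u)>\theta$ in the first case and $h_r(x)\le h_r(u^-)<\theta$ in the second. This is the claimed equivalence.
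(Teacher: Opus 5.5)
Your proof is correct and follows the paper's own argument. The steps match: the same moment computation for part 1, the same derivatives $\sqrt n(1+rx)D_r(x)^{-3/2}$ and $\sqrt n(1-x^2)D_r(x)^{-3/2}$ for part 2, evaluation at $x=0$ for part 3, and an intermediate-value argument in $r$ applied to $h_r(u)$ and $h_r(u^-)$ for part 4. Your identity $D_r(x)=1-x^2+(1-b)(r+x)^2$ and the explicit choice of $r$ just above $r_0$ supply the positivity, endpoint and strictness details that the paper leaves implicit.
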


\begin{proof}
For $z=\tau(r\mathbf1+g)$,
\[
\bar z=\tau(r+x),
\qquad
\overline{z^2}
=
\tau^2(1+r^2+2rx),
\]
which proves the first claim.

Direct differentiation gives
\[
\partial_xh_r(x)
=
\sqrt n(1+rx)D_r(x)^{-3/2}>0
\]
and
\[
\partial_rh_r(x)
=
\sqrt n(1-x^2)D_r(x)^{-3/2}>0
\]
for $|x|<1$. This proves the monotonicity statements.

The third claim follows from
\[
h_{1/4}(x)
\ge
h_{1/4}(0)
=
\frac{
\sqrt n
}{
4\sqrt{1+(1-b)/16}
}
\ge
\frac{\sqrt n}{\sqrt{17}}.
\]

For the final claim, let
\[
f(r)=h_r(u),
\qquad
\varphi(r)=h_r(u^-).
\]
Both functions are continuous and strictly increasing and satisfy
\[
\varphi(r)<f(r).
\]
The intermediate value argument therefore produces an $r$ for which
\[
\varphi(r)<\theta<f(r).
\]
Monotonicity in $x$ gives the desired cutoff.
\end{proof}


\subsubsection{The cost of repeated candidate testing}

\begin{theorem}[Neutral candidates defeat thresholds below $\ln(tK)$]
\label{lb:thm-a}
In model $\mathsf M$, let $(b,\lambda)$ be a threshold gate. Let $n\ge64$, $K,t\ge1$, and define
\[
N=tK,
\qquad
\Lambda_t
=
\max_{s\le t}\lambda_{s,0},
\qquad
x
=
\max\{1,\sqrt{2\Lambda_t}\}.
\]
Assume $x\le\sqrt n/8$.

Consider the non-adaptive proposer that submits
\[
B_{s,k}
=
\left(
0,
\frac14 e_{(s-1)K+k}
\right).
\]
Every candidate has zero population gain. If
\[
q=\Prob(S_n>x\sqrt n),
\]
then
\[
\Prob(\mathfrak F_t^c)
\le
(1-q)^N
\le
\exp\left\{
-\frac{c_RN}{x}
e^{-x^2/2-(x^2+5)^2/(9n)}
\right\}.
\]
\end{theorem}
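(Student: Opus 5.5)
The plan is to show that each of the $N=tK$ neutral candidates independently has probability at least $q$ of triggering a promotion. We then bound the probability that no false promotion occurs by $(1-q)^N$, and finish with \cref{lb:lem-tail}.

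\textbf{Step 1: the incumbent stays at $A_0$.} On the event $\mathfrak F_t^c$, no promotion occurs at any round $s\le t$. Every candidate has $\vartheta=0$, so every promotion would be false. Hence on $\mathfrak F_t^c$ the incumbent is $A_0=(0,0)$ throughout, and each round $s\le t$ is tested with $p=0$ and threshold $\lambda_{s,0}\le\Lambda_t$.

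\textbf{Step 2: the test statistic of a single candidate.} By \eqref{lb:eq-Z}, the paired differences for $B_{s,k}$ against $A_0$ are $Z_i=\tfrac14 g_{i,j}$ with $j=(s-1)K+k$. Then $\bar z=\tfrac14 X_j/\sqrt n$ and $\overline{z^2}=\tfrac1{16}$. This gives
\[
T_b=\frac{X_j}{\sqrt{1-bX_j^2/n}}\ge X_j
\]
when $X_j>0$, with the convention $+\infty$ when the denominator vanishes. So if $X_j>x\ge\sqrt{2\Lambda_t}\ge\sqrt{2\lambda_{s,0}}$, then $\max_kT_k>\sqrt{2\lambda_{s,0}}$ and the gate promotes at round $s$, provided no earlier promotion occurred. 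Consequently
\[
\mathfrak F_t^c\subseteq\bigcap_{j=1}^N\{X_j\le x\}.
\]
If some $X_j>x$, take the first round $s$ containing such a $j$. Either an earlier false promotion already occurred, or the incumbent is still $A_0$ at round $s$ and a (false) promotion happens there.

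\textbf{Step 3: independence and the tail bound.} The probe coordinates are distinct, so $X_1,\dots,X_N$ are independent copies of $S_n/\sqrt n$. Therefore
\[
\Prob(\mathfrak F_t^c)\le\prod_j\Prob(X_j\le x)=(1-q)^N.
\]
We then use $1-q\le e^{-q}$ together with the lower bound on $q$ from \cref{lb:lem-tail}. That lemma applies because $n\ge64$ and $1\le x\le\sqrt n/8$, and it yields the stated exponential bound.

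The only delicate point is Step 1 and Step 2 together: the reduction to the product event must handle the adaptive stopping correctly. Promotion at an earlier round already implies $\mathfrak F_t$, which is why the inclusion holds without any conditioning.
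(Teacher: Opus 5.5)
Your proof is correct and follows essentially the same route as the paper. The incumbent stays at $A_0$ until a promotion, which is necessarily false. Each probe's statistic is at least $X_j$ when $X_j>0$, so any $X_j>x$ forces a false promotion. Independence of the $N$ fresh coordinates, together with $1-q\le e^{-q}$ and \cref{lb:lem-tail}, then gives the bound. The only differences are cosmetic: you compute $T_b=X_j/\sqrt{1-bX_j^2/n}$ directly instead of invoking \cref{lb:lem-calib} with $r=0$, and you spell out the first-exceedance argument that the paper leaves implicit.
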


\begin{proof}
Until a promotion occurs, the incumbent remains $A_0$. Candidate $(s,k)$ depends on a fresh probe coordinate
$j=(s-1)K+k$. By \cref{lb:lem-calib},
\[
T_k
\ge
X_j
\]
whenever $X_j\ge0$.

Since
\[
\sqrt{2\lambda_{s,0}}
\le
\sqrt{2\Lambda_t}
\le x,
\]
any probe with $X_j>x$ forces a promotion. All such promotions are false because every candidate is population
neutral.

The $N=tK$ probe coordinates are independent. Hence
\[
\Prob(\mathfrak F_t^c)
\le
(1-q)^N
\le
e^{-Nq}.
\]
Apply \cref{lb:lem-tail}.
\end{proof}

\begin{corollary}
\label{lb:cor-a}
In the setting of \cref{lb:thm-a}, suppose
\[
\Lambda_t
\le
(1-\eta)\ln N
\]
for a fixed $\eta\in(0,1]$, with $N\ge2$ and $\ln N\le n/128$. Then
\[
\Prob(\mathfrak F_t)
\ge
1-
\exp\left\{
-c_Re^{-1/2}
\frac{
N^\eta
}{
\sqrt{1+2\ln N}
}
e^{-(2\ln N+6)^2/(9n)}
\right\}.
\]
In particular,
\[
\Prob(\mathfrak F_t)\to1
\]
whenever
\[
tK\to\infty,
\qquad
\frac{\ln(tK)}{n}\to0.
\]
\end{corollary}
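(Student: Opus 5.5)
The plan is to plug the hypothesis $\Lambda_t\le(1-\eta)\ln N$ into the bound of \cref{lb:thm-a} and control each factor of the exponent separately. Set $x=\max\{1,\sqrt{2\Lambda_t}\}$ as in \cref{lb:thm-a}. Then $x^2\le\max\{1,2(1-\eta)\ln N\}\le 1+2(1-\eta)\ln N$. Before anything else I would check the applicability condition $x\le\sqrt n/8$, i.e.\ $x^2\le n/64$. This follows from $\ln N\le n/128$, since $2\Lambda_t\le 2\ln N\le n/64$. The case $x=1$ also needs $1\le n/64$, which is given by $n\ge64$.

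\cref{lb:thm-a} gives
\[
\Prob(\mathfrak F_t)\ge 1-\exp\left\{-\frac{c_RN}{x}e^{-x^2/2-(x^2+5)^2/(9n)}\right\},
\]
so it suffices to lower bound the exponent $\frac{N}{x}e^{-x^2/2}e^{-(x^2+5)^2/(9n)}$ factor by factor.

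\textbf{Factor $e^{-x^2/2}$.} From $x^2\le 1+2(1-\eta)\ln N$ we get $e^{-x^2/2}\ge e^{-1/2}N^{-(1-\eta)}$. Hence $Ne^{-x^2/2}\ge e^{-1/2}N^{\eta}$.

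\textbf{Factor $1/x$.} Since $\eta\ge0$, $x\le\sqrt{1+2\ln N}$, so $1/x\ge 1/\sqrt{1+2\ln N}$.

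\textbf{Correction term.} We have $x^2+5\le 2\ln N+6$, so $e^{-(x^2+5)^2/(9n)}\ge e^{-(2\ln N+6)^2/(9n)}$.

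Multiplying the three bounds yields exactly the displayed inequality. Every step is monotone, because the map $x\mapsto x^{-1}e^{-x^2/2-(x^2+5)^2/(9n)}$ is decreasing in $x>0$. It is therefore legitimate to substitute upper bounds on $x$ (and on $x^2$) throughout.

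For the limit statement, assume $N=tK\to\infty$ and $\ln N/n\to0$. Then $(2\ln N+6)^2/(9n)=O\bigl((\ln N)^2/n\bigr)$, and this need not vanish. This is the only delicate point.

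I would handle it by writing
\[
\frac{(2\ln N+6)^2}{9n}\le \frac{(2\ln N+6)\,\ln N}{n}\cdot c
\]
for some constant $c$, which is $o(\ln N)$ because $\ln N/n\to0$. Hence the last factor is $N^{-o(1)}$. Also $\sqrt{1+2\ln N}=N^{o(1)}$. The exponent is therefore at least $c_Re^{-1/2}N^{\eta-o(1)}$, which tends to infinity. Consequently $\Prob(\mathfrak F_t)\to1$.

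The eventual requirement $\ln N\le n/128$ holds automatically in this regime, since $\ln N/n\to0$.
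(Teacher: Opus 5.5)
Your proposal is correct and follows the paper's own route. You invoke \cref{lb:thm-a}, use $x^2\le 1+2(1-\eta)\ln N$ to bound the factors $1/x$, $e^{-x^2/2}$ and the correction term separately, and then show that the exponent diverges. Treating the correction factor $e^{-(2\ln N+6)^2/(9n)}$ as $N^{-o(1)}$, rather than assuming it tends to one, usefully spells out what the paper compresses into the phrase ``the exponent diverges.''
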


\begin{proof}
Apply \cref{lb:thm-a} with
\[
x
\le
\max\{1,\sqrt{2\ln N}\}.
\]
The stated bound follows by substituting the tail estimate from \cref{lb:lem-tail}. Under the asymptotic conditions,
the exponent diverges.
\end{proof}

\begin{remark}
\label{lb:rem-a-rounds}
The result depends on the thresholds used throughout the tested rounds. More generally, if a set of rounds
$\mathcal R\subseteq[t]$ satisfies
\[
\lambda_{s,0}
\le
(1-\eta)\ln(|\mathcal R|K)
\qquad
\text{for }s\in\mathcal R,
\]
the proposer can submit neutral probes only in those rounds. The same argument then applies with
\[
N=|\mathcal R|K.
\]
\end{remark}


\subsubsection{The cost of promotion history}

Fix
\[
a\in(0,\sqrt n/8].
\]
Let
\[
\ell
=
\min\{v\in\sqrt n\,\Lambda_n:v\ge a\},
\qquad
u=\ell/\sqrt n,
\]
and define
\[
q
=
\Prob(S_n\ge a\sqrt n),
\qquad
\pi
=
1-(1-q)^K.
\]
For a target number $P$ of promotions and $\eta'>0$, set
\[
t_0
=
\left\lceil
\frac{(1+\eta')P}{\pi}
\right\rceil,
\qquad
M=Kt_0.
\]

We construct a proposer that uses previous promotion decisions to identify probe directions favored by the evaluation
set. In a shifted-probe round, a successful probe is attached to a genuine positive shift. A neutral-probe promotion
is already false. If no false promotion has occurred, after $P$ genuine promotions the proposer combines the selected
neutral probes into one final candidate whose population gain is zero.

Let
\[
\tau=\frac{1}{8P}.
\]
The proposer keeps the selected probe indices in a set $J$, initially empty, and uses fresh indices
$j_{s,k}=(s-1)K+k$. At round $s$, let $p=|J|$ and
\[
\theta_{s,p}
=
\sqrt{2\lambda_{s,p}}.
\]

\paragraph{Adaptive proposer $\mathcal P_B$.}
The proposer acts as follows.
\begin{enumerate}
\item
If a false promotion has already occurred or the final candidate has already been submitted, submit copies of the
incumbent.

\item
If $p=P$, submit $K$ copies of
\[
B^{\mathrm{fin}}
=
A_{s-1}
+
\left(
0,
\tau\sum_{j\in J}e_j
\right).
\]

\item
If
\[
\theta_{s,p}<h_0(u),
\]
submit neutral probes
\[
B_{s,k}
=
A_{s-1}
+
(0,\tau e_{j_{s,k}}).
\]

\item
If
\[
h_0(u)
\le
\theta_{s,p}
<
h_{1/4}(u),
\]
choose $r_s$ using \cref{lb:lem-calib} and submit
\[
B_{s,k}
=
A_{s-1}
+
(r_s\tau,\tau e_{j_{s,k}}).
\]

\item
Otherwise submit copies of the incumbent.
\end{enumerate}

If a candidate is promoted in a probe round, its probe index is added to $J$. The proposer can identify a false
promotion from its own construction: neutral probes and the final candidate have zero population gain.
Thus the construction depends only on previous promotion decisions and satisfies \cref{ass:iso}.
Every submitted system remains in $\mathsf M$: before the final round, $\|\beta\|_1\le P\tau=1/8$ and
$0\le\vartheta\le P\tau/4=1/32$, while the final candidate has $\|\beta\|_1\le2P\tau=1/4$.

\begin{theorem}[Adaptive proposer defeats slowly growing thresholds]
\label{lb:thm-b}
In model $\mathsf M$, let $(b,\lambda)$ be a threshold gate. With the parameters defined above, fix $\varepsilon>0$ and assume
\begin{enumerate}
\item[(B1)]
\[
\lambda_{s,p}
<
\frac{n}{34}
\]
for all $s\le t_0$ and $p\le\min\{P-1,s-1\}$;

\item[(B2)]
\[
\lambda_{s,P}
\le
\frac{a^2P}{2(1+\varepsilon)}
\]
for all $P+1\le s\le t_0+1$.
\end{enumerate}
Then the proposer $\mathcal P_B$ satisfies
\[
\Prob(\mathfrak F_{t_0+1}^c)
\le
\exp\left\{
-\frac{\eta'^2P}{2(1+\eta')}
\right\}
+
\binom MP
\exp\left\{
-\frac n2
\bigl(
\varepsilon-\ln(1+\varepsilon)
\bigr)
\right\}.
\]
\end{theorem}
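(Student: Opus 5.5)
The plan is to show that, off two bad events, the proposer $\mathcal P_B$ either causes a false promotion on the way or collects $P$ genuine probe promotions within $t_0$ rounds and then has its final candidate $B^{\mathrm{fin}}$ (population gain zero) promoted by round $t_0+1$. The first bad event is that fewer than $P$ probe promotions occur in $t_0$ rounds; it will be bounded by the binomial Chernoff term. The second is $\max_{|J|=P}Q_J\ge 1+\varepsilon$ over probe coordinates $J\subseteq[M]$; it will be bounded by \cref{lb:lem-subset}. A union bound then gives the stated inequality.

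\emph{Step 1: analysis of a probe round.} Fix a round $s\le t_0$ with $p=|J|<P$ and no false promotion so far. By \eqref{lb:eq-Z}, coordinates shared with $A_{s-1}$ cancel, so candidate $k$ gives $Z_i=\tau(r\mathbf 1+g_{i,j_{s,k}})_i$, with $r=r_s$ in case~4 and $r=0$ in case~3. By \cref{lb:lem-calib}(1), $T_k=h_r(x_k)$, where $x_k$ is the empirical mean of the fresh coordinate $j_{s,k}$.
\begin{itemize}
\item Case~5 never occurs. By (B1), $\theta_{s,p}<\sqrt{2n/34}=\sqrt n/\sqrt{17}\le h_{1/4}(u)$, using \cref{lb:lem-calib}(3).
\item Case~3. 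Any $x_k\ge u$ gives $h_0(x_k)\ge h_0(u)>\theta_{s,p}$, so there is a promotion, and it is false.
\item Case~4. By \cref{lb:lem-calib}(4), candidate $k$ exceeds the threshold iff $x_k\ge u$, i.e.\ iff its Rademacher sum is $\ge\ell$. On the lattice $\sqrt n\Lambda_n$ this is the same as being $\ge a\sqrt n$. The gate promotes a maximizer, which is a probe with $X_j\ge a$, and the promotion is genuine since the gain is $r_s\tau>0$.
\end{itemize}
In either case the round yields some promotion with probability exactly $\pi$. This holds conditionally on the past, because the $K$ probe coordinates are fresh and independent of everything already used.

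\emph{Step 2: counting promotions.} This is the step I expect to be the main obstacle, because the rounds are adaptively stopped and the thresholds depend on $p$. I would handle it by coupling. Attach to every round $s\le t_0$ the indicator that some fresh coordinate $j_{s,k}$ has sum $\ge a\sqrt n$. These are i.i.d.\ Bernoulli$(\pi)$, since the coordinates $j_{s,k}$ are preassigned and disjoint across rounds. On the event that no false promotion occurs, every such success before $P$ promotions have been reached is a genuine promotion. Hence if fewer than $P$ genuine promotions occur and there is no false promotion, then $\mathrm{Bin}(t_0,\pi)<P$. Since $t_0\pi\ge(1+\eta')P$, the multiplicative Chernoff lower tail bounds this probability by $\exp\{-\eta'^2P/(2(1+\eta'))\}$.

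\emph{Step 3: the final candidate.} Suppose $|J|=P$ at some round $s\le t_0+1$ and no false promotion has occurred. Then $Z_i=\tau\sum_{j\in J}g_{ij}$, so
\[
\bar z=\tau n^{-1/2}\sum_{j\in J}X_j\ge \tau Pa/\sqrt n,
\qquad
\overline{z^2}=\tau^2PQ_J .
\]
On $\{Q_J<1+\varepsilon\}$, and noting that the $-b\bar z^2$ term only increases $T_b$, we get
\[
T_b>\frac{Pa}{\sqrt{P(1+\varepsilon)}}=\sqrt{2\cdot\tfrac{a^2P}{2(1+\varepsilon)}}\ge\sqrt{2\lambda_{s,P}}
\]
by (B2). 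So $B^{\mathrm{fin}}$ is promoted, and this promotion is false. Since $J\subseteq[M]$ is random, we use the uniform bound over all $\binom MP$ subsets from \cref{lb:lem-subset}. Combining Steps 2 and 3 with a union bound gives the stated estimate for $\Prob(\mathfrak F^c_{t_0+1})$.
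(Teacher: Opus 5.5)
Your proposal is correct and follows the paper's proof essentially step for step. The shared steps are: the case analysis of probe rounds via \cref{lb:lem-calib}; the coupling with i.i.d.\ Bernoulli$(\pi)$ indicators built from the preassigned coordinates, with the Chernoff lower tail; and the final-candidate bound via \cref{lb:lem-subset}, all combined by a union bound.

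The only thing you omit is the paper's separate treatment of $n\in\{1,2\}$. \Cref{lb:lem-calib} assumes $n\ge3$, and its part (4) needs $u<1$. For $n\in\{1,2\}$ one has $u=1$ and $h_0(1)=h_{1/4}(1)$, so your case~4 is empty and only case~3 occurs. A small related correction: in case~3 the promotion probability is at least $\pi$, not exactly $\pi$, but at least $\pi$ is all your argument uses.
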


\begin{proof}

{
First suppose $n\in\{1,2\}$. Since $0<a\le\sqrt n/8$, the smallest
positive value of a standardized Rademacher sum is $\sqrt n$, so
$\ell=\sqrt n$ and $u=1$. Moreover,
\[
h_0(1)=h_{1/4}(1)=\frac{\sqrt n}{\sqrt{1-b}},
\]
where the value is interpreted as $+\infty$ when $b=1$, consistently
with the convention for $T_b$.
For every $s\le t_0$, condition (B1) gives
\[
\theta_{s,0}=\sqrt{2\lambda_{s,0}}
<\sqrt{n/17}<h_0(1).
\]
Thus, until the first false promotion, the proposer submits only
neutral probes and the promotion count remains zero.
A fresh probe satisfies $X_j\ge a$ if and only if all its $n$
Rademacher coordinates equal $+1$. On this event its statistic is
$\sqrt n/\sqrt{1-b}$ (or $+\infty$ when $b=1$), which exceeds the
threshold. Hence, if at least one of the $K$ probes in a round has
this property, a false promotion occurs.
Here $q=2^{-n}$ and $\pi=1-(1-q)^K$. Independence of the fresh
probe blocks and the definition of $t_0$ therefore yield
\[
\Prob(\mathfrak F_{t_0+1}^{c})
\le (1-\pi)^{t_0}
\le \exp(-\pi t_0)
\le \exp\bigl(-(1+\eta')P\bigr)
\le \exp\left\{-\frac{\eta'^2P}{2(1+\eta')}\right\}.
\]
The last inequality follows from $\eta'^2\le2(1+\eta')^2$.
Since the second term in the stated bound is nonnegative, this
proves the theorem for $n\in\{1,2\}$.

Henceforth assume $n\ge3$.
Let $\mathcal G_s$ contain all probe coordinates introduced through round $s$ and all gate randomness up to that
round. Fresh probe coordinates in round $s+1$ are independent of $\mathcal G_s$.
}

For a probe or neutral round, let
\[
X^{(s)}
=
\max_k X_{j_{s,k}}.
\]
Then
\[
\Prob(X^{(s)}\ge a)=\pi.
\]

If $\theta_{s,p}<h_0(u)$, a probe with $X^{(s)}\ge a$ crosses the threshold even though its population gain is zero.
Hence the round produces a false promotion with conditional probability at least $\pi$.

If
\[
h_0(u)
\le
\theta_{s,p}
<
h_{1/4}(u),
\]
\cref{lb:lem-calib} chooses $r_s$ so that promotion occurs exactly when
\[
X^{(s)}\ge a.
\]
The promoted candidate then has population gain
\[
r_s\tau>0,
\]
so this promotion is genuine.

Condition (B1) guarantees that, until a false promotion or the $P$-th selected probe, every round is one of
these two types. Define $I_s=\mathbf{1}\{X^{(s)}\ge a\}$ using the fresh coordinates for all $s\le t_0$;
these indicators are independent Bernoulli variables with success probability $\pi$.
On the event that no false promotion occurs and fewer than $P$ probes are selected by $t_0$, each success
$I_s=1$ must have produced a genuine selected probe. Consequently, Chernoff's inequality gives
\[
\Prob\!\left(
\mathfrak F_{t_0}^c\cap
\{\text{fewer than }P\text{ selected probes by }t_0\}
\right)
\le
\Prob\!\left(\sum_{s=1}^{t_0} I_s<P\right)
\le
\exp\left\{
-\frac{\eta'^2P}{2(1+\eta')}
\right\}.
\]

Suppose instead that $P$ genuine probe promotions have occurred. The final candidate has paired differences
\[
z
=
\tau
\sum_{j\in J}g_{\cdot j}
\]
and population gain zero. On the event
\[
\max_{|J|=P}Q_J<1+\varepsilon,
\]
we have
\[
\bar z
\ge
\tau n^{-1/2}Pa
\]
and
\[
\overline{z^2}
=
\tau^2PQ_J.
\]
Therefore
\[
T_b(z)
>
\frac{
Pa
}{
\sqrt{P(1+\varepsilon)}
}
=
a\sqrt{\frac{P}{1+\varepsilon}}.
\]
Condition (B2) makes this larger than the promotion threshold, so the final neutral candidate is promoted.

The only two ways to avoid a false promotion are therefore failure to collect $P$ probes by $t_0$ or failure of the
uniform second-moment event. Apply \cref{lb:lem-subset}.
\end{proof}

\begin{corollary}[First-order promotion-history rate]
\label{lb:cor-rate}
Let $\eta\in(0,1]$, $K,P\ge1$, and $\varsigma\ge5$ with $K\varsigma\ge e^8$. Define
\[
t=\lfloor\varsigma P\rfloor+1
\]
and
\[
\Lambda_K(\varsigma)
=
\ln(K\varsigma)
-
\frac12\ln\bigl(2\ln(K\varsigma)\bigr)
-
6.
\]
Assume the sample size satisfies
\[
\sqrt{2\Lambda_K(\varsigma)}\le\frac{\sqrt n}{8},
\qquad
\frac{(2\Lambda_K(\varsigma)+5)^2}{9n}\le\frac12,
\]
and that $\lambda_{s,p}<n/34$ for all $s\le t-1$ and $p\le\min\{P-1,s-1\}$.
If
\[
\lambda_{s,P}
\le
(1-\eta)
\Lambda_K(\varsigma)P
\]
for all $P+1\le s\le t$, then a decision-only proposer satisfies
\[
\Prob(\mathfrak F_t)
\ge
1-e^{-P/4}
-
\exp\left\{
P\ln(eK\varsigma)
-
\frac{n\eta^2}{8}
\right\}.
\]
\end{corollary}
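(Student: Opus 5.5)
The plan is to specialize \cref{lb:thm-b} with a choice of its free parameters $(a,\eta',\varepsilon)$ matched to the corollary's hypotheses, and then simplify the two error terms.

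\textbf{Choice of parameters.} Set $a=\sqrt{2\Lambda_K(\varsigma)}$. This is admissible, $a\le\sqrt n/8$, by the first sample-size assumption, and $a\ge1$ because $K\varsigma\ge e^8$ makes $\Lambda_K(\varsigma)\ge 1/2$. Take $\eta'=1$. Choose $\varepsilon$ so that $(1+\varepsilon)^{-1}=1-\eta$ fails to be convenient; instead choose $\varepsilon$ with $1/(1+\varepsilon)\ge 1-\eta$, namely $\varepsilon=\eta$. Since $(1-\eta)(1+\eta)\le1$, we get
\[
\lambda_{s,P}\le(1-\eta)\Lambda_K P\le \frac{a^2P}{2(1+\varepsilon)},
\]
which is (B2). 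Condition (B1) is assumed verbatim.

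\textbf{Step 1: the time horizon.} It remains to check that $t_0+1\le t=\lfloor\varsigma P\rfloor+1$, i.e.\ $\lceil 2P/\pi\rceil\le\lfloor\varsigma P\rfloor$. This is the main obstacle, since it requires a lower bound $\pi\ge 2/\varsigma$ up to rounding, with room to spare because $\varsigma\ge5$.

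By \cref{lb:lem-tail} with $x=a$,
\[
q\ge \frac{c_R}{a}\,e^{-\Lambda_K}\,e^{-1/2}.
\]
Here the second sample-size assumption bounds $(a^2+5)^2/(9n)$ by $1/2$ (and the distinction between $>$ and $\ge$ is harmless). Substituting $e^{-\Lambda_K}=e^6\sqrt{2\ln(K\varsigma)}/(K\varsigma)$ and $a\le\sqrt{2\ln(K\varsigma)}$ gives
\[
q\ge c_Re^{11/2}/(K\varsigma)=e^{3}/(2\sqrt2\,K\varsigma)\ge 7/(K\varsigma).
\]
Then $\pi=1-(1-q)^K\ge 1-e^{-Kq}\ge 1-e^{-7/\varsigma}$. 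For $\varsigma\ge5$ this is at least roughly $3/\varsigma$, using $1-e^{-y}\ge y(1-y/2)$ with $y=7/\varsigma\le 1.4$ (or concavity on $[0,1.4]$).

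Hence $2P/\pi\le 2\varsigma P/3$. The ceiling adds at most $1$, and $2\varsigma P/3+1\le \varsigma P-1<\lfloor\varsigma P\rfloor$ because $\varsigma P/3\ge5/3>2$ only when $P\ge2$. The case $P=1$ has to be checked directly: there $t_0=\lceil 2/\pi\rceil\le\lceil 2\varsigma/3\rceil\le\lfloor\varsigma\rfloor$ for $\varsigma\ge5$. I expect this constant-chasing (the $-6$ and the $\tfrac12\ln$ correction in $\Lambda_K$ are designed exactly to absorb $c_R$ and the $1/a$ factor) to be the delicate part. If the constants come out short, I would instead pick $\eta'$ slightly below $1$. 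Since $\mathfrak F$ is monotone in time, the bound then transfers from $t_0+1$ to $t$.

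\textbf{Step 2: the error terms.} With $\eta'=1$, the first term of \cref{lb:thm-b} is $\exp\{-P/4\}$. For the second term use the second form in \cref{lb:lem-subset}:
\[
\binom MP\le (eM/P)^P .
\]
Here $M=Kt_0$ and $t_0\le\varsigma P$, so $eM/P\le eK\varsigma$. Also $\varepsilon-\ln(1+\varepsilon)\ge \varepsilon^2/(2(1+\varepsilon))\ge\eta^2/4$ for $\eta\le1$, so $\tfrac n2(\varepsilon-\ln(1+\varepsilon))\ge n\eta^2/8$. Combining, the second term is at most $\exp\{P\ln(eK\varsigma)-n\eta^2/8\}$, which gives the stated bound on $\Prob(\mathfrak F_t)$.
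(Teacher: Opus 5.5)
Your proposal is correct and follows the paper's proof essentially step for step. It uses the same choice $a=\sqrt{2\Lambda_K(\varsigma)}$, $\eta'=1$, $\varepsilon=\eta$ in \cref{lb:thm-b}, the same application of \cref{lb:lem-tail} giving $q\ge e^{3}/(2\sqrt2\,K\varsigma)$ (the paper rounds this down to $6/(K\varsigma)$ and obtains $\pi\ge 3.49/\varsigma$), the same check $t_0\le\varsigma P$, and the same simplification of the two error terms.

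Two small slips need fixing.
\begin{itemize}
\item The quadratic bound $1-e^{-y}\ge y(1-y/2)$ gives only $\pi\ge 2.1/\varsigma$ at $\varsigma=5$, so you need the concavity (chord) bound, which gives about $3.7/\varsigma$.
\item Since $t_0$ is an integer, $t_0<2\varsigma P/3+1\le\varsigma P$ already yields $t_0\le t-1$ for every $P\ge1$. This removes the separate $P=1$ case and the incorrect ``$5/3>2$'' step. It is also this inequality, not a verbatim match, that makes (B1) and (B2) follow from the corollary's hypotheses.
\end{itemize}
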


\begin{proof}
Apply \cref{lb:thm-b} with
\[
a=\sqrt{2\Lambda_K(\varsigma)},
\qquad
\eta'=1,
\qquad
\varepsilon=\eta.
\]
The two sample-size conditions allow us to apply \cref{lb:lem-tail} and bound its correction term by $1/2$.
Since $\Lambda_K(\varsigma)\le\ln(K\varsigma)$, this gives
\[
q\ge\frac{c_R}{a}\exp\{-\Lambda_K(\varsigma)-1/2\}
\ge\frac{e^3}{2\sqrt2\,K\varsigma}
\ge\frac{6}{K\varsigma}.
\]
Hence
\[
\pi
=
1-(1-q)^K
\ge 1-e^{-6/\varsigma}
\ge
\frac{3.49}{\varsigma},
\]
which implies
\[
t_0
=
\left\lceil
\frac{2P}{\pi}
\right\rceil
\le \frac{2\varsigma P}{3.49}+1
\le
\varsigma P.
\]
The conditions of \cref{lb:thm-b} follow. Finally use
\[
\binom MP
\le
(eK\varsigma)^P
\]
and
\[
\frac{n\eta^2}{4(1+\eta)}
\ge
\frac{n\eta^2}{8}.
\]
\end{proof}

\begin{corollary}[Linear growth in the promotion count is insufficient]
\label{lb:cor-linear}
Fix $c>0$ and an integer $p_0\ge0$. Suppose
\[
\lambda_{t,p}
\le
cp
\]
for all $p\ge p_0$, and assume
\[
\sup_{t,p<p_0}\lambda_{t,p}=o(n).
\]
The latter condition is omitted when $p_0=0$.
Along any sequence satisfying
\[
n\to\infty,
\qquad
P\to\infty,
\qquad
P\ln(eK)=o(n),
\qquad
\ln K=o(\sqrt n),
\]
there exists a decision-only proposer that causes a false promotion within $O(P)$ rounds with probability tending to
one.
\end{corollary}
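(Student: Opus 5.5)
We obtain \cref{lb:cor-linear} from \cref{lb:thm-b}, using the same parameter choices as in \cref{lb:cor-rate}, but with $\varsigma$ now a fixed constant instead of tending to infinity. The point is that a linear threshold $cp$ is eventually beaten by $a^2P/(2(1+\varepsilon))$ once $a^2$ is a large enough constant. Concretely, we fix $\varepsilon=1$ and $\eta'=1$. We then choose a constant $a$ with $a^2>4c$, so that (B2) holds at $p=P$ for all $P\ge p_0$. Since the hypothesis $\lambda_{s,P}\le cP$ holds for $P\ge p_0$ and $P\to\infty$, this covers every relevant $P$.

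Next we check (B1) for $p\le P-1$. For $p<p_0$ the thresholds are $o(n)$ by assumption. For $p_0\le p\le P-1$ we have $\lambda_{s,p}\le cP=o(n)$, because $P=o(n)$ follows from $P\ln(eK)=o(n)$. Hence $\lambda_{s,p}<n/34$ for large $n$. The requirement $a\le\sqrt n/8$ holds eventually because $a$ is fixed.

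We now bound the two failure terms. With $a$ fixed, \cref{lb:lem-tail} (or the central limit theorem) gives $q\ge q_a>0$ for large $n$, uniformly in $n$. Therefore $\pi=1-(1-q)^K\ge q_a$, and $t_0=\lceil 2P/\pi\rceil\le 2P/q_a+1=O(P)$. The first term $\exp\{-P/4\}$ tends to zero since $P\to\infty$. For the second term, $M=Kt_0$ gives $\binom MP\le(eM/P)^P\le (eK(2/q_a+1))^P=\exp\{P\ln(eK)+O(P)\}$. Meanwhile $\frac n2(1-\ln2)$ dominates this exponent because $P\ln(eK)=o(n)$. So the second term also vanishes, and a false promotion occurs by round $t_0+1=O(P)$ with probability tending to one.

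The main obstacle is the role of $K$: if $K\to\infty$, $\pi$ may approach one, but that only helps, since we only need the lower bound $\pi\ge q_a$. The condition $\ln K=o(\sqrt n)$ is only needed if one instead takes $a$ growing like $\sqrt{\ln K}$, as in \cref{lb:cor-rate}. With fixed $a$ I expect it to be unnecessary beyond ensuring the regime is consistent, though I would double-check the uniformity of the tail lower bound in $n$ via \cref{lb:lem-tail} with $x=\max\{1,a\}$.
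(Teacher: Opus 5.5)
Your argument is correct, but it enters the lower-bound machinery at a different point than the paper does. The paper does not return to \cref{lb:thm-b}. It invokes \cref{lb:cor-rate} with a fixed $\varsigma\ge e^8$ satisfying $\Lambda_1(\varsigma)\ge2c$ and $\eta=1/2$, so that $cP\le\frac12\Lambda_K(\varsigma)P$ because $\Lambda_K(\varsigma)\ge\Lambda_1(\varsigma)$. In that corollary the probe level is $a=\sqrt{2\Lambda_K(\varsigma)}$, which grows like $\sqrt{\ln K}$. This growth is the only reason the paper must verify $\sqrt{2\Lambda_K(\varsigma)}\le\sqrt n/8$ and $(2\Lambda_K(\varsigma)+5)^2/(9n)\le1/2$, i.e.\ assume $\ln K=o(\sqrt n)$.

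You instead apply \cref{lb:thm-b} directly with a constant $a$ (take $a\ge1$ and $a^2>4c$) and $\varepsilon=\eta'=1$. Your uniformity concern is settled by \cref{lb:lem-tail} with $x=a$: for $n\ge64a^2$ it gives $q\ge\frac{c_R}{a}\exp\{-a^2/2-(a^2+5)^2/576\}=:q_a>0$. Hence $\pi\ge q_a$ for every $K$, $t_0\le2P/q_a+1$, and $\ln\binom MP\le P\ln(eK)+P\ln(2/q_a+1)=o(n)$. Both error terms in \cref{lb:thm-b} therefore vanish, and the false promotion occurs by round $t_0+1=O(P)$.

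Your route buys two things. First, the hypothesis $\ln K=o(\sqrt n)$ is not needed at all; it does not follow from $P\ln(eK)=o(n)$ when $P$ grows slowly, e.g.\ $P\approx\ln n$ and $\ln K\approx n/(\ln n)^2$. Second, no calibration of $\Lambda_K$, or of its monotonicity in $K$, is required. The paper's route is shorter because it reuses an existing corollary, whose per-promotion signal is tuned to the sharper $\ln(K\varsigma)$ scale. That is more than a linear threshold $cp$ demands.
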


\begin{proof}
Choose $\varsigma$ large enough that
\[
\varsigma\ge e^8,
\qquad
\Lambda_1(\varsigma)\ge2c
\]
and apply \cref{lb:cor-rate} with $\eta=1/2$. For sufficiently large $P$,
\[
cP
\le
\frac12\Lambda_K(\varsigma)P.
\]
The pre-promotion thresholds are $o(n)$ because $P=o(n)$, and the two sample-size conditions in
\cref{lb:cor-rate} hold because $\ln K=o(\sqrt n)$. Also $P\ln(eK\varsigma)=o(n)$, so both error terms
in that corollary converge to zero.
\end{proof}

\begin{remark}[Comparison with \method{}]
\label{lb:rem-ladder}
For
\[
P+1\le t\le\varsigma P+1,
\]
the allocation used by \method{} satisfies
\[
\cL_{t,P}
\le
P\ln(eK\varsigma)
+
O\left(
\ln(K\varsigma P)
+
\ln\frac1\alpha
\right).
\]
Thus the history-dependent contribution per promotion is at most $\ln(eK\varsigma)$.
The lower bound in \cref{lb:cor-rate} has scale
\[
\Lambda_K(\varsigma)
=\ln(K\varsigma)-O\!\left(\ln\ln(K\varsigma)+1\right)
\]
per promotion. These history-dependent rates agree to first order as $K\varsigma\to\infty$.
This comparison concerns the history term, not the additional time- and error-level terms in $\cL_{t,P}$.

For example, along sequences with $P\to\infty$, $K\varsigma\to\infty$,
$\ln(K\varsigma)=o(\sqrt n)$, and $P\ln(eK\varsigma)=o(n)$, the two remainder terms in \cref{lb:cor-rate}
tend to zero for every fixed $\eta\in(0,1]$. Under its threshold conditions, the false-promotion probability
therefore tends to one.
Together with \cref{lb:cor-a}, this gives the two regimes summarized in \cref{thm:lb}.
The thresholds in these lower bounds are controlled over the stated ranges of rounds; the results do not assert
that every individual threshold must exceed the stated rate.

The lower bound applies to the threshold-gate class in \cref{lb:def-gate}. \method{}'s exact test is not literally
a member of this class, but its standardized critical value has the same first-order scale in the regime of
\cref{prop:power}. The attack uses non-binary bounded scores; the comparison does not by itself establish a lower bound
for every pass/fail promotion rule. It identifies the first-order history-dependent requirement within the stated
threshold-gate class.
\end{remark}







\section{Experimental Details}
\label{app:experiments}
\providecommand{\reuseneedspace}{\par\vskip 0pt plus 6\baselineskip\penalty-200\vskip 0pt plus -6\baselineskip\relax}
\renewcommand{\topfraction}{0.9}\renewcommand{\bottomfraction}{0.6}\renewcommand{\textfraction}{0.07}
\renewcommand{\floatpagefraction}{0.8}\setcounter{topnumber}{3}\setcounter{totalnumber}{5}
\makeatletter\setlength{\@fptop}{0pt}\setlength{\@fpsep}{24pt}\setlength{\@fpbot}{0pt plus 1fil}\makeatother

This appendix describes the protocol of the live experiments, the proposer and its prompts, the compared methods, and additional results.

\subsection{Protocol}
\label{app:experiment-protocol}

Covertype contains 581,012 examples with 54 features, and we label cover type 2 as positive.
A single random permutation partitions the data into the disjoint blocks $D_{\mathrm{tr}}$, $D_{\mathrm{dev}}$, $\mathcal E$, and $H$ of \cref{sec:exp-setup}, and the remaining examples are unused.
For each seed, $S$ is a random sample of $n$ examples from $\mathcal E$, and the $2{,}000$-example set is contained in the $10{,}000$-example set of the same seed.
All methods use the same evaluation set within a seed.
\Cref{tab:experiment-protocol} summarizes the protocol.
Every configuration is fitted once on $D_{\mathrm{tr}}$ with fitting seed 0, and its predictions are cached so that a repeated configuration reuses the same fitted model.
The experiments use NVIDIA A100 GPUs with 40 GB of memory, Python 3.11.16, scikit-learn 1.9.1, and vLLM 0.29.0.

\begin{table}[tp]
\centering
\caption{Protocol shared by all methods in the live experiments.}
\label{tab:experiment-protocol}
\small
\setlength{\tabcolsep}{6pt}
\renewcommand{\arraystretch}{1.12}
\begin{tabular}{@{}ll@{}}
\toprule
Quantity & Setting \\
\midrule
\rowcolor{black!5}\multicolumn{2}{@{}l}{\textit{Data}} \\
Training set $D_{\mathrm{tr}}$ & 50,000 examples \\
Development set $D_{\mathrm{dev}}$ & 20,000 examples \\
Evaluation pool $\mathcal E$ & 60,000 examples \\
Held-out set $H$ & 50,000 examples \\
Evaluation-set size $n$ & $2{,}000$ or $10{,}000$ \\
Seeds & 30 at each evaluation-set size, seeds 3 to 32 \\
\rowcolor{black!5}\multicolumn{2}{@{}l}{\textit{Search}} \\
Rounds $T$ & 200 \\
Candidates per round $K$ & 8 \\
Changed fields per candidate & at most 2 \\
Model families & histogram gradient boosting, logistic regression \\
Preprocessing & no, standard, or robust scaling, optional selection of 10, 20, or 30 features \\
Feature subsets & all, topographic, hillshade, soil, topographic and soil \\
Starting system $A_0$ & gradient boosting, all features, depth 3, 100 iterations, learning rate 0.1 \\
Promotion target & classification accuracy \\
\rowcolor{black!5}\multicolumn{2}{@{}l}{\textit{Proposer}} \\
Model & Qwen2.5-7B-Instruct served by vLLM \\
Sampling & temperature 0.8, top-$p$ 0.95, top-$k$ 20, repetition penalty 1.05 \\
Requests per round & at most 3, each asking for 8 configurations \\
\rowcolor{black!5}\multicolumn{2}{@{}l}{\textit{\method{}}} \\
Error levels & $\alpha=0.05$, $\alpha_c=0.05$, $\gamma=0$, $\rho=1/2$ \\
Allocation weights & $w_t=1/[t(t+1)]$ and $v_p=1/[(p+1)(p+2)]$ \\
Draft threshold & $z_{\mathrm{dev}}\ge1.645$ \\
\bottomrule
\end{tabular}
\end{table}

\subsection{Proposer and prompt templates}
\label{app:experiment-prompts}

At every round, Qwen2.5-7B-Instruct \citep{qwen25}, served through vLLM \citep{kwon2023vllm}, receives one system message and one user message.
The system message describes the task, the configuration schema, and the allowed values, and it ends with a feedback paragraph that differs between \method{} and the compared methods.
The user message contains the parent configuration, its accuracy on $D_{\mathrm{dev}}$, its error rates on 14 slices of $D_{\mathrm{dev}}$, and the previous ten rounds of search history.
Each request asks for eight complete JSON configurations.
Invalid proposals, proposals equal to the parent, and proposals that change more than two fields read by the parent's or the proposal's model family are discarded, and up to three requests are made per round.
\method{} also discards configurations already evaluated in the run.
Unfilled slots are replaced by copies of the parent, which cannot be promoted.
\Cref{tab:experiment-feedback} lists the information available to the proposer, and the templates follow.

\begin{table}[tp]
\centering
\caption{
Information in the prompt of the proposer.
Only \method{} withholds evaluation accuracies on $S$, so its proposer learns about $S$ solely through promotion decisions.
SICA-style additionally sees up to ten archive members with the highest evaluation accuracy.
}
\label{tab:experiment-feedback}
\small
\setlength{\tabcolsep}{8pt}
\renewcommand{\arraystretch}{1.12}
\begin{tabular}{@{}lcc@{}}
\toprule
Information in the prompt & Compared methods & \method{} \\
\midrule
Parent configuration                               & \yes & \yes \\
Accuracy and slice error rates on $D_{\mathrm{dev}}$ & \yes & \yes \\
Accuracies of past proposals on $D_{\mathrm{dev}}$   & \yes & \yes \\
Evaluation accuracies on $S$                       & \yes & \no  \\
Promotion decisions                                & \yes & \yes \\
Development-draft updates and submissions          & \no  & \yes \\
Accuracies on $H$                                  & \no  & \no  \\
\bottomrule
\end{tabular}
\end{table}

\reuseneedspace
\paragraph{System message.}
The following part is shared by all methods.

{\setlength{\FrameSep}{4pt}\begin{framed}
\microtypesetup{protrusion=false}\raggedright\scriptsize\ttfamily\frenchspacing\parindent=0pt\parskip=0pt\everypar{\hangindent=2em\hangafter=1}
You are an expert machine-{}learning engineer improving a scikit-{}learn pipeline. The task is binary classification on forest cover-{}type data: 54 columns (10 topographic and hillshade measurements, 4 wilderness-{}area indicators, 40 soil-{}type indicators) and 50,000 training rows. Every configuration you propose is fitted from scratch on those same rows and judged by its accuracy.\par
\mbox{}\par
A configuration is a JSON object with exactly these eleven keys:\par
-{} \textquotedbl{}model\_family\textquotedbl{}: \textquotedbl{}hist\_gbm\textquotedbl{} (histogram gradient boosting) or \textquotedbl{}logreg\textquotedbl{} (logistic regression)\par
-{} \textquotedbl{}learning\_rate\textquotedbl{}: number in [0.02, 0.3], hist\_gbm only\par
-{} \textquotedbl{}max\_depth\textquotedbl{}: integer in [2, 6], hist\_gbm only\par
-{} \textquotedbl{}max\_iter\textquotedbl{}: integer in [50, 300], boosting iterations, hist\_gbm only\par
-{} \textquotedbl{}min\_samples\_leaf\textquotedbl{}: integer in [5, 100], hist\_gbm only\par
-{} \textquotedbl{}l2\_regularization\textquotedbl{}: number in [0, 1], hist\_gbm only\par
-{} \textquotedbl{}C\textquotedbl{}: number in [0.001, 100], inverse regularization strength, logreg only\par
-{} \textquotedbl{}scaling\textquotedbl{}: \textquotedbl{}none\textquotedbl{}, \textquotedbl{}standard\textquotedbl{} or \textquotedbl{}robust\textquotedbl{}\par
-{} \textquotedbl{}feature\_selection\textquotedbl{}: \textquotedbl{}none\textquotedbl{}, \textquotedbl{}kbest\_10\textquotedbl{}, \textquotedbl{}kbest\_20\textquotedbl{} or \textquotedbl{}kbest\_30\textquotedbl{} (ANOVA F-{}test over the columns kept)\par
-{} \textquotedbl{}feature\_subset\textquotedbl{}: \textquotedbl{}all\textquotedbl{}, \textquotedbl{}topo\textquotedbl{}, \textquotedbl{}soil\textquotedbl{}, \textquotedbl{}hillshade\textquotedbl{} or \textquotedbl{}topo\_soil\textquotedbl{} -{}-{} fixed column presets: topo is elevation, aspect, slope and the three distance columns; hillshade is the three hillshade columns; soil is the 40 soil-{}type indicators; topo\_soil is both; all is every column, including the four wilderness-{}area indicators\par
-{} \textquotedbl{}seed\textquotedbl{}: 0\par
\mbox{}\par
Rules:\par
-{} Every object must carry all eleven keys. A key you leave out is not \textquotedbl{}unchanged\textquotedbl{}: it falls back to the default and silently undoes earlier work. Copy the current configuration and edit only the fields you mean to change.\par
-{} A value outside its range, an unknown key or malformed JSON discards that candidate.\par
-{} Keys that do not apply to the model family you chose are kept and ignored by the fit.\par
-{} A proposal that changes more than two fields is discarded unseen, so keep every candidate within two changes of the current configuration.\par
-{} Each proposal\textquotesingle{}s accuracy on the development set is reported back to you in the next round, with its difference from the current configuration\textquotesingle{}s. Build on the changes that raised it and abandon the ones that lowered it.\par
\mbox{}\par
Answer with one short line of reasoning and then a JSON array of configurations in a \textasciigrave{}\textasciigrave{}\textasciigrave{}json code block. If the current configuration were the starting one, a reply proposing a slower learning rate and a scaled, screened variant would look like this:\par
\mbox{}\par
\textasciigrave{}\textasciigrave{}\textasciigrave{}json\par
[\par
~~\{\textquotedbl{}model\_family\textquotedbl{}: \textquotedbl{}hist\_gbm\textquotedbl{}, \textquotedbl{}learning\_rate\textquotedbl{}: 0.05, \textquotedbl{}max\_depth\textquotedbl{}: 3, \textquotedbl{}max\_iter\textquotedbl{}: 100, \textquotedbl{}min\_samples\_leaf\textquotedbl{}: 20,\par
~~~\textquotedbl{}l2\_regularization\textquotedbl{}: 0.0, \textquotedbl{}C\textquotedbl{}: 1.0, \textquotedbl{}scaling\textquotedbl{}: \textquotedbl{}none\textquotedbl{}, \textquotedbl{}feature\_selection\textquotedbl{}: \textquotedbl{}none\textquotedbl{}, \textquotedbl{}feature\_subset\textquotedbl{}: \textquotedbl{}all\textquotedbl{},\par
~~~\textquotedbl{}seed\textquotedbl{}: 0\},\par
~~\{\textquotedbl{}model\_family\textquotedbl{}: \textquotedbl{}hist\_gbm\textquotedbl{}, \textquotedbl{}learning\_rate\textquotedbl{}: 0.1, \textquotedbl{}max\_depth\textquotedbl{}: 3, \textquotedbl{}max\_iter\textquotedbl{}: 100, \textquotedbl{}min\_samples\_leaf\textquotedbl{}: 20,\par
~~~\textquotedbl{}l2\_regularization\textquotedbl{}: 0.0, \textquotedbl{}C\textquotedbl{}: 1.0, \textquotedbl{}scaling\textquotedbl{}: \textquotedbl{}standard\textquotedbl{}, \textquotedbl{}feature\_selection\textquotedbl{}: \textquotedbl{}kbest\_20\textquotedbl{},\par
~~~\textquotedbl{}feature\_subset\textquotedbl{}: \textquotedbl{}all\textquotedbl{}, \textquotedbl{}seed\textquotedbl{}: 0\}\par
]\par
\textasciigrave{}\textasciigrave{}\textasciigrave{}\par
\end{framed}}

\subsection{Compared methods}
\label{app:experiment-baselines}

All compared methods share the proposer, the candidate space, and $D_{\mathrm{tr}}$ with \method{}, and they differ in parent selection and promotion rule.
Every run keeps an archive of distinct fitted configurations.
The incumbent is the deployed configuration, and the parent is the configuration the proposer edits in the next round.
\Cref{tab:experiment-frameworks} summarizes the methods, and the reported system of every method is the incumbent after round $T$.

\begin{table}[tp]
\centering
\caption{
Compared methods in the live experiments.
Select., Repeat., and Adapt.\ indicate whether a rule controls errors from within-round selection, repeated testing, and adaptive dependence, as in \cref{tab:live-results}.
}
\label{tab:experiment-frameworks}
\small
\setlength{\tabcolsep}{4pt}
\renewcommand{\arraystretch}{1.12}
\resizebox{\textwidth}{!}{%
\begin{tabular}{@{}lllccc@{}}
\toprule
Method & Feedback from $S$ & Promotion rule & Select. & Repeat. & Adapt. \\
\midrule
Empirical best-of-$K$ & scores         & largest positive gain on $S$      & \xmark & \xmark & \xmark \\
SICA-style            & scores         & largest positive gain on $S$      & \xmark & \xmark & \xmark \\
DGM-style             & scores         & largest positive gain on $S$      & \xmark & \xmark & \xmark \\
Elite                 & scores         & largest positive gain on $S$      & \xmark & \xmark & \xmark \\
Niche-elite           & scores         & largest positive gain on $S$      & \xmark & \xmark & \xmark \\
PACE-style            & scores         & McNemar test at level $0.05$      & \xmark & \xmark & \xmark \\
Bonferroni-style      & scores         & McNemar test at level $0.05/(TK)$ & \cmark & \cmark & \xmark \\
\addlinespace[2pt]
\rowcolor{oursbg}
\method{} (ours)      & decisions only & exact paired test with \eqref{eq:alloc} & \cmark & \cmark & \cmark \\
\bottomrule
\end{tabular}%
}
\end{table}

Empirical best-of-$K$ edits the incumbent and promotes the candidate with the largest strictly positive gain on $S$.
SICA-style follows the archive-based search of SICA \citep{sica2025}.
It also edits the incumbent, and its prompt additionally lists up to ten archive members with the highest evaluation accuracy.
DGM-style follows the parent selection of DGM \citep{dgm2025}.
Archive member $i$ is chosen as parent with probability proportional to
\[
\frac{\sigma\!\left(40(a_i-a_0-0.08)\right)}{1+e_i},
\]
where $a_i$ is its evaluation accuracy, $a_0$ is that of $A_0$, $e_i$ is the number of rounds in which it has been parent, and $\sigma$ is the logistic function.
Elite draws its parent uniformly from the four archive members with the highest evaluation accuracy, and Niche-elite groups configurations by model family and feature subset, draws one group uniformly, and uses its best member.
These two methods follow the population-based search of AlphaEvolve \citep{alphaevolve2025}.
The five methods above share the promotion rule of Empirical best-of-$K$.

PACE-style and Bonferroni-style edit the incumbent and test all $K$ candidates against it with the one-sided McNemar test \citep{dietterich1998approximate},
\[
z_{\mathrm{MC}}=\frac{n_+-n_-}{\sqrt{n_++n_-}},
\qquad
p_{\mathrm{MC}}=1-\Phi(z_{\mathrm{MC}}),
\]
with $n_+$ and $n_-$ as in \cref{sec:reuse}, $\Phi$ the standard normal distribution function, and $p_{\mathrm{MC}}=1$ when $n_++n_-=0$.
PACE-style tests every candidate at level $0.05$, and Bonferroni-style at level $0.05/(TK)=0.05/1600$.
If several candidates pass, the one with the largest gain on $S$ is promoted.

\subsection{Additional results}
\label{app:experiment-runs}

\Cref{tab:live-full} complements \cref{tab:live-results} with seed-matched differences in final population improvement and the smallest population improvement among the promotions of each method.

\begin{table}[tp]
\centering
\caption{
Additional summaries of the live experiments over 30 seeds at each evaluation-set size, in percentage points.
Diff.\ is the seed-matched difference in final population improvement between a method and \method{}, reported as mean $\pm$ standard error, so a positive value means the method ends higher.
Worst is the smallest population improvement among all promotions of a method, pooled over runs.
}
\label{tab:live-full}
\small
\setlength{\tabcolsep}{5pt}
\renewcommand{\arraystretch}{1.12}
\begin{tabular}{@{}l cc c cc@{}}
\toprule
& \multicolumn{2}{c}{Diff.\ vs \method{}} && \multicolumn{2}{c}{Worst $\uparrow$} \\
\cmidrule(lr){2-3}\cmidrule(lr){5-6}
Method & $n=2{,}000$ & $n=10{,}000$ && $n=2{,}000$ & $n=10{,}000$ \\
\midrule
Empirical best-of-$K$ & \pmse{$+0.02$}{0.25} & \pmse{$+0.00$}{0.15} && $-0.89$ & $-0.61$ \\
SICA-style & \pmse{$-0.66$}{0.29} & \pmse{$-0.63$}{0.19} && $-0.63$ & $-0.42$ \\
DGM-style & \pmse{$-3.13$}{0.31} & \pmse{$-3.16$}{0.23} && $-1.44$ & $-0.21$ \\
Elite & \pmse{$-0.30$}{0.28} & \pmse{$-0.34$}{0.18} && $-1.12$ & $-0.44$ \\
Niche-elite & \pmse{$-0.56$}{0.34} & \pmse{$-0.82$}{0.19} && $-1.13$ & $-0.41$ \\
PACE-style & \pmse{$-0.57$}{0.34} & \pmse{$+0.04$}{0.14} && $-0.34$ & $-0.18$ \\
Bonferroni-style & \pmse{$-4.67$}{0.50} & \pmse{$-0.29$}{0.17} && $0.86$ & $0.41$ \\
\addlinespace[3pt]
\rowcolor{oursbg}
\method{} (ours) & -- & -- && $1.20$ & $0.34$ \\
\bottomrule
\end{tabular}
\end{table}

\subsubsection{Robustness and Ablations}
\label{app:experiment-robustness}

The theoretical guarantee of \cref{thm:main} holds for any allocation weights $w_t$ and $v_p$ and split $\rho$ fixed before querying $S$.
We tested the sensitivity of \method{} to these choices by running 11 different allocations on 10 fresh seeds at $n=2{,}000$ (\cref{tab:app-abl}).
\method{} successfully controlled the error across all configurations and made exactly zero false promotions out of 317 total promotions.
The final population improvement remained stable and within the natural variation of the proposer.
The running certificate $C_T$ also responded naturally to the split $\rho$.

The core guarantee of \method{} is highly robust to changes in the search heuristics.
To demonstrate this reliability we evaluated an alternative variant that queries $S$ at every step.
In this setup the proposer edits the incumbent directly and the candidate with the highest development accuracy is tested on $S$ at level $\rho\delta_{t,p}$ with $K=1$.
The proposer receives only the promotion decision.
Even with this more conservative search structure the variant completely avoided false promotions across 30 seeds at both evaluation set sizes (\cref{tab:app-variant}).
It made 30 strictly positive promotions at $n=2{,}000$ and 100 at $n=10{,}000$.
These results confirm that the error control mechanism fundamentally prevents false promotions and maintains rigorous validity regardless of the specific search mechanics or allocation parameters used.

We further isolated the role of the exact test on $S$ by ablating it.
We ran \method{} at $n=10{,}000$ on 10 new seeds using a smaller development set of 500 examples and a lowered pre-screening threshold of $z_{\mathrm{dev}}>0$ to deliberately allow poor candidates to reach the evaluation set.
We compared this setup with a baseline that promotes every submitted candidate without testing it on $S$.
The proposer of the untested version is told that each submitted candidate was tested and accepted so the two versions differ only in the application of the test.
\Cref{fig:app-notest} illustrates every evaluated candidate under these conditions.
\method{} rejected all 9 candidates with population improvement at most zero and made no false promotion among its 47 accepted updates.
Without the test the process committed 6 false promotions across 5 of the 10 runs with population improvements dropping down to $-0.56$ pp.
The two tested variants achieved very similar final population improvements, with no statistically significant difference between them.
This confirms the test on $S$ is strictly necessary and effective at filtering out the false improvements that pass the initial development screen.
\\
\begin{table}[tp]
\centering
\caption{
Sensitivity of \method{} to its allocation \eqref{eq:alloc} at $n=2{,}000$ over 10 new seeds.
We change one of $\rho$ or $w_t$ or $v_p$ at a time from the default.
The variable $t$ indexes the rounds where $S$ is queried and the harmonic weight uses $c\approx0.590747$.
The columns for $S$ tests, Prom., Pop.\ Impr.\ and $C_T$ are means $\pm$ standard error over seeds.
The False / all column sums false and all promotions over the 10 runs.
Diff.\ is the seed-matched difference from the default run.
Changed counts the runs in which the default allocation would have made at least one querying or promotion decision differently.
$C_T$ re-prices the promotions of the same-seed default run at the certificate level $(1-\rho)\delta_{t,p}$.
All improvements are in percentage points.
}
\label{tab:app-abl}
\small
\setlength{\tabcolsep}{3.6pt}
\renewcommand{\arraystretch}{1.1}
\resizebox{\textwidth}{!}{%
\begin{tabular}{@{}l cc c cc c c@{}}
\toprule
 & $S$ tests & Prom. & False / & Pop.\ Impr. & Diff. & Changed & $C_T$ at default's \\
Setting & per run & per run & all & (pp) & (pp) & runs & promotions (pp) \\
\midrule
\rowcolor{oursbg}Default & \pmse{4.30}{0.37} & \pmse{2.90}{0.28} & \good{0}/29 & \pmse{6.38}{0.23} & N/A & N/A & \pmse{0.74}{0.11} \\
\addlinespace[2pt]
\multicolumn{8}{@{}l}{\emph{Split $\rho$}} \\
\quad $\rho=0.1$ & \pmse{4.00}{0.30} & \pmse{3.00}{0.15} & \good{0}/30 & \pmse{7.23}{0.36} & \pmse{$+0.85$}{0.40} & 9/10 & \pmse{0.89}{0.11} \\
\quad $\rho=0.25$ & \pmse{3.90}{0.18} & \pmse{2.60}{0.16} & \good{0}/26 & \pmse{6.52}{0.27} & \pmse{$+0.14$}{0.37} & 4/10 & \pmse{0.84}{0.11} \\
\quad $\rho=0.75$ & \pmse{4.80}{0.33} & \pmse{3.00}{0.26} & \good{0}/30 & \pmse{6.52}{0.35} & \pmse{$+0.14$}{0.29} & 1/10 & \pmse{0.59}{0.10} \\
\quad $\rho=0.9$ & \pmse{4.70}{0.33} & \pmse{3.00}{0.21} & \good{0}/30 & \pmse{6.52}{0.26} & \pmse{$+0.14$}{0.32} & 5/10 & \pmse{0.44}{0.09} \\
\addlinespace[2pt]
\multicolumn{8}{@{}l}{\emph{Submission weight $w_t$}} \\
\quad harmonic & \pmse{4.60}{0.34} & \pmse{3.10}{0.23} & \good{0}/31 & \pmse{6.94}{0.21} & \pmse{$+0.56$}{0.21} & 2/10 & \pmse{0.63}{0.10} \\
\quad flat & \pmse{3.00}{0.26} & \pmse{2.30}{0.15} & \good{0}/23 & \pmse{6.51}{0.33} & \pmse{$+0.12$}{0.43} & 10/10 & \pmse{0.31}{0.06} \\
\quad geometric & \pmse{4.30}{0.21} & \pmse{3.10}{0.23} & \good{0}/31 & \pmse{6.64}{0.21} & \pmse{$+0.26$}{0.35} & 1/10 & \pmse{0.80}{0.12} \\
\addlinespace[2pt]
\multicolumn{8}{@{}l}{\emph{Promotion weight $v_p$}} \\
\quad front-loaded & \pmse{4.90}{0.50} & \pmse{2.90}{0.18} & \good{0}/29 & \pmse{7.11}{0.37} & \pmse{$+0.73$}{0.47} & 8/10 & \pmse{0.65}{0.10} \\
\quad flat & \pmse{4.00}{0.26} & \pmse{2.80}{0.20} & \good{0}/28 & \pmse{6.85}{0.33} & \pmse{$+0.47$}{0.39} & 7/10 & \pmse{0.49}{0.09} \\
\quad geometric & \pmse{4.00}{0.30} & \pmse{3.00}{0.26} & \good{0}/30 & \pmse{6.83}{0.42} & \pmse{$+0.45$}{0.42} & 0/10 & \pmse{0.79}{0.11} \\
\bottomrule
\end{tabular}%
}
\end{table}

\begin{table}[tp]
\centering
\caption{
False promotions of the alternative \method{} variant and of the seven compared methods over the 30 seeds at each evaluation set size.
Prom.\ and False are identical to the metrics in \cref{tab:live-results}.
Runs is the number of runs out of 30 with at least one false promotion.
}
\label{tab:app-variant}
\small
\setlength{\tabcolsep}{4pt}
\renewcommand{\arraystretch}{1.12}
\begin{tabular}{@{}l ccc c ccc@{}}
\toprule
& \multicolumn{3}{c}{$n=2{,}000$} && \multicolumn{3}{c}{$n=10{,}000$} \\
\cmidrule(lr){2-4}\cmidrule(lr){6-8}
Method & Prom. & False $\downarrow$ & Runs $\downarrow$ && Prom. & False $\downarrow$ & Runs $\downarrow$ \\
\midrule
Empirical best-of-$K$ & 13.0 & 75 & 30/30 &  & 14.2 & 71 & 28/30 \\
SICA-style & 12.3 & 72 & 30/30 &  & 12.5 & 62 & 26/30 \\
DGM-style & 6.4 & 34 & 22/30 &  & 7.2 & 18 & 15/30 \\
Elite & 14.3 & 89 & 27/30 &  & 13.7 & 59 & 25/30 \\
Niche-elite & 12.3 & 71 & 28/30 &  & 12.4 & 44 & 24/30 \\
\addlinespace[3pt]
PACE-style & 6.4 & 4 & 4/30 &  & 8.1 & 3 & 3/30 \\
Bonferroni-style & 1.3 & 0 & 0/30 &  & 6.5 & 0 & 0/30 \\
\addlinespace[3pt]
\rowcolor{oursbg}Alternative variant & 1.0 & \good{0} & \good{0/30} &  & 3.3 & \good{0} & \good{0/30} \\
\bottomrule
\end{tabular}
\end{table}

\begin{figure}[tp]
\centering
\includegraphics[width=\textwidth]{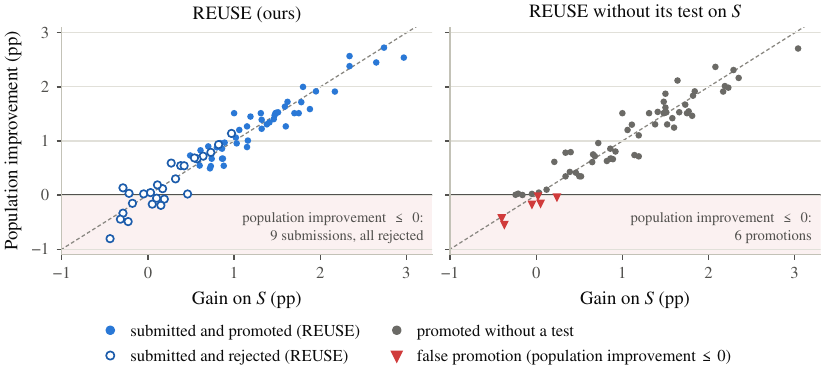}
\caption{
Performance of \method{} with and without its test on $S$ at $n=10{,}000$ over 10 seeds using a 500 example development set and pre-screening threshold $z_{\mathrm{dev}}>0$.
Each point is a submitted candidate placed at its gain on $S$ and its population improvement over the incumbent estimated on $H$.
The shaded band marks population improvement at most zero and the dashed line is equality.
The left panel shows \method{} whose test on $S$ promotes the filled points and rejects the open circles.
The right panel shows the same rule without the test which promotes every submitted candidate.
Red triangles are false promotions.
}
\label{fig:app-notest}
\end{figure}

\subsubsection{Per-seed results}
\label{app:experiment-perseed}

\Cref{tab:app-perseed-2000,tab:app-perseed-10000} list every run: for each method and seed, the final population
improvement and the numbers of false and of all promotions.

\begin{table}[!ht]
\centering
\caption{
Per-seed results at $n=2{,}000$ for the 30 seeds of \cref{tab:live-results}.
For each of the eight methods, a cell gives the final population improvement of the deployed system over $A_0$ in percentage points, estimated on the held-out set $H$, followed in parentheses by the run's false promotions and all promotions; nonzero false counts are in red.
Mean averages the final population improvement over the 30 seeds and Total sums the promotion counts.
Mean and the false counts in Total equal the corresponding columns of \cref{tab:live-results}, and the total promotions divided by 30 give its Prom.\ column.
}
\label{tab:app-perseed-2000}
\scriptsize
\setlength{\tabcolsep}{2.5pt}
\renewcommand{\arraystretch}{1.08}
\resizebox{\textwidth}{!}{%
\begin{tabular}{@{}rr@{\hspace{2pt}}lr@{\hspace{2pt}}lr@{\hspace{2pt}}lr@{\hspace{2pt}}lr@{\hspace{2pt}}lr@{\hspace{2pt}}lr@{\hspace{2pt}}l>{\columncolor{oursbg}[\tabcolsep][2pt]}r@{\hspace{2pt}}>{\columncolor{oursbg}[0pt][\tabcolsep]}l@{}}
\toprule
 & \multicolumn{2}{c}{Empirical} & \multicolumn{2}{c}{SICA-} & \multicolumn{2}{c}{DGM-} & \multicolumn{2}{c}{} & \multicolumn{2}{c}{Niche-} & \multicolumn{2}{c}{PACE-} & \multicolumn{2}{c}{Bonferroni-} & \multicolumn{2}{c}{\method{}} \\
Seed & \multicolumn{2}{c}{best-of-$K$} & \multicolumn{2}{c}{style} & \multicolumn{2}{c}{style} & \multicolumn{2}{c}{Elite} & \multicolumn{2}{c}{elite} & \multicolumn{2}{c}{style} & \multicolumn{2}{c}{style} & \multicolumn{2}{c}{(ours)} \\
\midrule
3 & 7.14 & (\textcolor{badred}{1}/9) & 5.76 & (\textcolor{badred}{3}/11) & 4.65 & (\textcolor{badred}{1}/8) & 7.14 & (\textcolor{badred}{3}/13) & 7.17 & (\textcolor{badred}{3}/20) & 7.22 & (0/9) & 0.00 & (0/0) & 7.24 & (0/3) \\
4 & 7.25 & (\textcolor{badred}{2}/12) & 7.24 & (\textcolor{badred}{3}/14) & 5.26 & (0/5) & 7.25 & (0/9) & 7.24 & (\textcolor{badred}{2}/9) & 5.75 & (0/5) & 3.21 & (0/2) & 5.73 & (0/3) \\
5 & 7.06 & (\textcolor{badred}{3}/14) & 7.41 & (\textcolor{badred}{3}/16) & 2.77 & (\textcolor{badred}{2}/7) & 4.65 & (\textcolor{badred}{4}/15) & 6.81 & (\textcolor{badred}{1}/14) & 3.79 & (0/5) & 0.00 & (0/0) & 7.26 & (0/4) \\
6 & 6.70 & (\textcolor{badred}{3}/16) & 5.55 & (\textcolor{badred}{2}/11) & 2.33 & (\textcolor{badred}{2}/4) & 6.70 & (\textcolor{badred}{5}/19) & 6.70 & (\textcolor{badred}{6}/19) & 6.72 & (\textcolor{badred}{1}/8) & 4.11 & (0/2) & 6.70 & (0/4) \\
7 & 7.14 & (\textcolor{badred}{3}/15) & 5.92 & (\textcolor{badred}{2}/14) & 3.21 & (\textcolor{badred}{1}/9) & 5.92 & (\textcolor{badred}{5}/18) & 6.50 & (\textcolor{badred}{3}/13) & 4.65 & (0/4) & 0.86 & (0/1) & 10.00 & (0/5) \\
8 & 7.24 & (\textcolor{badred}{4}/14) & 7.24 & (\textcolor{badred}{3}/13) & 2.82 & (\textcolor{badred}{1}/5) & 7.24 & (\textcolor{badred}{8}/20) & 7.24 & (\textcolor{badred}{2}/10) & 7.24 & (0/6) & 3.93 & (0/2) & 7.24 & (0/3) \\
9 & 7.02 & (\textcolor{badred}{2}/11) & 7.24 & (\textcolor{badred}{1}/9) & 4.03 & (\textcolor{badred}{2}/10) & 7.02 & (\textcolor{badred}{2}/15) & 4.65 & (\textcolor{badred}{2}/11) & 7.24 & (0/9) & 0.00 & (0/0) & 7.24 & (0/4) \\
10 & 7.10 & (\textcolor{badred}{6}/19) & 5.40 & (\textcolor{badred}{9}/19) & 4.43 & (0/6) & 5.75 & (\textcolor{badred}{4}/13) & 7.27 & (\textcolor{badred}{4}/14) & 7.17 & (0/8) & 5.76 & (0/3) & 5.06 & (0/2) \\
11 & 5.45 & (\textcolor{badred}{2}/13) & 5.75 & (\textcolor{badred}{2}/10) & 1.86 & (0/4) & 7.37 & (\textcolor{badred}{3}/12) & 7.34 & (0/10) & 6.52 & (0/6) & 1.51 & (0/1) & 5.76 & (0/3) \\
12 & 7.37 & (\textcolor{badred}{3}/16) & 6.64 & (\textcolor{badred}{2}/14) & 4.30 & (\textcolor{badred}{2}/6) & 7.37 & (\textcolor{badred}{3}/15) & 5.24 & (\textcolor{badred}{5}/14) & 7.37 & (0/7) & 0.00 & (0/0) & 10.80 & (0/3) \\
13 & 7.37 & (\textcolor{badred}{1}/9) & 5.32 & (\textcolor{badred}{3}/10) & 4.65 & (0/6) & 6.84 & (\textcolor{badred}{1}/11) & 6.15 & (0/10) & 7.24 & (0/7) & 4.65 & (0/3) & 5.92 & (0/3) \\
14 & 7.34 & (\textcolor{badred}{1}/8) & 7.34 & (\textcolor{badred}{1}/13) & 4.92 & (0/4) & 6.64 & (\textcolor{badred}{2}/14) & 5.77 & (\textcolor{badred}{2}/11) & 7.34 & (0/8) & 4.11 & (0/3) & 7.41 & (0/4) \\
15 & 7.22 & (\textcolor{badred}{3}/16) & 6.18 & (\textcolor{badred}{3}/11) & 3.89 & (\textcolor{badred}{1}/10) & 6.70 & (\textcolor{badred}{1}/10) & 5.67 & (\textcolor{badred}{2}/8) & 6.55 & (\textcolor{badred}{1}/7) & 4.65 & (0/2) & 4.80 & (0/2) \\
16 & 7.34 & (\textcolor{badred}{2}/14) & 7.34 & (\textcolor{badred}{2}/12) & 2.73 & (\textcolor{badred}{2}/8) & 7.02 & (\textcolor{badred}{2}/16) & 7.34 & (\textcolor{badred}{1}/14) & 7.14 & (0/5) & 2.73 & (0/2) & 6.40 & (0/3) \\
17 & 6.80 & (\textcolor{badred}{2}/10) & 7.37 & (\textcolor{badred}{1}/14) & 2.84 & (\textcolor{badred}{1}/7) & 7.37 & (\textcolor{badred}{1}/13) & 7.37 & (\textcolor{badred}{1}/11) & 7.37 & (0/7) & 0.00 & (0/0) & 6.31 & (0/2) \\
18 & 7.14 & (\textcolor{badred}{2}/14) & 4.65 & (\textcolor{badred}{3}/13) & 4.11 & (\textcolor{badred}{2}/6) & 5.52 & (\textcolor{badred}{4}/15) & 5.71 & (\textcolor{badred}{2}/12) & 7.24 & (0/8) & 1.20 & (0/1) & 6.84 & (0/4) \\
19 & 6.77 & (\textcolor{badred}{7}/16) & 6.84 & (\textcolor{badred}{2}/13) & 4.06 & (\textcolor{badred}{1}/6) & 5.29 & (\textcolor{badred}{6}/16) & 7.24 & (\textcolor{badred}{2}/12) & 5.14 & (\textcolor{badred}{1}/6) & 0.00 & (0/0) & 5.72 & (0/2) \\
20 & 5.87 & (\textcolor{badred}{2}/14) & 6.84 & (\textcolor{badred}{1}/10) & 4.23 & (0/4) & 5.29 & (\textcolor{badred}{2}/10) & 7.25 & (\textcolor{badred}{2}/15) & 7.14 & (0/7) & 2.19 & (0/1) & 6.45 & (0/3) \\
21 & 7.37 & (\textcolor{badred}{2}/12) & 5.75 & (\textcolor{badred}{3}/13) & 5.23 & (\textcolor{badred}{1}/8) & 7.27 & (0/12) & 5.80 & (\textcolor{badred}{5}/15) & 7.14 & (0/7) & 4.11 & (0/2) & 5.80 & (0/3) \\
22 & 7.24 & (\textcolor{badred}{1}/10) & 7.27 & (\textcolor{badred}{1}/12) & 2.77 & (0/3) & 6.18 & (\textcolor{badred}{3}/14) & 5.26 & (\textcolor{badred}{1}/9) & 6.49 & (\textcolor{badred}{1}/7) & 4.11 & (0/2) & 5.76 & (0/3) \\
23 & 7.37 & (\textcolor{badred}{1}/9) & 7.37 & (\textcolor{badred}{1}/11) & 4.56 & (\textcolor{badred}{1}/5) & 7.37 & (0/12) & 5.68 & (\textcolor{badred}{3}/11) & 5.72 & (0/5) & 4.65 & (0/2) & 9.67 & (0/4) \\
24 & 7.24 & (\textcolor{badred}{5}/19) & 5.82 & (\textcolor{badred}{2}/10) & 4.43 & (0/6) & 7.10 & (\textcolor{badred}{3}/16) & 5.76 & (\textcolor{badred}{2}/8) & 4.65 & (0/3) & 5.65 & (0/3) & 7.10 & (0/3) \\
25 & 7.34 & (\textcolor{badred}{1}/15) & 5.65 & (\textcolor{badred}{1}/11) & 3.35 & (\textcolor{badred}{2}/5) & 7.27 & (\textcolor{badred}{3}/16) & 7.34 & (\textcolor{badred}{4}/14) & 6.64 & (0/6) & 2.73 & (0/1) & 6.84 & (0/3) \\
26 & 6.80 & (\textcolor{badred}{5}/15) & 5.52 & (\textcolor{badred}{3}/9) & 4.65 & (\textcolor{badred}{2}/9) & 7.12 & (\textcolor{badred}{2}/12) & 7.05 & (\textcolor{badred}{4}/16) & 6.08 & (0/5) & 3.21 & (0/2) & 6.45 & (0/3) \\
27 & 6.70 & (\textcolor{badred}{3}/12) & 7.14 & (\textcolor{badred}{2}/12) & 4.30 & (\textcolor{badred}{1}/6) & 6.70 & (\textcolor{badred}{5}/17) & 7.14 & (\textcolor{badred}{3}/14) & 7.24 & (0/6) & 2.95 & (0/1) & 7.02 & (0/3) \\
28 & 6.95 & (\textcolor{badred}{1}/11) & 3.49 & (\textcolor{badred}{4}/11) & 3.51 & (\textcolor{badred}{1}/8) & 7.14 & (\textcolor{badred}{2}/11) & 6.95 & (\textcolor{badred}{2}/14) & 7.12 & (0/8) & 0.00 & (0/0) & 7.18 & (0/4) \\
29 & 7.14 & (\textcolor{badred}{4}/12) & 6.70 & (\textcolor{badred}{3}/15) & 2.92 & (\textcolor{badred}{1}/4) & 6.81 & (\textcolor{badred}{3}/13) & 6.45 & (\textcolor{badred}{2}/8) & 7.24 & (0/6) & 4.07 & (0/2) & 10.00 & (0/5) \\
30 & 7.34 & (\textcolor{badred}{1}/10) & 7.05 & (\textcolor{badred}{1}/14) & 4.65 & (\textcolor{badred}{4}/8) & 7.06 & (\textcolor{badred}{5}/21) & 5.00 & (\textcolor{badred}{1}/10) & 6.50 & (0/7) & 0.00 & (0/0) & 7.34 & (0/3) \\
31 & 7.24 & (\textcolor{badred}{1}/13) & 7.05 & (\textcolor{badred}{3}/12) & 4.11 & (\textcolor{badred}{2}/8) & 7.24 & (\textcolor{badred}{3}/15) & 7.05 & (\textcolor{badred}{3}/16) & 5.68 & (0/7) & 0.00 & (0/0) & 7.10 & (0/3) \\
32 & 7.25 & (\textcolor{badred}{1}/12) & 5.94 & (\textcolor{badred}{2}/13) & 5.14 & (\textcolor{badred}{1}/8) & 7.14 & (\textcolor{badred}{4}/16) & 5.71 & (\textcolor{badred}{1}/8) & 4.11 & (0/4) & 0.00 & (0/0) & 7.41 & (0/3) \\
\midrule
\multicolumn{1}{@{}l}{Mean}
& \multicolumn{2}{c}{7.04}
& \multicolumn{2}{c}{6.36}
& \multicolumn{2}{c}{3.89}
& \multicolumn{2}{c}{6.72}
& \multicolumn{2}{c}{6.46}
& \multicolumn{2}{c}{6.45}
& \multicolumn{2}{c}{2.35}
& \multicolumn{2}{c}{7.02} \\

\multicolumn{1}{@{}l}{Total}
& \multicolumn{2}{c}{(\textcolor{badred}{75}/390)}
& \multicolumn{2}{c}{(\textcolor{badred}{72}/370)}
& \multicolumn{2}{c}{(\textcolor{badred}{34}/193)}
& \multicolumn{2}{c}{(\textcolor{badred}{89}/429)}
& \multicolumn{2}{c}{(\textcolor{badred}{71}/370)}
& \multicolumn{2}{c}{(\textcolor{badred}{4}/193)}
& \multicolumn{2}{c}{(0/38)}
& \multicolumn{2}{c}{(0/97)} \\
\bottomrule
\end{tabular}%
}
\end{table}

\begin{table}[!ht]
\centering
\caption{
Per-seed results at $n=10{,}000$ for the 30 seeds of \cref{tab:live-results}.
For each of the eight methods, a cell gives the final population improvement of the deployed system over $A_0$ in percentage points, estimated on the held-out set $H$, followed in parentheses by the run's false promotions and all promotions; nonzero false counts are in red.
Mean averages the final population improvement over the 30 seeds and Total sums the promotion counts.
Mean and the false counts in Total equal the corresponding columns of \cref{tab:live-results}, and the total promotions divided by 30 give its Prom.\ column.
}
\label{tab:app-perseed-10000}
\scriptsize
\setlength{\tabcolsep}{2.5pt}
\renewcommand{\arraystretch}{1.08}
\resizebox{\textwidth}{!}{%
\begin{tabular}{@{}rr@{\hspace{2pt}}lr@{\hspace{2pt}}lr@{\hspace{2pt}}lr@{\hspace{2pt}}lr@{\hspace{2pt}}lr@{\hspace{2pt}}lr@{\hspace{2pt}}l>{\columncolor{oursbg}[\tabcolsep][2pt]}r@{\hspace{2pt}}>{\columncolor{oursbg}[0pt][\tabcolsep]}l@{}}
\toprule
 & \multicolumn{2}{c}{Empirical} & \multicolumn{2}{c}{SICA-} & \multicolumn{2}{c}{DGM-} & \multicolumn{2}{c}{} & \multicolumn{2}{c}{Niche-} & \multicolumn{2}{c}{PACE-} & \multicolumn{2}{c}{Bonferroni-} & \multicolumn{2}{c}{\method{}} \\
Seed & \multicolumn{2}{c}{best-of-$K$} & \multicolumn{2}{c}{style} & \multicolumn{2}{c}{style} & \multicolumn{2}{c}{Elite} & \multicolumn{2}{c}{elite} & \multicolumn{2}{c}{style} & \multicolumn{2}{c}{style} & \multicolumn{2}{c}{(ours)} \\
\midrule
3 & 7.17 & (\textcolor{badred}{1}/17) & 4.65 & (0/5) & 2.92 & (\textcolor{badred}{1}/7) & 7.41 & (\textcolor{badred}{4}/20) & 7.24 & (\textcolor{badred}{2}/10) & 7.24 & (0/9) & 7.10 & (0/7) & 6.70 & (0/5) \\
4 & 7.37 & (\textcolor{badred}{1}/13) & 7.44 & (\textcolor{badred}{2}/14) & 5.27 & (\textcolor{badred}{1}/8) & 7.27 & (\textcolor{badred}{3}/16) & 5.61 & (0/7) & 7.37 & (0/7) & 7.24 & (0/7) & 7.25 & (0/8) \\
5 & 6.64 & (\textcolor{badred}{2}/14) & 7.37 & (\textcolor{badred}{2}/13) & 2.19 & (0/5) & 6.45 & (\textcolor{badred}{2}/8) & 5.80 & (0/9) & 7.24 & (0/9) & 7.10 & (0/7) & 7.37 & (0/6) \\
6 & 7.41 & (\textcolor{badred}{2}/15) & 7.34 & (\textcolor{badred}{2}/13) & 5.46 & (0/7) & 7.24 & (\textcolor{badred}{1}/9) & 6.57 & (\textcolor{badred}{2}/20) & 7.27 & (0/10) & 4.65 & (0/4) & 7.34 & (0/7) \\
7 & 7.41 & (\textcolor{badred}{3}/17) & 5.92 & (\textcolor{badred}{1}/13) & 4.65 & (\textcolor{badred}{1}/8) & 6.62 & (\textcolor{badred}{4}/21) & 5.18 & (0/13) & 7.24 & (0/9) & 7.05 & (0/7) & 7.24 & (0/6) \\
8 & 7.34 & (\textcolor{badred}{3}/13) & 7.34 & (\textcolor{badred}{2}/13) & 2.55 & (0/4) & 7.34 & (\textcolor{badred}{1}/15) & 5.46 & (\textcolor{badred}{1}/12) & 7.22 & (0/8) & 6.45 & (0/6) & 6.62 & (0/6) \\
9 & 5.75 & (\textcolor{badred}{2}/11) & 7.24 & (\textcolor{badred}{5}/15) & 4.65 & (\textcolor{badred}{1}/8) & 7.27 & (\textcolor{badred}{1}/13) & 7.37 & (\textcolor{badred}{2}/15) & 7.17 & (0/9) & 7.24 & (0/7) & 7.24 & (0/6) \\
10 & 7.26 & (0/10) & 7.14 & (\textcolor{badred}{3}/15) & 4.71 & (0/11) & 7.24 & (\textcolor{badred}{2}/21) & 6.02 & (0/12) & 7.24 & (0/8) & 6.45 & (0/7) & 7.34 & (0/6) \\
11 & 7.41 & (\textcolor{badred}{2}/16) & 4.46 & (\textcolor{badred}{3}/9) & 2.73 & (\textcolor{badred}{1}/9) & 5.76 & (0/9) & 7.34 & (\textcolor{badred}{3}/18) & 7.05 & (\textcolor{badred}{1}/9) & 6.52 & (0/6) & 6.45 & (0/6) \\
12 & 7.34 & (\textcolor{badred}{3}/14) & 7.37 & (\textcolor{badred}{1}/17) & 3.27 & (0/5) & 6.52 & (\textcolor{badred}{2}/12) & 6.24 & (\textcolor{badred}{2}/11) & 7.10 & (0/8) & 7.24 & (0/7) & 7.10 & (0/5) \\
13 & 7.34 & (\textcolor{badred}{2}/13) & 6.41 & (\textcolor{badred}{1}/10) & 4.65 & (0/6) & 7.34 & (\textcolor{badred}{2}/15) & 4.60 & (\textcolor{badred}{1}/8) & 7.34 & (0/8) & 7.24 & (0/7) & 7.34 & (0/6) \\
14 & 7.12 & (\textcolor{badred}{2}/12) & 7.14 & (\textcolor{badred}{4}/15) & 4.19 & (\textcolor{badred}{2}/9) & 5.52 & (\textcolor{badred}{3}/11) & 7.37 & (\textcolor{badred}{4}/14) & 7.24 & (\textcolor{badred}{1}/8) & 7.24 & (0/7) & 7.24 & (0/5) \\
15 & 7.37 & (\textcolor{badred}{1}/13) & 6.60 & (\textcolor{badred}{3}/15) & 4.52 & (\textcolor{badred}{1}/9) & 7.37 & (\textcolor{badred}{1}/14) & 5.65 & (\textcolor{badred}{2}/11) & 7.41 & (0/11) & 5.65 & (0/6) & 6.70 & (0/5) \\
16 & 7.24 & (\textcolor{badred}{1}/8) & 7.34 & (\textcolor{badred}{1}/16) & 5.24 & (0/7) & 7.10 & (\textcolor{badred}{3}/16) & 5.72 & (0/7) & 7.24 & (0/6) & 7.24 & (0/5) & 7.05 & (0/6) \\
17 & 7.12 & (\textcolor{badred}{2}/11) & 6.28 & (0/10) & 2.54 & (\textcolor{badred}{1}/6) & 7.41 & (\textcolor{badred}{1}/17) & 7.37 & (\textcolor{badred}{1}/12) & 7.05 & (0/7) & 5.65 & (0/5) & 7.26 & (0/7) \\
18 & 7.24 & (\textcolor{badred}{1}/10) & 5.00 & (\textcolor{badred}{3}/11) & 4.79 & (0/6) & 7.24 & (0/14) & 5.94 & (\textcolor{badred}{1}/9) & 7.10 & (0/7) & 7.37 & (0/7) & 7.10 & (0/6) \\
19 & 7.24 & (\textcolor{badred}{4}/18) & 7.24 & (\textcolor{badred}{4}/18) & 3.25 & (\textcolor{badred}{1}/8) & 7.27 & (0/9) & 7.24 & (\textcolor{badred}{2}/16) & 7.24 & (0/10) & 7.37 & (0/7) & 7.05 & (0/6) \\
20 & 7.37 & (\textcolor{badred}{4}/18) & 7.34 & (\textcolor{badred}{2}/16) & 5.27 & (\textcolor{badred}{1}/10) & 7.37 & (\textcolor{badred}{2}/16) & 5.92 & (\textcolor{badred}{2}/13) & 7.24 & (0/7) & 7.24 & (0/7) & 6.48 & (0/6) \\
21 & 7.24 & (\textcolor{badred}{3}/16) & 7.24 & (\textcolor{badred}{2}/15) & 2.92 & (0/5) & 6.01 & (\textcolor{badred}{3}/13) & 7.24 & (\textcolor{badred}{1}/15) & 7.14 & (0/8) & 7.05 & (0/7) & 10.00 & (0/6) \\
22 & 7.24 & (\textcolor{badred}{3}/13) & 5.92 & (\textcolor{badred}{2}/12) & 4.65 & (0/7) & 5.72 & (\textcolor{badred}{1}/12) & 5.76 & (\textcolor{badred}{1}/10) & 7.34 & (0/8) & 7.34 & (0/7) & 6.40 & (0/6) \\
23 & 7.12 & (\textcolor{badred}{6}/19) & 6.02 & (0/6) & 4.24 & (\textcolor{badred}{3}/10) & 5.87 & (\textcolor{badred}{2}/13) & 6.84 & (\textcolor{badred}{3}/18) & 7.34 & (0/8) & 7.24 & (0/6) & 6.84 & (0/6) \\
24 & 7.41 & (\textcolor{badred}{5}/18) & 5.75 & (\textcolor{badred}{3}/13) & 2.73 & (0/5) & 7.24 & (\textcolor{badred}{3}/14) & 5.92 & (\textcolor{badred}{1}/14) & 7.24 & (0/7) & 7.02 & (0/7) & 7.10 & (0/6) \\
25 & 7.24 & (\textcolor{badred}{1}/17) & 5.72 & (\textcolor{badred}{1}/10) & 4.52 & (0/10) & 7.34 & (0/8) & 5.94 & (\textcolor{badred}{1}/13) & 7.02 & (0/8) & 7.24 & (0/7) & 9.08 & (0/6) \\
26 & 7.34 & (\textcolor{badred}{3}/16) & 7.24 & (\textcolor{badred}{4}/16) & 4.70 & (\textcolor{badred}{1}/8) & 7.24 & (\textcolor{badred}{3}/17) & 7.24 & (\textcolor{badred}{3}/13) & 7.41 & (\textcolor{badred}{1}/9) & 6.60 & (0/6) & 6.84 & (0/4) \\
27 & 7.10 & (\textcolor{badred}{2}/10) & 6.40 & (\textcolor{badred}{1}/7) & 4.43 & (0/7) & 7.14 & (\textcolor{badred}{3}/12) & 7.24 & (\textcolor{badred}{2}/10) & 7.24 & (0/6) & 7.37 & (0/6) & 7.26 & (0/6) \\
28 & 7.25 & (\textcolor{badred}{2}/12) & 6.70 & (\textcolor{badred}{2}/12) & 4.65 & (\textcolor{badred}{1}/9) & 5.58 & (\textcolor{badred}{2}/11) & 7.25 & (0/10) & 7.24 & (0/7) & 7.24 & (0/7) & 6.45 & (0/6) \\
29 & 7.14 & (\textcolor{badred}{6}/21) & 7.24 & (\textcolor{badred}{2}/9) & 3.21 & (0/3) & 7.24 & (\textcolor{badred}{2}/11) & 6.24 & (\textcolor{badred}{2}/14) & 7.34 & (0/8) & 7.37 & (0/6) & 7.37 & (0/6) \\
30 & 7.28 & (\textcolor{badred}{3}/15) & 5.76 & (\textcolor{badred}{3}/11) & 4.52 & (0/5) & 7.37 & (0/13) & 7.28 & (\textcolor{badred}{1}/12) & 7.10 & (0/8) & 5.82 & (0/5) & 7.22 & (0/5) \\
31 & 7.12 & (\textcolor{badred}{1}/15) & 5.94 & (0/11) & 2.77 & (\textcolor{badred}{1}/5) & 7.37 & (\textcolor{badred}{5}/19) & 5.18 & (\textcolor{badred}{3}/13) & 7.26 & (0/8) & 7.37 & (0/7) & 7.17 & (0/7) \\
32 & 7.25 & (0/11) & 7.37 & (\textcolor{badred}{3}/16) & 4.65 & (\textcolor{badred}{1}/10) & 5.71 & (\textcolor{badred}{3}/11) & 6.40 & (\textcolor{badred}{1}/14) & 7.14 & (0/8) & 7.24 & (0/7) & 7.14 & (0/5) \\
\midrule
\multicolumn{1}{@{}l}{Mean}
& \multicolumn{2}{c}{7.19}
& \multicolumn{2}{c}{6.56}
& \multicolumn{2}{c}{4.03}
& \multicolumn{2}{c}{6.85}
& \multicolumn{2}{c}{6.37}
& \multicolumn{2}{c}{7.23}
& \multicolumn{2}{c}{6.90}
& \multicolumn{2}{c}{7.19} \\

\multicolumn{1}{@{}l}{Total}
& \multicolumn{2}{c}{(\textcolor{badred}{71}/426)}
& \multicolumn{2}{c}{(\textcolor{badred}{62}/376)}
& \multicolumn{2}{c}{(\textcolor{badred}{18}/217)}
& \multicolumn{2}{c}{(\textcolor{badred}{59}/410)}
& \multicolumn{2}{c}{(\textcolor{badred}{44}/373)}
& \multicolumn{2}{c}{(\textcolor{badred}{3}/243)}
& \multicolumn{2}{c}{(0/194)}
& \multicolumn{2}{c}{(0/177)} \\
\bottomrule
\end{tabular}%
}
\end{table}

\clearpage
\subsubsection{Trajectories}
\label{app:experiment-trajectories}

\Cref{fig:app-trajectory-se} separates the trajectories of \cref{fig:trajectory} by method and adds standard
errors across seeds.

\begin{figure}[!htbp]
\centering
\includegraphics[width=\textwidth]{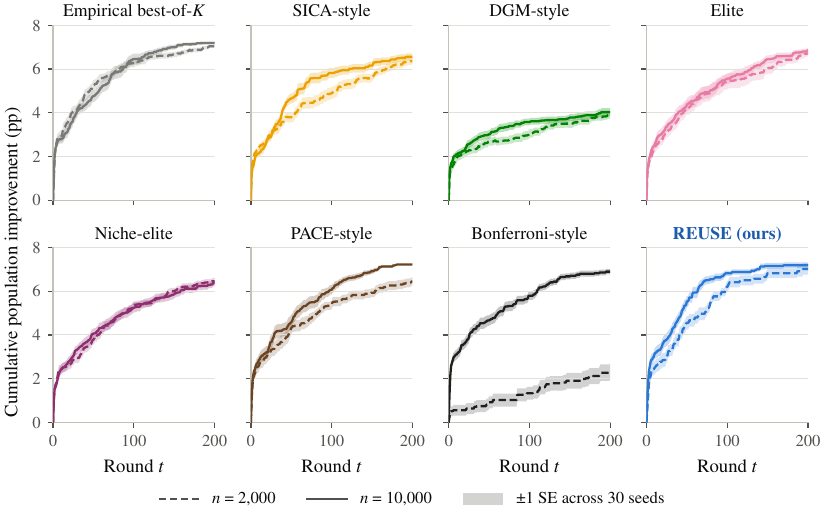}
\caption{
Mean population improvement of the deployed system over $A_0$ by round, estimated on $H$ and averaged over 30 seeds, with one panel per method; dashed lines use $n=2{,}000$ and solid lines use $n=10{,}000$.
Shaded bands show $\pm 1$ standard error across the 30 seeds.
}
\label{fig:app-trajectory-se}
\end{figure}

\subsubsection{Promotions and final performance}
\label{app:experiment-promotions}

\Cref{fig:app-gain-vs-promotions} plots the final population improvement of every run against its number of promotions.
\method{} reaches a final population improvement comparable to that of Empirical best-of-$K$ (\cref{tab:live-full})
with far fewer promotions, a median of 3 against 13 per run at $n=2{,}000$ and 6 against 14 at $n=10{,}000$.
No run of the seven compared methods improves on $A_0$ by more than 7.44~pp, whereas four runs of \method{} at
$n=2{,}000$ and two at $n=10{,}000$ improve on it by 9.08 to 10.80~pp.

\begin{figure}[!htbp]
\centering
\includegraphics[width=\textwidth]{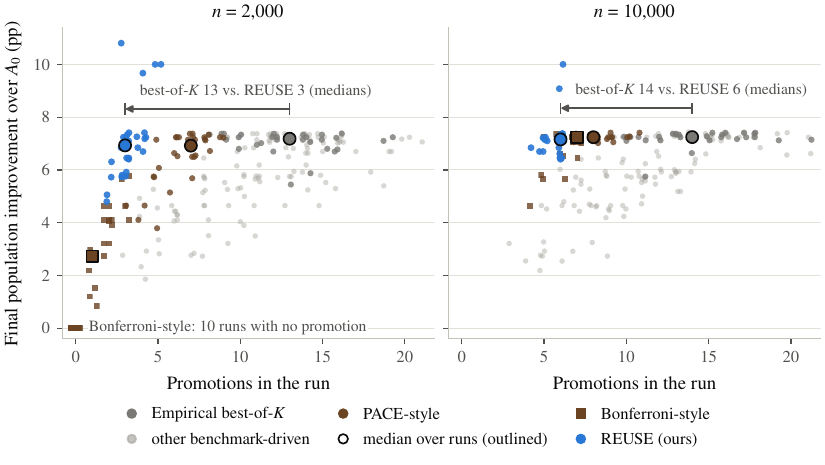}
\caption{
Final population improvement over $A_0$ against the number of promotions in the run, estimated on $H$, one point per
run with small horizontal jitter, for the 30 runs of each method at each evaluation-set size.
Light gray points are the other benchmark-driven frameworks (SICA-style, DGM-style, Elite and Niche-elite), and
outlined markers place Empirical best-of-$K$, PACE-style, Bonferroni-style and \method{} at their median number of
promotions and median final population improvement.
The arrow joins the medians of Empirical best-of-$K$ and \method{}.
}
\label{fig:app-gain-vs-promotions}
\end{figure}

\subsubsection{Certificates}
\label{app:experiment-certificates}

\method{} records the running certificate $C_T$ of \cref{sec:reuse} and the direct certificate $D_T$ of \cref{app:direct}.
Both are lower confidence bounds computed from the reused evaluation set alone, so they are reported separately
from the population-gain estimates on $H$.
\Cref{tab:reuse-certificates} compares them with the realized population improvement, and none of the 274
per-promotion lower bounds $\ell$ that enter $C_T$ exceeds the population improvement of its promotion.
The direct certificate is larger than the running certificate in all 60 runs, because it pays one confidence
margin for the whole trajectory rather than one per promotion.

\begin{table}[tp]
\centering
\caption{
Certificates of cumulative improvement for \method{}, over the 30 runs at each evaluation-set size.
The final population improvement over $A_0$ is the mean over runs, estimated on $H$ as in \cref{tab:live-results}.
$C_T$ and $D_T$ are reported as the mean over runs with the range across runs in brackets.
Valid gives, for $C_T$ and for $D_T$, the number of runs (of 30) in which the certificate is at most the
realized final population improvement.
The final population improvement, $C_T$ and $D_T$ are in percentage points.
}
\label{tab:reuse-certificates}
\small
\setlength{\tabcolsep}{5pt}
\renewcommand{\arraystretch}{1.12}
\begin{tabular}{@{}cccccc@{}}
\toprule
$n$
& \shortstack{Final population\\improvement}
& \shortstack{Promotions\\per run}
& \shortstack{Running\\certificate $C_T$}
& \shortstack{Direct\\certificate $D_T$}
& \shortstack{Valid\\$C_T$ / $D_T$} \\
\midrule
$2{,}000$ & 7.02 & 3.2 & 0.92 {\scriptsize[0.13, 1.76]} & 3.10 {\scriptsize[1.16, 5.49]} & 30 / 30 \\
$10{,}000$ & 7.19 & 5.9 & 2.67 {\scriptsize[2.06, 3.88]} & 5.33 {\scriptsize[4.34, 6.92]} & 30 / 30 \\
\bottomrule
\end{tabular}
\end{table}

\end{document}